\ifdefined\COMBINEDSUPPLEMENT\else\def\COMBINEDSUPPLEMENT{1}\fi
\ifdefined\ARXIVVERSION\else\def\ARXIVVERSION{1}\fi

\documentclass[letterpaper,journal]{IEEEtran}

\usepackage{amsmath,amsfonts,amssymb,mathtools,amsthm}
\usepackage{graphicx}
\usepackage[caption=false,font=normalsize,labelfont=sf,textfont=sf]{subfig}
\usepackage{textcomp}
\usepackage{stfloats}
\usepackage{placeins}
\usepackage{url}
\usepackage{hyperref}
\hypersetup{hypertexnames=false}
\usepackage{booktabs}
\usepackage{multirow}
\usepackage{xcolor}
\usepackage{enumitem}
\usepackage{dsfont}
\usepackage{pifont}
\usepackage{algorithm}
\usepackage{algorithmic}
\usepackage{cite}

\definecolor{blue}{HTML}{006EBB}
\definecolor{red}{HTML}{C1272D}
\newcommand{\suppRef}[2]{\ifdefined\COMBINEDSUPPLEMENT the supplementary material, #1~\ref{#2}\else the supplementary material\fi}
\newcommand{\suppTableRef}[1]{\suppRef{Table}{#1}}

\newcommand{\suppSectionRef}[1]{\suppRef{Section}{#1}}

\newcommand{\suppShowcaseRefs}{\ifdefined\COMBINEDSUPPLEMENT the supplementary material, Figures~\ref{fig:showcase_volatile} and~\ref{fig:showcase_stable}\else the supplementary material\fi}

\theoremstyle{plain}
\newtheorem{theorem}{Theorem}[section]

\theoremstyle{definition}

\theoremstyle{remark}

\begin{document}

\title{Beyond Static Uncertainty: Modeling Temporal Uncertainty Dynamics for Probabilistic Time Series Forecasting}

\ifdefined\ARXIVVERSION
\author{Yijun Wang, Qiyuan Zhuang, Larysa Marchanka, and Xiu-Shen Wei
\thanks{Y. Wang, Q. Zhuang, and X.-S. Wei are with the Department of Computer Science, Southeast University, Nanjing, China. Corresponding author: Xiu-Shen Wei (e-mail: weixs@seu.edu.cn).}
\thanks{L. Marchanka is with Francisk Skorina Gomel State University, Belarus.}}
\markboth{arXiv preprint}%
{Wang \MakeLowercase{\textit{et al.}}: Beyond Static Uncertainty: Modeling Temporal Uncertainty Dynamics for Probabilistic Time Series Forecasting}
\else
\author{Yijun Wang, Qiyuan Zhuang, Larysa Marchanka, and Xiu-Shen Wei,~\IEEEmembership{Senior Member,~IEEE}
\thanks{Y. Wang, Q. Zhuang, and X.-S. Wei are with the Department of Computer Science, Southeast University, Nanjing, China. Corresponding author: Xiu-Shen Wei (e-mail: weixs@seu.edu.cn).}
\thanks{L. Marchanka is with Francisk Skorina Gomel State University, Belarus.}}

\markboth{SUBMITTED TO IEEE TKDE}%
{Wang \MakeLowercase{\textit{et al.}}: Beyond Static Uncertainty: Modeling Temporal Uncertainty Dynamics for Probabilistic Time Series Forecasting}
\fi

\maketitle

\begin{abstract}
Real-world time series exhibit temporally structured uncertainty: volatility clusters in turbulent regimes, dissipates in stable periods, and shifts abruptly around structural breaks.
Yet many probabilistic forecasting methods estimate predictive uncertainty as an independent per-step quantity, leaving the evolution and persistence of volatility regimes under-modeled.
We formalize this missing dimension as temporal uncertainty dynamics and instantiate it in the Volatility Dynamics Variational Autoencoder (VolDy-VAE), a non-autoregressive generative forecaster with a location-scale decoder.
VolDy-VAE combines a location path for mean prediction with a recurrent scale path that transfers and evolves a volatility hidden state from the look-back window to the forecasting horizon, enabling temporally coherent predictive variances.
This design yields an adaptive attenuation mechanism: high-variance observations receive lower influence on the location estimate while their uncertainty is preserved through explicit scale predictions.
We further provide a simplified regime-switching analysis showing that, when variances are known or consistently estimated, the volatility-aware objective reduces to inverse-variance weighting, whereas MSE-based estimators remain unbiased but statistically inefficient.
Experiments on nine benchmarks show that VolDy-VAE improves forecasting accuracy and uncertainty calibration over competitive probabilistic and point-forecasting baselines while maintaining low inference latency; plug-in studies further indicate that the VolDy principle can benefit GAN, Koopman VAE, and Transformer backbones.
The source code is publicly available at \url{https://github.com/wangyijunlyy/VolDy-VAE}.
\end{abstract}

\begin{IEEEkeywords}
Probabilistic time series forecasting, temporal uncertainty dynamics, heteroscedasticity, variational autoencoder, volatility modeling.
\end{IEEEkeywords}

\section{Introduction}
\label{sec:intro}
\IEEEPARstart{I}{n} recent years, deep learning has substantially advanced time series analysis~\cite{lim2021time,kong2025deep}, including tasks such as imputation~\cite{Cao2018BRITS,Tashiro2021CSDI,Gao_2025,ICLR2025_033394ac,wang2024deep} and anomaly detection~\cite{Xu2022AnomalyTransformer,wang2023drift,liu2024elephant, miao2025parameter, wu2025catch}. Among these tasks, {Probabilistic Time Series Forecasting (PTSF)} is important for high-stakes decision-making~\cite{Gneiting2007ProbForecast,gneiting2007probabilistic}.
In energy grids, probabilistic load forecasting is essential for risk-aware dispatch~\cite{Hong2016ProbLoadForecast,HaoWang1,sun2022solar}; in quantitative finance and supply chains, accurate uncertainty modeling supports robust risk control under volatility~\cite{Sezer2020FinanceDL,Li2018DCRNN,huang2022dgraph}.
Unlike deterministic forecasters that yield only point estimates, PTSF aims to model the full predictive distribution, allowing for the precise evaluation of confidence intervals and tail risks~\cite{Zhang2024ProbTS}.

Much recent work has focused on increasingly expressive generative architectures, from diffusion models~\cite{Rasul2021TimeGrad,Tashiro2021CSDI,Ye2025NsDiff} that require many iterative denoising steps to structured state space models~\cite{mamba,pessoa2025mamba} with sophisticated gating mechanisms.
This focus leaves a practical question open: which modeling deficiencies most limit current probabilistic forecasters?
We examine two simple but important shortcomings, a statistically inefficient training objective and a memoryless uncertainty estimator, and show that targeted lightweight mechanisms can address much of the resulting performance gap.

\begin{figure}[t]
\centering
\includegraphics[width=0.9\linewidth]{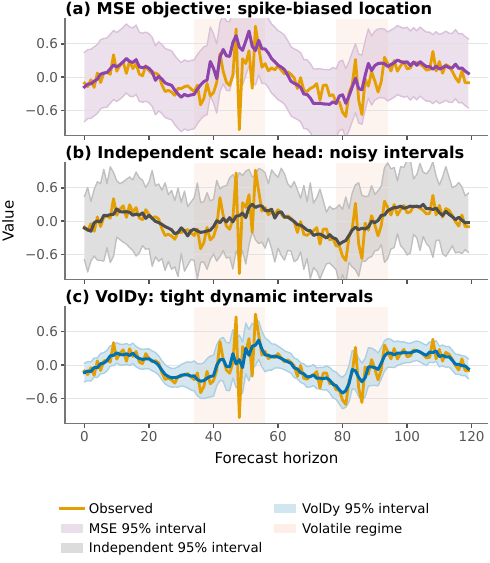}
\caption{{Conceptual illustration of temporal uncertainty dynamics.}
(a) MSE-based homoscedastic forecasters use a constant scale, so volatile spikes can distort the location path before and after high-variance regimes.
(b) Feed-forward heteroscedastic heads can output time-varying scales, but estimate them independently at each step, producing jagged and memoryless intervals.
(c) VolDy models temporally coherent scale dynamics, keeping the location prediction close to the predictable central trajectory while using the scale component to absorb volatile deviations.}
\label{fig:motivation}
\end{figure}

A challenge in PTSF lies in the temporal dynamics of uncertainty itself.
Real-world volatility is not an independent, per-step nuisance parameter. It evolves over time, depends on regimes, and often persists across adjacent intervals.
Wind power generation exhibits stable patterns during calm weather but undergoes violent fluctuations during storms that persist over contiguous intervals; financial markets alternate between extended periods of tranquility and sustained bursts of extreme volatility during crises~\cite{Gneiting2007ProbForecast}.
These phenomena, known as volatility clustering in econometrics, reveal that uncertainty at time $t$ is strongly correlated with uncertainty at neighboring time steps, forming coherent volatility regimes.

However, several prevalent generative forecasting models, including standard VAEs~\cite{Kingma2014VAE,desai2021timevae} and $K^2$VAE~\cite{Wu2025K2VAE}, do not explicitly model this temporal structure in the predictive scale.
These methods suffer from two compounding limitations.
At the objective level, the predominant Mean Squared Error (MSE) loss imposes a rigid {homoscedastic assumption} ($\sigma^2 = \text{const}$)~\cite{kendall2017uncertainties}, which cannot represent time-varying uncertainty and gives high-variance observations disproportionately large gradient influence during finite-sample training (Fig.~\ref{fig:motivation}(a)).
At the architectural level, one might expect that replacing MSE with a heteroscedastic likelihood and adding a learned variance head would resolve the first deficiency. However, many learned-variance designs~\cite{kendall2017uncertainties} still estimate $\sigma_t$ through local feed-forward projections, treating uncertainty as a largely per-step attribute.
This memoryless estimation ignores the fact that real-world volatility is temporally persistent and sequentially correlated: high-volatility periods tend to cluster and propagate, forming coherent regimes rather than isolated spikes.
Without an explicit mechanism to carry forward and evolve a volatility state across the forecast horizon, such models may fail to capture volatility clustering, regime persistence, or smooth transitions between uncertainty regimes.

To address these dual limitations, we formalize the concept of temporal uncertainty dynamics: probabilistic forecasting should predict the location and scale of future values while also modeling how uncertainty evolves and transitions across time.
We instantiate this principle in the {Volatility Dynamics Variational Autoencoder (VolDy-VAE)}, whose location-scale decoder explicitly parameterizes both the predictive mean and a temporally coherent variance.
The architecture uses a {recurrent volatility dynamics module} based on a Gated Recurrent Unit (GRU), which maintains and evolves a volatility hidden state $\mathbf{h}^{\mathrm{vol}}_t$ across time steps.
Unlike feed-forward variance heads that produce independent per-step estimates, this GRU-based mechanism enables the predicted scale $\sigma_t$ to depend on the history of volatility evolution, capturing volatility clustering, regime persistence, and smooth inter-regime transitions.
The recurrent volatility module works synergistically with a heteroscedastic Gaussian NLL objective, jointly addressing both the architectural and objective-level deficiencies of existing methods.
This formulation naturally induces an {adaptive attenuation effect} (visualized in Fig.~\ref{fig:motivation}(b)):
by dynamically estimating a time-dependent variance $\sigma_t$ that captures volatility transitions, the model yields confidence intervals with {time-varying widths} that expand during volatile regimes and contract in stable periods.
From an optimization perspective, the scale component absorbs irreducible stochastic uncertainty ($\propto 1/\sigma_t^2$ weighting), allowing the location component to remain stable and recover the underlying trend.

The main contributions of this work are as follows:
\begin{itemize}
    \item We identify and formalize a dual gap in existing probabilistic forecasting methods: (i)~the objective-level homoscedastic misspecification induced by MSE training, and (ii)~the limited treatment of temporal uncertainty memory, where learned-variance models often estimate $\sigma_t$ locally without explicitly modeling how volatility evolves across the forecast horizon. We show in a simplified regime-switching setting that treating uncertainty as an independent per-step quantity can lead to statistically inefficient estimation when gradient updates are strongly affected by high-volatility observations.
    \item We propose VolDy-VAE, a generative framework that instantiates this principle through a dual-head decoder combining a location head for trend prediction with a GRU-based recurrent scale head for volatility dynamics modeling. The recurrent architecture maintains a volatility hidden state that captures how uncertainty propagates and transitions across time, addressing both the objective-level misspecification of MSE and the architectural gap between generic recurrent distributional forecasting and dedicated volatility-state modeling. We further analyze why, when the variance is known or consistently estimated, this volatility-aware formulation reduces to the inverse-variance weighting that is efficient under heteroscedastic regimes.
    \item We provide plug-in-style evidence that the proposed temporal uncertainty modeling principle can be instantiated beyond the base VAE. By extending the VolDy mechanism to GAN-based (TimeGAN), Koopman-based ($K^2$VAE), and Transformer-based (PatchTST) backbones, we observe consistent improvements on representative datasets.
    \item Experiments on nine benchmarks demonstrate that VolDy-VAE improves predictive accuracy and uncertainty calibration over competitive baselines, while maintaining low inference latency suitable for real-time deployment.
\end{itemize}

\section{Related Works}
\label{sec:related_works}

\subsection{Heteroscedasticity and Distributional Forecasting}
Quantifying the stochastic uncertainty inherent in future states, specifically the time-varying volatility (heteroscedasticity), has long been a central pursuit in probabilistic forecasting.
Research in this domain can be broadly categorized into deep autoregressive models, distributional regression, and stochastic process-based approaches.
Recent probabilistic forecasting work has further explored multi-scale attention flows to capture uncertainty at different temporal resolutions~\cite{Feng2024MSAF}.

\subsubsection{Deep Autoregressive Models}
Seminal works like {DeepAR}~\cite{Salinas2020DeepAR} pioneered the use of Recurrent Neural Networks (RNNs) to model hidden state transitions, estimating likelihood parameters at each time step. While initially formulated with Gaussian likelihoods, subsequent extensions in libraries like {GluonTS}~\cite{alexandrov2020gluonts} have incorporated heavy-tailed distributions to enhance robustness. Similarly, {DeepState}~\cite{rangapuram2018deep} hybridizes state space models with RNNs.
In multivariate forecasting, recent TKDE studies also model inter-variable dependencies through dynamic graph neural ODEs~\cite{Jin2023DGODE} or lightweight correlated-series distillation~\cite{Lai2024LightCTS}.
These models do maintain recurrent sequence states, and therefore should not be confused with purely feed-forward heteroscedastic heads. Their recurrence, however, is used primarily for autoregressive value generation and likelihood-parameter emission rather than for a dedicated volatility state transferred from the look-back window to a directly generated future horizon. VolDy therefore differs from RNN-based variance emission: it combines look-back volatility-state transfer, non-autoregressive horizon generation, and scale-only recurrent memory within a forecasting VAE. Moreover, recursive decoding precludes parallelization and can suffer from error accumulation over long forecasting horizons.

\subsubsection{Distributional Regression and MDNs}
Beyond simple parametric shapes, advanced distributional regression techniques have been adapted for forecasting. {GAMLSS}~\cite{rigby2005generalized} provides a rigorous statistical framework for modeling location, scale, and shape parameters. To capture multimodal densities, Mixture Density Networks ({MDNs})~\cite{bishop1994mixture} output a weighted combination of distributions. While powerful, MDNs are notoriously difficult to train due to numerical instabilities and mode collapse, and they are typically applied in step-wise regression settings rather than global sequence generation.

\subsubsection{Gaussian Processes and Volatility Models}
Bayesian non-parametric methods offer principled uncertainty quantification.
{Heteroscedastic Gaussian Processes}~\cite{lazaro2011variational} explicitly model {input-dependent volatility}.
However, GP-based methods generally suffer from cubic computational complexity $\mathcal{O}(N^3)$, limiting their scalability to long multivariate time series.
In the econometrics literature, the GARCH family~\cite{bollerslev1986generalized} provides a principled framework for modeling temporal volatility persistence through autoregressive variance recursions.
However, classical GARCH models are constrained to linear variance dynamics with globally fixed coefficients, operate on scalar residuals rather than rich latent representations, and are primarily designed for univariate financial return series rather than general-purpose multivariate long-horizon forecasting.

\subsection{Generative Forecasting Models}
To overcome the expressiveness limitations of parametric models, deep generative models have become widely used in forecasting. Related non-generative forecasting advances also exploit time-frequency multi-resolution analysis~\cite{Yan2024MREA} and prompt-based learning paradigms~\cite{Xue2024PromptCast}. We review three mainstream generative categories: iterative diffusion models, emerging state space models, and variational latent variable models.

\subsubsection{Iterative Denoising Models}
Denoising Diffusion Probabilistic Models (DDPMs), such as {TimeGrad}~\cite{Rasul2021TimeGrad} and {CSDI}~\cite{Tashiro2021CSDI}, formulate forecasting as a conditional denoising process. Recent advancements like {NsDiff}~\cite{Ye2025NsDiff} and {DiffusionTS}~\cite{yuan2024diffusionts} further adapt this framework to handle non-stationary characteristics. While diffusion models offer superior distributional expressiveness, they face a severe {efficiency bottleneck}: the requirement for dozens or hundreds of iterative denoising steps renders them computationally expensive for real-time applications.

\subsubsection{State Space Models (SSMs)}
Recently, Structured State Space Models like {Mamba}~\cite{mamba} have emerged, offering linear scaling complexity $\mathcal{O}(L)$. Approaches such as {Mamba-ProbTSF}~\cite{pessoa2025mamba} attempt to adapt this architecture for probabilistic tasks. While highly efficient for long sequences, these models typically function as generic sequence encoders. Unlike dedicated generative frameworks, they often lack explicit mechanisms to structurally decouple signal from noise, potentially leading to over-smoothed uncertainty estimates when confronting rapid regime changes or extreme volatility.
Recent multivariate forecasting analysis has also shown that channel-independent designs involve a capacity--robustness trade-off~\cite{Han2024Capacity}, further highlighting the need to distinguish generic sequence modeling choices from mechanisms that explicitly encode uncertainty dynamics.

\subsubsection{Variational Autoencoders (VAEs) and Temporal Uncertainty}
VAEs offer a compelling balance between capability and efficiency via {one-step generation}.
Standard forecasting approaches, such as {TimeVAE}~\cite{desai2021timevae} and the state-of-the-art {$K^2$VAE}~\cite{Wu2025K2VAE}, excel in modeling long-term nonlinearity but predominantly rely on MSE-based reconstruction, implicitly assuming a {fixed-variance Gaussian likelihood}.
While models like {HeTVAE}~\cite{HeTVAE} have integrated heteroscedastic modeling into the VAE framework, they are designed for different tasks, namely {imputation and interpolation of irregularly sampled data}, requiring complex attention mechanisms and ODE solvers that are neither optimized for nor applicable to efficient long-horizon forecasting.
This leaves a gap for a framework that explicitly models the temporal dynamics of uncertainty (volatility evolution, regime persistence) while retaining efficient long-horizon generation.

\subsection{Positioning: Temporal Uncertainty Dynamics}

To clarify the novelty of our approach relative to representative heteroscedastic and probabilistic forecasting families, we provide a structured comparison in Table~\ref{tab:positioning}.
Existing methods cover different parts of the design space: autoregressive distributional forecasters maintain generic recurrent states, diffusion models provide expressive sample generation, and forecasting VAEs offer efficient direct generation. VolDy targets the less explored intersection of: (1) an explicit volatility state for modeling how uncertainty evolves across time, (2) forecasting-specialized long-horizon prediction, and (3) efficient direct generation.

\begin{table}[!t]
    \centering
    \caption{Positioning of VolDy relative to representative heteroscedastic and probabilistic forecasting families. ``Explicit Vol. State'' indicates whether the approach uses a dedicated state for the evolution and persistence of volatility regimes, as opposed to generic sequence states or independent per-step variance estimation. $\triangle$ denotes partial support through a generic recurrent or generative state rather than an explicit volatility state. ``Plug-in Evidence'' indicates whether the temporal uncertainty mechanism is evaluated beyond a single base architecture.}
    \label{tab:positioning}
    \scriptsize
    \setlength{\tabcolsep}{3pt}
    \begin{tabular}{l|cccc}
        \toprule
        {Approach} & {Forecasting} & {Explicit Vol.} & {Direct} & {Plug-in} \\
         & {Task} & {State} & {Generation} & {Evidence} \\
        \midrule
        Kendall \& Gal~\cite{kendall2017uncertainties} & Regression & \ding{55} & -- & \ding{55} \\
        HeTVAE~\cite{HeTVAE} & Imputation & \ding{55} & \ding{51} & \ding{55} \\
        DeepAR/DeepState~\cite{Salinas2020DeepAR,rangapuram2018deep} & Forecasting & $\triangle$ & \ding{55} & \ding{55} \\
        Diffusion PTSF~\cite{Rasul2021TimeGrad,Tashiro2021CSDI,Ye2025NsDiff} & Forecasting & $\triangle$ & \ding{55} & \ding{55} \\
        $K^2$VAE~\cite{Wu2025K2VAE} & Forecasting & \ding{55} & \ding{51} & \ding{55} \\
        \midrule
        {VolDy principle/plug-in} & {Forecasting} & {\ding{51}} & {\ding{51}} & {\ding{51}} \\
        \bottomrule
    \end{tabular}
\end{table}

Our proposed {VolDy-VAE} addresses this gap by introducing a volatility dynamics module within a VAE forecasting framework, where the recurrent scale head captures the temporal coherence of uncertainty regimes.
This positions our work as a forecasting framework for temporal uncertainty dynamics rather than a direct application of heteroscedastic regression.

\section{Methodology}
\label{sec:method}
\begin{figure*}[t]
    \centering
    \includegraphics[width=\linewidth]{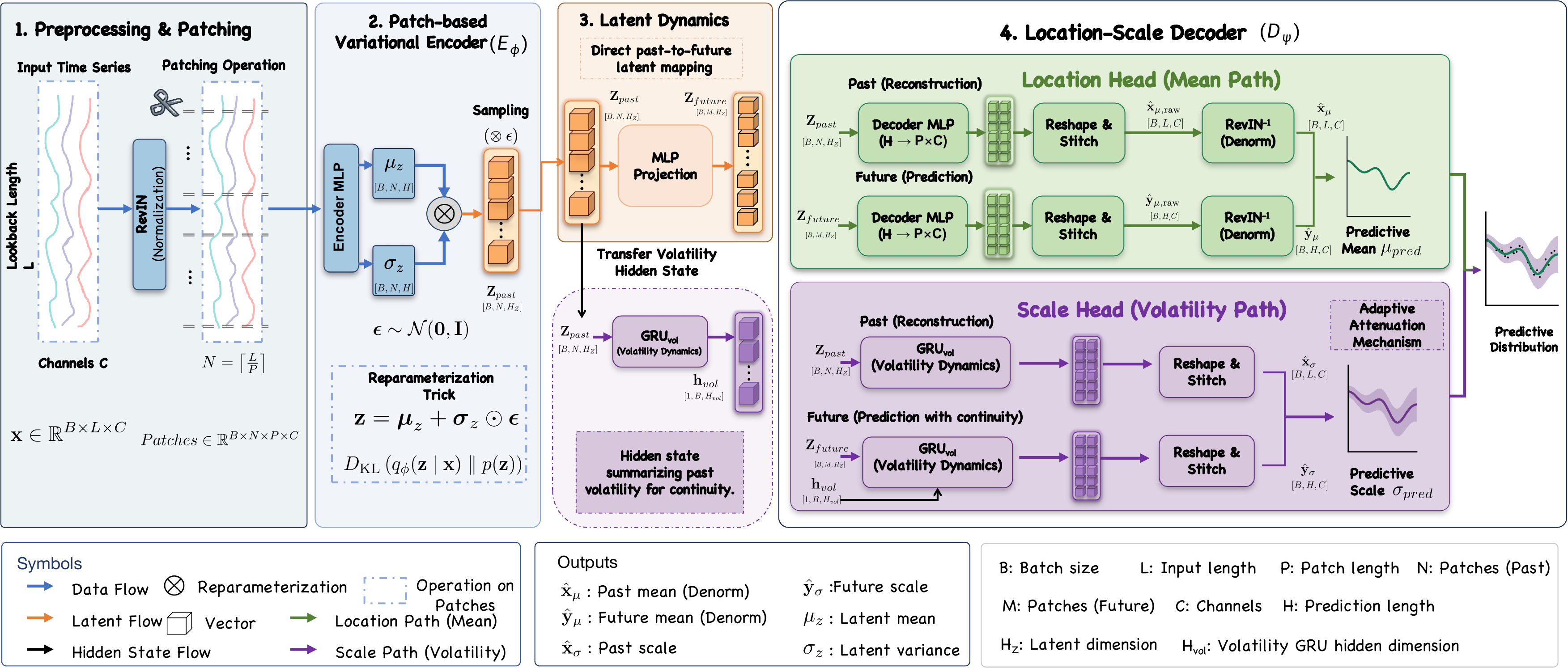}
    \caption{ {Overview of the VolDy-VAE Framework.} The model normalizes and segments the input series into patches, encodes them into a probabilistic latent space, directly maps past latent patches to future latent patches, and decodes both reconstruction and prediction through a location-scale decoder. The location path predicts the denormalized mean, while the recurrent scale path transfers the past volatility hidden state to the future horizon so that the predictive scale evolves coherently across the look-back/prediction boundary.}
    \label{fig:architecture}
\end{figure*}

\subsection{Problem Formulation and Motivation}
\label{subsec:problem_formulation}

\subsubsection{Probabilistic Forecasting Formulation}
We consider a multivariate time series forecasting scenario. Let $\boldsymbol{\mathcal{X}} = \{\mathbf{x}_t\}_{t=1}^L$ denote the historical look-back window of length $L$, where $\mathbf{x}_t \in \mathbb{R}^C$ represents the observed values of $C$ variables at time step $t$.
The objective is to model the conditional probability density of the future horizon $\boldsymbol{\mathcal{Y}} = \{\mathbf{x}_t\}_{t=L+1}^{L+H}$ of length $H$:
\begin{equation}
    p_\theta(\boldsymbol{\mathcal{Y}} | \boldsymbol{\mathcal{X}}).
\end{equation}
Unlike deterministic approaches that yield a single point estimate $\hat{\boldsymbol{\mathcal{Y}}} \approx \mathbb{E}[\boldsymbol{\mathcal{Y}}|\boldsymbol{\mathcal{X}}]$ (often minimizing $L_2$ distance), our goal is to capture the full predictive distribution. This formulation quantifies the aleatoric uncertainty intrinsic to the data generation process, specifically the time-varying volatility (heteroscedasticity).

\subsubsection{The Homoscedastic Limitation in Vanilla VAEs}
We formalize the dual deficiency introduced in Section~\ref{sec:intro}.
In standard time series VAEs~\cite{Kingma2014VAE}, the decoder assumes a fixed-variance Gaussian likelihood $p_\theta(\boldsymbol{\mathcal{X}}| \mathbf{Z}) = \mathcal{N}(\boldsymbol{\mathcal{X}}; \boldsymbol{\mu}_\theta(\mathbf{Z}), \sigma_{const}^2 \mathbf{I})$.
Maximizing the ELBO then degenerates into the MSE objective:
\begin{align}
    \log p_\theta(\boldsymbol{\mathcal{X}}| \mathbf{Z})
    &\propto - \sum_{t} \|\mathbf{x}_t - \boldsymbol{\mu}_t(\mathbf{Z})\|_2^2,
\end{align}
which imposes a homoscedastic misspecification ($\sigma^2 = \text{const}$) whenever predictive uncertainty is time-varying.
Even when this is relaxed by adding a learned variance head, existing methods estimate $\sigma_t$ via feed-forward projections at each step independently, failing to model the temporal coherence of volatility regimes.
Our framework addresses both levels: a heteroscedastic NLL objective removes the homoscedastic constraint, and a recurrent scale head captures how volatility evolves across time.

\subsection{The VolDy-VAE Framework}
We present the Volatility Dynamics Variational Autoencoder (VolDy-VAE), a generative framework designed to model temporal uncertainty dynamics, including the temporal evolution, persistence, and regime transitions of volatility. As depicted in Fig.~\ref{fig:architecture}, the architecture is composed of three modules: (1) a Patch-based Variational Encoder responsible for extracting local semantic contexts; (2) an Efficient Latent Dynamics module that directly maps past latent patches to future latent patches; and (3) a Location-Scale Decoder with Volatility Dynamics that parameterizes the final predictive distribution while capturing the temporal structure of uncertainty.

\subsubsection{Patching and Variational Encoding}
To alleviate the computational bottlenecks of point-wise processing and to better capture local semantic structures, we employ a patching strategy~\cite{Nie2023PatchTST}.
We first apply {Reversible Instance Normalization (RevIN)} to the input $\boldsymbol{\mathcal{X}}$ to mitigate non-stationarity~\cite{Kim2022RevIN}.
The normalized series is then segmented into $N$ non-overlapping patches $\boldsymbol{\mathcal{P}} \in \mathbb{R}^{N \times P \times C}$, where $P$ denotes the patch length.

The encoder $E_\phi$ maps these patches into a stochastic latent space. Following the variational autoencoder framework~\cite{Kingma2014VAE}, we model the posterior distribution of the past latent states $\mathbf{Z}_{past}$ as a diagonal Gaussian to account for epistemic uncertainty arising from noisy observations:
\begin{equation}
    q_\phi(\mathbf{Z}_{past} \mid \boldsymbol{\mathcal{P}}) = \mathcal{N}(\boldsymbol{\mu}_z, \boldsymbol{\sigma}_z^2).
\end{equation}
Here, $(\boldsymbol{\mu}_z, \boldsymbol{\sigma}_z) = E_\phi(\boldsymbol{\mathcal{P}})$. Utilizing the reparameterization trick~\cite{Kingma2014VAE}, we sample $\mathbf{Z}_{past} = \boldsymbol{\mu}_z + \boldsymbol{\epsilon} \odot \boldsymbol{\sigma}_z$, where $\boldsymbol{\epsilon} \sim \mathcal{N}(0, \mathbf{I})$. This stochastic injection regularizes the latent representation and improves robustness to input perturbations.

\subsubsection{Efficient Latent Dynamics for Direct Forecast Generation}
Modeling the transition from past latent representations $\mathbf{Z}_{past}$ to future representations $\mathbf{Z}_{future}$ is an important step. While autoregressive observation decoders generate future values recursively, they are prone to error accumulation and suffer from high latency in long-horizon forecasting~\cite{Bengio2015ExposureBias}. To address this limitation, we use a lightweight direct latent dynamics mechanism.

We adopt the modeling assumption that the high-level semantic evolution of a time series can be approximated by a global transformation in the latent space. Specifically, we flatten the past latent representations and project them directly to the future horizon:
\begin{equation}
    \mathbf{Z}_{future} = \mathrm{Projection}(\mathrm{Flatten}(\mathbf{Z}_{past})) \in \mathbb{R}^{M \times D},
\end{equation}
where $M$ denotes the number of future patches. This direct latent projection avoids feeding generated observations back into the model, enabling the entire future latent horizon to be produced in a single forward pass. The design provides the efficiency benefits commonly associated with non-autoregressive forecasting, while the subsequent recurrent scale head is reserved specifically for modeling volatility memory rather than recursive value generation. This separation reduces the computational bottleneck of recursive generation and improves inference efficiency compared to autoregressive probabilistic forecasters such as DeepAR~\cite{Salinas2020DeepAR} and TimeGrad~\cite{Rasul2021TimeGrad}.

\subsubsection{Location-Scale Decoding with Volatility Dynamics}
\label{sec:loc_scale_vol}

This module is the main mechanism for modeling {temporal uncertainty dynamics}.
Unlike standard heteroscedastic regression that estimates $\sigma_t$ independently at each time step, our decoder is designed to capture the temporal evolution and persistence of volatility regimes.
We introduce a {shared} Dual-Head Decoder $D_\psi$ that operates with {identical weights} on both the historical latent representations $\mathbf{Z}_{past}$ and the future latent representations $\mathbf{Z}_{future}$, ensuring a consistent mapping from latent space to the observation space.
The decoder splits into two parallel pathways from the latent representations $\mathbf{Z}$:

\textbf{Location Head.}
An MLP projects $\mathbf{Z}$ to predict the central tendency (trend and seasonality). The inverse RevIN operation restores the original scale:
\begin{equation}
    \boldsymbol{\mu} = \text{RevIN}^{-1}\left(\text{MLP}_{\mu}(\mathbf{Z})\right).
\end{equation}

\textbf{Scale Head with Volatility Dynamics.}
A separate recurrent pathway takes the same $\mathbf{Z}$ as input but models the temporal coherence of volatility through a recurrent hidden state, rather than predicting $\sigma_t$ independently from each $\mathbf{Z}_t$. Specifically:
\begin{align}
    \mathbf{h}^{\mathrm{vol}}_t &= \mathrm{GRU}_{\mathrm{vol}}(\mathbf{h}^{\mathrm{vol}}_{t-1},\; \mathbf{Z}_t), \\
    \boldsymbol{\sigma}_t &= \text{Softplus}\left(\text{Linear}(\mathbf{h}^{\mathrm{vol}}_t)\right) + \xi,
\end{align}
where $\mathrm{GRU}_{\mathrm{vol}}$ is a single-layer GRU that maintains a volatility hidden state $\mathbf{h}^{\mathrm{vol}}_t$, $\xi$ is a small stability constant (e.g., $10^{-6}$), and the subscript $t$ indexes patches rather than individual time steps: the GRU operates over the $M$ patch-level latent vectors $\{\mathbf{Z}_1, \ldots, \mathbf{Z}_M\}$, producing one scale vector per patch that is then reshaped to yield per-step $\sigma$ values within each patch, matching the granularity of the location head.
For prediction, the same $\mathrm{GRU}_{\mathrm{vol}}$ first summarizes the historical latent sequence $\mathbf{Z}_{past}$ into a final volatility state and uses this state to initialize the future scale path over $\mathbf{Z}_{future}$; this hidden-state transfer preserves volatility continuity across the boundary between reconstruction and forecasting.
This recurrent formulation ensures that the predicted scale at patch $t$ depends not only on the current latent state but also on the history of volatility evolution, enabling the model to capture volatility clustering, regime persistence, and smooth transitions between uncertainty regimes.
This design is motivated by the volatility clustering phenomenon widely observed in real-world time series: periods of high volatility tend to be followed by high volatility, and low-volatility periods tend to persist~\cite{Gneiting2007ProbForecast}.

\textbf{Connection to Classical Volatility Models.}
The recurrent volatility dynamics module can be understood as a learned, nonlinear generalization of the Generalized Autoregressive Conditional Heteroscedasticity (GARCH) family~\cite{bollerslev1986generalized}.
A standard GARCH(1,1) model defines the conditional variance as $\sigma_t^2 = \omega + \alpha \epsilon_{t-1}^2 + \beta \sigma_{t-1}^2$, where the current variance depends linearly on the previous residual and variance.
Our GRU-based formulation extends this principle in three key aspects:
(i)~the gating mechanism of the GRU (reset and update gates) implicitly learns adaptive mixing coefficients $\alpha, \beta$ that are input-dependent rather than globally fixed, enabling the model to modulate volatility persistence differently across regimes;
(ii)~the nonlinear state transitions allow capturing complex volatility dynamics beyond the linear recursion of GARCH, such as nonlinear regime transitions and context-dependent volatility responses;
and (iii)~the volatility state is conditioned on the rich latent representation $\mathbf{Z}_t$ rather than scalar residuals, enabling the model to leverage multi-scale contextual information for uncertainty estimation.
This connection provides theoretical grounding for our architectural choice: the GRU naturally encodes the autoregressive persistence structure of volatility while offering the flexibility to capture nonlinear regime transitions that classical parametric models cannot represent.

\textbf{Role of Temporal Volatility Dynamics.}
A natural question is why estimating $\sigma_t$ independently at each time step, as in standard heteroscedastic regression, is insufficient.
Let $\{\hat{\sigma}_t^{\mathrm{ind}}\}$ denote variance estimates from an independent per-step estimator and $\{\hat{\sigma}_t^{\mathrm{dyn}}\}$ denote estimates from a recurrent volatility dynamics module.
A recurrent estimator does not unconditionally dominate an independent estimator.
However, under the common regime-switching pattern where the true volatility is locally persistent and the recurrent state learns to pool information from adjacent steps without oversmoothing regime boundaries, temporal sharing can reduce the variance of scale estimates within a stable regime.
An independent estimator must infer $\sigma_t$ from a single latent vector $\mathbf{z}_t$, treating each step as an isolated regression problem.
In contrast, the recurrent estimator uses the volatility hidden state $\mathbf{h}_t^{\mathrm{vol}}$ to aggregate evidence from neighboring latent states.
When adjacent steps share the same volatility regime, this pooling behaves like a learned temporal smoother and reduces step-wise scale noise.
At regime boundaries, the GRU gates can limit smoothing lag by adapting how much past volatility state is retained.
This conditional intuition is consistent with the synthetic experiments in Section~\ref{sec:analytical_validation}, where the predicted $\hat{\sigma}_t$ tracks the ground-truth volatility with high temporal coherence ($\rho > 0.98$) rather than producing noisy per-step estimates.

\subsubsection{Objective Function: Volatility-Aware Forecasting}
The training objective is to maximize the Evidence Lower Bound (ELBO). Unlike many forecasting-oriented VAE variants that rely on MSE-based reconstruction objectives, we reformulate the objective to explicitly model temporal uncertainty dynamics in both reconstruction and prediction phases.

\textbf{The Composite Location-Scale Loss.}
We adopt the Gaussian Negative Log-Likelihood (NLL) for both the look-back window reconstruction ($\mathcal{L}_{rec}$) and the future horizon prediction ($\mathcal{L}_{pred}$). The total loss function $\mathcal{L}$ is defined as:
\begin{align}
    \mathcal{L} &= \underbrace{\mathcal{L}_{rec}(\boldsymbol{\mathcal{X}})}_{\text{Reconstruction}} + \underbrace{\mathcal{L}_{pred}(\boldsymbol{\mathcal{Y}})}_{\text{Prediction}} \nonumber \\
    &\quad + \beta \cdot D_{KL}\left(q_\phi(\mathbf{Z}_{past} \mid \boldsymbol{\mathcal{P}}) \parallel \mathcal{N}(0, \mathbf{I})\right).
\end{align}

Both $\mathcal{L}_{rec}$ and $\mathcal{L}_{pred}$ are computed using the heteroscedastic Gaussian NLL. For a generic sequence $\mathbf{u}$ of length $T$ (where $\mathbf{u}$ is $\boldsymbol{\mathbf{x}}$ or $\boldsymbol{\mathbf{y}}$), the loss is:
\begin{equation}
\mathcal{L}_{\text{NLL}}(\mathbf{u}) = \frac{1}{T C} \sum_{t,c} \left( \log \boldsymbol{\sigma}_{t,c} + \frac{(\mathbf{u}_{t,c} - \boldsymbol{\mu}_{t,c})^2}{2\boldsymbol{\sigma}_{t,c}^2} \right).
\label{eq:ob}
\end{equation}
By jointly minimizing $\mathcal{L}_{rec}$ and $\mathcal{L}_{pred}$, the model is forced to learn a latent representation that is both descriptive of historical volatility patterns and predictive of future uncertainty dynamics.

\textbf{Theoretical Analysis (Adaptive Attenuation and Temporal Robustness).}
The gradient of the NLL term reveals an adaptive attenuation mechanism:
\begin{equation}
    \nabla_{\boldsymbol{\mu}} \mathcal{L}_{NLL} \propto \frac{1}{\boldsymbol{\sigma}^2}(\mathbf{u} - \boldsymbol{\mu}).
\end{equation}
When the model encounters an outlier or a volatile regime, the volatility dynamics module learns to increase $\boldsymbol{\sigma}$, effectively down-weighting the gradient contribution of that point ($\frac{1}{\boldsymbol{\sigma}^2} \to 0$) and preventing the location head from overfitting to {transient stochastic fluctuations}.
Because $\sigma_t$ is produced by the recurrent volatility module rather than estimated independently, this attenuation is temporally coherent: the model transitions between high-attention (low $\sigma$) and low-attention (high $\sigma$) regimes in a manner consistent with real-world volatility.

Beyond this gradient-level effect, a simplified regime-switching analysis in \suppSectionRef{app:consistency_proof} shows that, when time-varying variances are known or consistently estimated, the Gaussian NLL objective reduces to inverse-variance weighting, whereas MSE remains statistically inefficient under heteroscedasticity.
We use this result as a statistical rationale for the volatility-aware objective rather than as a proof of global optimality for the full neural forecasting model.
It explains why accurate scale estimates can make location learning less sensitive to high-volatility observations, which is consistent with the empirical behavior observed below.

\section{Experiments on Synthetic Data}
\label{sec:analytical_validation}

In real-world empirical studies, the true aleatoric uncertainty is inherently latent and unobservable, making it difficult to rigorously verify whether a model captures volatility dynamics or merely overfits residual artifacts~\cite{kendall2017uncertainties}.
To address this, we validate the temporal uncertainty modeling capability of VolDy-VAE within a {controlled analytical environment} where both the underlying trend and the {volatility profile} are mathematically defined and known~\cite{Gal2016Dropout}.
This experiment tests whether the model can disentangle the deterministic signal (Location Head) from temporally evolving volatility (Scale Head with volatility dynamics).

\begin{figure}[t]
    \centering
    \includegraphics[width=\linewidth]{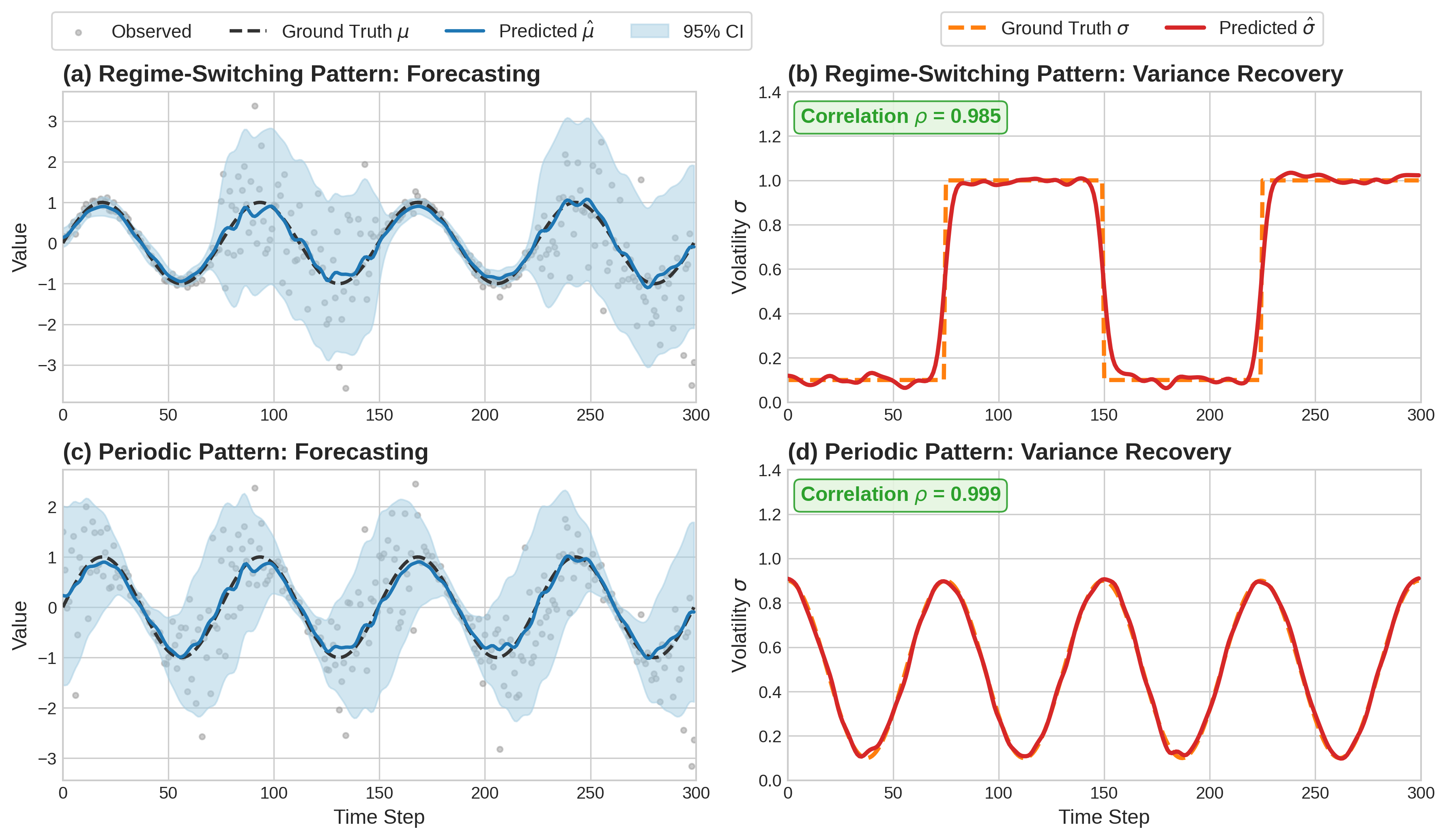}
    \caption{Qualitative results on synthetic datasets.
Left: forecasting with adaptive 95\% confidence intervals under varying volatility.
Right: uncertainty recovery, where the predicted volatility $\hat{\sigma}_t$ closely tracks the ground-truth temporal dynamics.}
    \label{fig:toy_results}
\end{figure}

\subsection{Data Generating Process}
We analyze univariate sequences governed by the process $y_t = \mu_t + \epsilon_t$, where the underlying deterministic trend is defined as $\mu_t = \sin(t)$ and the {stochastic term} follows $\epsilon_t \sim \mathcal{N}(0, \sigma_t^2)$.
To stress-test the model under {extreme heteroscedasticity}, we introduce two distinct volatility profiles as illustrated in Fig.~\ref{fig:toy_results}.
The first is a {Regime-Switching} scenario (Top Row), which simulates structural breaks by having $\sigma_t$ alternate abruptly between stable ($0.1$) and high-volatility ($1.0$) states, thereby challenging the model's adaptability to sudden distributional shifts.
The second is a {Periodic} pattern (Bottom Row), defined by a continuous volatility trajectory $\sigma_t = 0.5 + 0.4\cos(t)$; this profile is deliberately phase-shifted relative to the mean to rigorously test the model's ability to decouple correlated but distinct dynamics.

\subsection{Results and Analysis}
Fig.~\ref{fig:toy_results} visualizes the forecasting and variance recovery performance.
In the Regime-Switching scenario (a-b), the predicted scale $\hat{\sigma}_t$ closely tracks the ground-truth step function with high correlation ($\rho=0.985$). This estimation allows the confidence intervals to expand during volatility spikes and contract during stable periods without distorting the trend prediction.
Similarly, in the Periodic scenario (c-d), the model achieves near-perfect disentanglement ($\rho=0.999$), effectively distinguishing the signal's phase from the cyclic volatility profile.
These results show that the volatility dynamics module captures the temporal evolution of uncertainty: the predicted $\hat{\sigma}_t$ transitions smoothly between regimes rather than producing noisy per-step estimates.

\begin{table}[!t]
    \centering
    \caption{Statistics of the multivariate time series datasets.}
    \label{tab:dataset_stats}
    \scriptsize
    \setlength{\tabcolsep}{3pt}
    \resizebox{\columnwidth}{!}{
    \begin{tabular}{l|c|c|c|c}
        \toprule
        \textbf{Dataset} & \textbf{Features ($C$)} & \textbf{Timesteps} & \textbf{Frequency} & \textbf{Domain} \\
        \midrule
        ETTh1 & 7 & 17{,}420 & 1 Hour & Energy \\
        ETTh2 & 7 & 17{,}420 & 1 Hour & Energy \\
        ETTm1 & 7 & 69{,}680 & 15 Min & Energy \\
        ETTm2 & 7 & 69{,}680 & 15 Min & Energy \\
        Electricity & 321 & 26{,}304 & 1 Hour & Energy \\
        Traffic & 862 & 17{,}544 & 1 Hour & Transport \\
        Weather & 21 & 52{,}696 & 10 Min & Weather \\
        Exchange & 8 & 7{,}588 & 1 Day & Economics \\
        ILI & 7 & 966 & 1 Week & Health \\
        \bottomrule
    \end{tabular}
    }
\end{table}

\section{Experiments on Real-World Data}
\label{sec:experiments}

We evaluate the effectiveness, robustness, and generalizability of VolDy-VAE through experiments on benchmark forecasting tasks.
Our evaluation focuses on large-scale real-world probabilistic time series forecasting tasks, followed by ablation studies, qualitative analysis, generalization experiments, and efficiency comparisons.

We conduct experiments under the ProbTS/$K^2$VAE protocol~\cite{Zhang2024ProbTS,Wu2025K2VAE}, a benchmark covering both point and probabilistic forecasters across diverse long-horizon settings.
Under this unified protocol, we compare against 12 competitive baselines, including NsDiff~\cite{Ye2025NsDiff} as an additional diffusion-based baseline.

\subsection{Evaluation under the \texorpdfstring{$K^2$VAE}{K2VAE} Protocol}
\label{Setting_I}

\subsubsection{Datasets} We conduct experiments on nine widely-recognized real-world benchmarks from ProbTS~\cite{Zhang2024ProbTS}, spanning diverse domains: Energy (\textit{ETTh1}, \textit{ETTh2}, \textit{ETTm1}, \textit{ETTm2}, \textit{Electricity}), Transportation (\textit{Traffic}), Weather (\textit{Weather}), Economics (\textit{Exchange}), and Health (\textit{ILI}).
Following the standard protocol for long-term forecasting, we set the look-back window length to $L=96$ for all datasets, except for \textit{ILI} where $L=36$.
The forecasting horizons are set to $H \in \{96, 192, 336, 720\}$ for the general datasets and $H \in \{24, 36, 48, 60\}$ for \textit{ILI}.
Table~\ref{tab:dataset_stats} summarizes the dataset statistics.
\FloatBarrier

\subsubsection{Baselines}
We compare VolDy-VAE with  {12 competitive baselines}, covering both
deterministic and probabilistic forecasting models.
Specifically, the selected baselines include deep point forecasters such as FITS~\cite{Xu2023FITS},
PatchTST~\cite{Nie2023PatchTST}, iTransformer~\cite{Liu2024iTransformer}, and Koopa~\cite{Liu2023Koopa},
which are originally designed for deterministic prediction.
For fair probabilistic evaluation, these models are augmented with a Gaussian
output head to predict distribution parameters $(\mu, \sigma)$.
All augmented point forecasters are trained with the same Gaussian NLL objective used for probabilistic evaluation, and the reported numbers are obtained from six independent retraining-and-evaluation runs under the same data splits and forecasting horizons.
In addition, we consider a broad range of probabilistic and generative models,
including TSDiff~\cite{Kollovieh2023TSDiff}, TimeGrad~\cite{Rasul2021TimeGrad},
CSDI~\cite{Tashiro2021CSDI}, NsDiff~\cite{Ye2025NsDiff}, GRU NVP, GRU MAF, Trans MAF~\cite{Trans-MAF}, as well as $K^2$VAE~\cite{Wu2025K2VAE}.
\subsubsection{Implementation Details}
\label{sec:implementation}
VolDy-VAE adopts the same patching strategy (patch length $P=24$) and MLP-based encoder as $K^2$VAE~\cite{Wu2025K2VAE} for fair comparison.
The decoder features a dual-head structure: a Location Head with linear projection and a Scale Head with a single-layer GRU (hidden dimension $= D$) followed by \texttt{Softplus} activation.
In \suppTableRef{tab:hyper_sensitivity_specific}, the layer number $N$ denotes the depth of the MLP encoder, not the number of GRU layers in the scale head.
We implement the model in PyTorch and train on a single NVIDIA RTX 4090 GPU using the Adam optimizer (learning rate $\in [10^{-4}, 10^{-3}]$, batch size 32, max 50 epochs with early stopping patience of 10).

\subsubsection{Metrics} We employ two complementary metrics: {Continuous Ranked Probability Score (CRPS)} to measure the distributional fit and {Normalized Mean Absolute Error (NMAE)} to assess point prediction accuracy. CRPS is a strictly proper scoring rule and is applicable to both closed-form Gaussian outputs and sample-based probabilistic baselines, making it suitable for cross-family comparison where likelihood values are not uniformly available. Lower values indicate better performance for both metrics. Detailed descriptions of these metrics can
be found in \suppSectionRef{appendix:metrics}.

\begin{table*}[t]
\caption{Comparison on long-term probabilistic forecasting (forecasting horizon $H=720$) scenarios across nine real-world datasets. Lower CRPS or NMAE values indicate better predictions. The reported values are means with standard deviations over 6 independent retraining-and-evaluation runs. \textcolor{red}{RED}: THE BEST, \textcolor{blue}{\underline{BLUE}}: THE 2ND BEST. The full results of all four horizons {96, 192, 336, 720} are listed in the supplementary material.}
\centering
\scriptsize
\setlength{\tabcolsep}{4pt}
\renewcommand{\arraystretch}{1.1}
\begin{tabular}{c|c|ccccccccc}
\toprule
Model & Metric & ETTm1  & ETTm2  & ETTh1  & ETTh2  & Electricity  & Traffic  & Weather  & Exchange  & ILI  \\
\midrule
\multirow{2}{*}{FITS} & CRPS
& $0.305_{\pm.024}$ & $0.449_{\pm.034}$ & $0.348_{\pm.025}$ & $0.314_{\pm.022}$ & $0.115_{\pm.024}$ & $0.374_{\pm.004}$ & $0.267_{\pm.003}$ & $0.074_{\pm.011}$ & $0.211_{\pm.011}$ \\
~ & NMAE
& $0.406_{\pm.072}$ & $0.540_{\pm.052}$ & $0.468_{\pm.012}$ & $0.401_{\pm.022}$ & $0.149_{\pm.012}$ & $0.453_{\pm.022}$ & $0.317_{\pm.021}$ & $0.097_{\pm.011}$ & $0.245_{\pm.017}$ \\
\hline
\multirow{2}{*}{PatchTST} & CRPS
& $0.304_{\pm.029}$ & $0.229_{\pm.036}$ & $0.323_{\pm.020}$ & $0.304_{\pm.018}$ & $0.127_{\pm.015}$ & $0.214_{\pm.001}$ & $0.142_{\pm.005}$ & $0.097_{\pm.007}$ & $0.233_{\pm.019}$ \\
~ & NMAE
& $0.382_{\pm.066}$ & $0.288_{\pm.034}$ & $0.428_{\pm.024}$ & $0.371_{\pm.021}$ & $0.164_{\pm.024}$ & $0.253_{\pm.012}$ & $0.152_{\pm.029}$ & $0.126_{\pm.001}$ & $0.287_{\pm.023}$ \\
\hline
\multirow{2}{*}{iTransformer} & CRPS
& $0.455_{\pm.021}$ & $0.311_{\pm.024}$ & $0.350_{\pm.019}$ & $0.542_{\pm.015}$ & $0.109_{\pm.044}$ & $0.284_{\pm.004}$ & $0.133_{\pm.004}$ & $0.087_{\pm.023}$ & $0.222_{\pm.020}$ \\
~ & NMAE
& $0.490_{\pm.038}$ & $0.385_{\pm.042}$ & $0.449_{\pm.022}$ & $0.667_{\pm.012}$ & $0.140_{\pm.009}$ & $0.361_{\pm.030}$ & $0.147_{\pm.019}$ & $0.113_{\pm.015}$ & $0.278_{\pm.017}$ \\
\hline
\multirow{2}{*}{Koopa} & CRPS
& $0.295_{\pm.027}$ & $0.233_{\pm.025}$ & $0.318_{\pm.009}$ & $0.293_{\pm.026}$ & $0.113_{\pm.018}$ & $0.358_{\pm.022}$ & $0.140_{\pm.007}$ & $0.091_{\pm.012}$ & $0.228_{\pm.022}$ \\
~ & NMAE
& $0.377_{\pm.037}$ & $0.290_{\pm.033}$ & $0.412_{\pm.008}$ & $0.286_{\pm.042}$ & $0.149_{\pm.025}$ & $0.432_{\pm.032}$ & $0.162_{\pm.009}$ & $0.116_{\pm.022}$ & $0.288_{\pm.031}$ \\
\hline
\multirow{2}{*}{TSDiff} & CRPS
& $0.478_{\pm.027}$ & $0.344_{\pm.046}$ & $0.516_{\pm.027}$ & $0.406_{\pm.056}$ & $0.478_{\pm.005}$ & $0.391_{\pm.002}$ & $0.152_{\pm.003}$ & $0.082_{\pm.010}$ & $0.263_{\pm.022}$ \\
~ & NMAE
& $0.622_{\pm.045}$ & $0.416_{\pm.065}$ & $0.657_{\pm.017}$ & $0.482_{\pm.022}$ & $0.622_{\pm.142}$ & $0.478_{\pm.006}$ & $0.141_{\pm.026}$ & $0.142_{\pm.009}$ & $0.272_{\pm.020}$ \\
\hline
\multirow{2}{*}{GRU NVP} & CRPS
& $0.546_{\pm.036}$ & $0.561_{\pm.273}$ & $0.502_{\pm.039}$ & $0.539_{\pm.090}$ & $0.114_{\pm.013}$ & $0.211_{\pm.004}$ & $0.110_{\pm.004}$ & $0.079_{\pm.009}$ & $0.307_{\pm.005}$ \\
~ & NMAE
& $0.707_{\pm.050}$ & $0.749_{\pm.385}$ & $0.643_{\pm.046}$ & $0.688_{\pm.161}$ & $0.144_{\pm.017}$ & $0.264_{\pm.006}$ & $0.135_{\pm.008}$ & $0.103_{\pm.009}$ & $0.333_{\pm.005}$ \\
\hline
\multirow{2}{*}{GRU MAF} & CRPS
& $0.536_{\pm.033}$ & $0.272_{\pm.029}$ & $0.393_{\pm.043}$ & $0.990_{\pm.023}$ & $0.106_{\pm.007}$ & - & $0.122_{\pm.006}$ & $0.160_{\pm.019}$ & $0.172_{\pm.034}$ \\
~ & NMAE
& $0.711_{\pm.081}$ & $0.355_{\pm.048}$ & $0.496_{\pm.019}$ & $1.092_{\pm.019}$ & $0.136_{\pm.098}$ & - & $0.149_{\pm.034}$ & $0.182_{\pm.010}$ & $0.216_{\pm.014}$ \\
\hline
\multirow{2}{*}{Trans MAF} & CRPS
& $0.688_{\pm.043}$ & $0.355_{\pm.043}$ & $0.363_{\pm.053}$ & $0.327_{\pm.033}$ & - & - & $0.113_{\pm.004}$ & $0.148_{\pm.017}$ & $0.155_{\pm.018}$ \\
~ & NMAE
& $0.822_{\pm.034}$ & $0.475_{\pm.029}$ & $0.455_{\pm.025}$ & $0.412_{\pm.020}$ & - & - & $0.148_{\pm.040}$ & $0.191_{\pm.006}$ & $0.183_{\pm.019}$ \\
\hline
\multirow{2}{*}{TimeGrad} & CRPS
& $0.621_{\pm.037}$ & $0.470_{\pm.054}$ & $0.523_{\pm.027}$ & $0.445_{\pm.016}$ & $0.108_{\pm.003}$ & $0.220_{\pm.002}$ & $0.113_{\pm.011}$ & $0.099_{\pm.015}$ & $0.295_{\pm.083}$ \\
~ & NMAE
& $0.793_{\pm.034}$ & $0.561_{\pm.044}$ & $0.672_{\pm.015}$ & $0.550_{\pm.018}$ & $0.134_{\pm.004}$ & $0.263_{\pm.001}$ & $0.136_{\pm.020}$ & $0.113_{\pm.016}$ & $0.325_{\pm.068}$ \\
\hline
\multirow{2}{*}{CSDI} & CRPS
& $0.448_{\pm.038}$ & $0.239_{\pm.035}$ & $0.528_{\pm.012}$ & $0.302_{\pm.040}$ & - & - & $0.087_{\pm.003}$ & $0.143_{\pm.020}$ & $0.283_{\pm.012}$ \\
~ & NMAE
& $0.578_{\pm.051}$ & $0.306_{\pm.040}$ & $0.657_{\pm.014}$ & $0.382_{\pm.030}$ & - & - & $0.102_{\pm.005}$ & $0.173_{\pm.020}$ & $0.299_{\pm.013}$ \\
\hline
\multirow{2}{*}{NsDiff} & CRPS
& $0.323_{\pm.030}$ & $0.238_{\pm.026}$ & $0.360_{\pm.018}$ & $0.289_{\pm.020}$ & $0.096_{\pm.008}$ & $0.207_{\pm.003}$ & $0.090_{\pm.004}$ & $0.076_{\pm.008}$ & $0.161_{\pm.010}$ \\
~ & NMAE
& $0.410_{\pm.036}$ & $0.295_{\pm.034}$ & $0.455_{\pm.020}$ & $0.322_{\pm.025}$ & $0.126_{\pm.014}$ & $0.256_{\pm.010}$ & $0.106_{\pm.008}$ & $0.094_{\pm.012}$ & $0.190_{\pm.012}$ \\
\hline
\multirow{2}{*}{$K^2$VAE} & CRPS
& $\textcolor{blue}{\underline{0.294}_{\pm.026}}$
& $\textcolor{blue}{\underline{0.221}_{\pm.023}}$
& $\textcolor{blue}{\underline{0.314}_{\pm.011}}$
& $\textcolor{blue}{\underline{0.280}_{\pm.014}}$
& $\textcolor{blue}{\underline{0.087}_{\pm.005}}$
& $\textcolor{blue}{\underline{0.200}_{\pm.001}}$
& $\textcolor{blue}{\underline{0.084}_{\pm.003}}$
& $\textcolor{blue}{\underline{0.069}_{\pm.005}}$
& $\textcolor{blue}{\underline{0.142}_{\pm.008}}$ \\
~ & NMAE
& $\textcolor{blue}{\underline{0.373}_{\pm.032}}$
& $\textcolor{blue}{\underline{0.275}_{\pm.035}}$
& $\textcolor{blue}{\underline{0.396}_{\pm.012}}$
& $\textcolor{blue}{\underline{0.278}_{\pm.020}}$
& $\textcolor{blue}{\underline{0.117}_{\pm.019}}$
& $\textcolor{blue}{\underline{0.248}_{\pm.010}}$
& $\textcolor{blue}{\underline{0.099}_{\pm.009}}$
& $\textcolor{blue}{\underline{0.084}_{\pm.017}}$
& $\textcolor{blue}{\underline{0.167}_{\pm.007}}$ \\
\hline
\multirow{2}{*}{VolDy-VAE$^{*}$} & CRPS
& $\textcolor{red}{0.279_{\pm.020}}$
& $\textcolor{red}{0.166_{\pm.018}}$
& $\textcolor{red}{0.310_{\pm.012}}$
& $\textcolor{red}{0.174_{\pm.010}}$
& $\textcolor{red}{0.080_{\pm.006}}$
& $\textcolor{red}{0.192_{\pm.003}}$
& $\textcolor{red}{0.073_{\pm.003}}$
& $\textcolor{red}{0.065_{\pm.004}}$
& $\textcolor{red}{0.130_{\pm.005}}$ \\
~ & NMAE
& $\textcolor{red}{0.367_{\pm.023}}$
& $\textcolor{red}{0.207_{\pm.026}}$
& $\textcolor{red}{0.389_{\pm.013}}$
& $\textcolor{red}{0.223_{\pm.015}}$
& $\textcolor{red}{0.111_{\pm.012}}$
& $\textcolor{red}{0.236_{\pm.009}}$
& $\textcolor{red}{0.086_{\pm.006}}$
& $\textcolor{red}{0.077_{\pm.012}}$
& $\textcolor{red}{0.155_{\pm.006}}$ \\
\bottomrule
\multicolumn{11}{l}{
\footnotesize
${*}$ indicates that VolDy-VAE significantly outperforms K$^2$VAE
under paired Wilcoxon signed-rank tests ($p<0.05$).
}
\end{tabular}
\label{tab: long-term}
\end{table*}

\subsubsection{Results and Analysis}
Table~\ref{tab: long-term} summarizes the performance for the challenging long-term horizon ($H=720$).
A notable finding is that VolDy-VAE consistently improves over specialized point forecasters, such as PatchTST and iTransformer, even on the deterministic metric ({NMAE}). This observation is consistent with the adaptive attenuation mechanism discussed above: when scale estimates reflect local volatility, the NLL objective can reduce the influence of high-variance observations on the location predictor.
Furthermore, compared to the previous SOTA VAE-based method, $K^2$VAE, our model achieves consistent improvements across all datasets, demonstrating the advantage of explicitly modeling volatility evolution over independent per-step variance estimation.
\subsection{Ablation Studies}
\label{sec:ablation}

To thoroughly investigate the contribution of each component in our training objective, we conduct a component-wise ablation study. The total loss is defined as $\mathcal{L} = \mathcal{L}_{rec} + \mathcal{L}_{pred} + \beta D_{KL}$. We measure performance using {CRPS} (probabilistic calibration) and {NMAE} (deterministic accuracy). We compare the full VolDy-VAE against three distinct variants: {w/o Prediction ($\mathcal{L}_{pred} = 0$)}, which is trained exclusively on reconstructing historical patches to test whether latent dynamics can be learned without future supervision; {w/o Reconstruction ($\mathcal{L}_{rec} = 0$)}, which removes the historical reconstruction loss to verify if the auxiliary reconstruction task aids in regularizing the latent space; and {Vanilla VAE (MSE)}, which replaces the heteroscedastic Location-Scale NLL with a standard homoscedastic MSE loss.

\begin{table}[t]
    \centering
    \caption{Component-wise ablation on \textit{ETTh1} and \textit{Electricity}.
We compare VolDy-VAE with variants without prediction loss (w/o P), without reconstruction loss (w/o R), and a vanilla VAE (MSE).
Metrics: CRPS and NMAE. \textcolor{red}{RED}: THE BEST.}
    \label{tab:ablation_component}
    \scriptsize
    \setlength{\tabcolsep}{3.5pt}
    \begin{tabular}{c|c|cccc|cccc}
        \toprule
        \multirow{2}{*}{{Dataset}} & \multirow{2}{*}{{Horizon}} & \multicolumn{4}{c|}{{CRPS} $\downarrow$} & \multicolumn{4}{c}{{NMAE} $\downarrow$} \\
        \cmidrule(lr){3-6} \cmidrule(lr){7-10}
         & & {Van.} & {w/o P} & {w/o R} & {Ours} & {Van.} & {w/o P} & {w/o R} & {Ours} \\
        \midrule
        \multirow{4}{*}{\rotatebox{90}{{ETTh1}}}
        & 96  & 0.287 & 0.374 & 0.277 & \textcolor{red}{0.252} & 0.364 & 0.464 & 0.354 & \textcolor{red}{0.328} \\
        & 192 & 0.323 & 0.378 & 0.282 & \textcolor{red}{0.277} & 0.388 & 0.470 & 0.370 & \textcolor{red}{0.365} \\
        & 336 & 0.352 & 0.386 & 0.305 & \textcolor{red}{0.300} & 0.409 & 0.486 & 0.395 & \textcolor{red}{0.384} \\
        & 720 & 0.355 & 0.393 & 0.318 & \textcolor{red}{0.310} & 0.416 & 0.504 & 0.398 & \textcolor{red}{0.389} \\
        \midrule
        \midrule
        \multirow{4}{*}{\rotatebox{90}{{Electricity}}}
        & 96  & 0.363 & 0.259 & 0.082 & \textcolor{red}{0.069} & 0.362 & 0.363 & 0.107 & \textcolor{red}{0.089} \\
        & 192 & 0.363 & 0.261 & 0.086 & \textcolor{red}{0.075} & 0.364 & 0.363 & 0.112 & \textcolor{red}{0.096} \\
        & 336 & 0.364 & 0.264 & 0.099 & \textcolor{red}{0.078} & 0.364 & 0.366 & 0.129 & \textcolor{red}{0.102} \\
        & 720 & 0.370 & 0.269 & 0.103 & \textcolor{red}{0.080} & 0.370 & 0.371 & 0.135 & \textcolor{red}{0.111} \\
        \bottomrule
    \end{tabular}
\end{table}

The results on \textit{ETTh1} and \textit{Electricity}, summarized in Table~\ref{tab:ablation_component}, show complementary effects among the proposed components.
First, the {w/o Pred} variant exhibits the sharpest deterioration in deterministic accuracy, suggesting that explicit future supervision is important for guiding the latent dynamics module. Similarly, the consistent performance drop in the {w/o Rec} variant indicates that historical reconstruction helps regularize the latent space and reduce overfitting.
The {Vanilla VAE} also lags behind VolDy-VAE in probabilistic metrics. This indicates that the homoscedastic misspecification inherent in the MSE objective can limit uncertainty calibration, whereas our volatility-aware objective more effectively captures temporal uncertainty dynamics in these benchmarks.
We retain the KL divergence term to ensure the theoretical integrity of the variational framework.

\textbf{Ablation on Non-stationarity Handling.}
Because VolDy-VAE applies RevIN before patching, we further test whether this normalization is merely a preprocessing convenience or a necessary stabilizer for heteroscedastic modeling.
As shown in Table~\ref{tab:revin_ablation}, removing RevIN substantially degrades both CRPS and NMAE, especially on \textit{ETTh1}, where distributional shift is pronounced.
This result indicates that symmetric normalization and denormalization help the latent and scale modules focus on local volatility dynamics rather than being dominated by global level and variance shifts.

\begin{table}[t]
    \centering
    \caption{Ablation on Reversible Instance Normalization (RevIN). \textcolor{red}{RED}: THE BEST.}
    \label{tab:revin_ablation}
    \scriptsize
    \setlength{\tabcolsep}{6pt}
    \resizebox{\columnwidth}{!}{
    \begin{tabular}{c|c|cc|cc}
        \toprule
        \multirow{2}{*}{{Dataset}} & \multirow{2}{*}{{$H$}} & \multicolumn{2}{c|}{{CRPS} $\downarrow$} & \multicolumn{2}{c}{{NMAE} $\downarrow$} \\
        \cmidrule(lr){3-4} \cmidrule(lr){5-6}
         & & {w/o RevIN} & {Ours} & {w/o RevIN} & {Ours} \\
        \midrule
        \multirow{4}{*}{\rotatebox{90}{{ETTh1}}}
        & 96  & 0.480 & \textcolor{red}{0.252} & 0.618 & \textcolor{red}{0.328} \\
        & 192 & 0.476 & \textcolor{red}{0.277} & 0.615 & \textcolor{red}{0.365} \\
        & 336 & 0.470 & \textcolor{red}{0.300} & 0.603 & \textcolor{red}{0.384} \\
        & 720 & 0.567 & \textcolor{red}{0.310} & 0.690 & \textcolor{red}{0.389} \\
        \midrule
        \multirow{4}{*}{\rotatebox{90}{{Weather}}}
        & 96  & 0.084 & \textcolor{red}{0.070} & 0.106 & \textcolor{red}{0.077} \\
        & 192 & 0.097 & \textcolor{red}{0.067} & 0.125 & \textcolor{red}{0.080} \\
        & 336 & 0.092 & \textcolor{red}{0.071} & 0.119 & \textcolor{red}{0.086} \\
        & 720 & 0.098 & \textcolor{red}{0.073} & 0.124 & \textcolor{red}{0.086} \\
        \bottomrule
    \end{tabular}
    }
\end{table}

\textbf{Architectural Ablation: Scale Head Design.}
One claim of this work is that modeling the temporal dynamics of volatility, rather than only estimating per-step variance, improves probabilistic forecasting.
To isolate the contribution of the recurrent volatility dynamics module, we compare three scale head architectures while keeping all other components (encoder, latent dynamics, location head, NLL objective) identical:
(i)~\textbf{MLP}: a feed-forward network that estimates $\sigma_t$ independently from each $\mathbf{Z}_t$, representing standard heteroscedastic regression without temporal memory;
(ii)~\textbf{LSTM}: a Long Short-Term Memory unit replacing the GRU, serving as an alternative recurrent architecture with separate cell and hidden states;
and (iii)~\textbf{GRU} (ours): the proposed gated recurrent volatility dynamics module.

\begin{table}[t]
    \centering
    \caption{Ablation on scale head architecture. We compare MLP (feed-forward, no temporal memory), LSTM, and GRU (ours) scale heads on \textit{ETTh1} and \textit{ETTh2} ($L=96$). \textcolor{red}{RED}: THE BEST.}
    \label{tab:ablation_arch}
    \scriptsize
    \setlength{\tabcolsep}{3pt}
    \begin{tabular}{c|c|ccc|ccc}
        \toprule
        \multirow{2}{*}{{Dataset}} & \multirow{2}{*}{{$H$}} & \multicolumn{3}{c|}{{CRPS} $\downarrow$} & \multicolumn{3}{c}{{NMAE} $\downarrow$} \\
        \cmidrule(lr){3-5} \cmidrule(lr){6-8}
         & & {MLP} & {LSTM} & {GRU} & {MLP} & {LSTM} & {GRU} \\
        \midrule
        \multirow{4}{*}{\rotatebox{90}{{ETTh1}}}
        & 96  & 0.271 & 0.255 & \textcolor{red}{0.252} & 0.350 & 0.331 & \textcolor{red}{0.328} \\
        & 192 & 0.296 & 0.280 & \textcolor{red}{0.277} & 0.381 & 0.368 & \textcolor{red}{0.365} \\
        & 336 & 0.320 & 0.304 & \textcolor{red}{0.300} & 0.401 & 0.388 & \textcolor{red}{0.384} \\
        & 720 & 0.328 & 0.314 & \textcolor{red}{0.310} & 0.406 & 0.393 & \textcolor{red}{0.389} \\
        \midrule
        \multirow{4}{*}{\rotatebox{90}{{ETTh2}}}
        & 96  & 0.162 & 0.151 & \textcolor{red}{0.148} & 0.206 & 0.186 & \textcolor{red}{0.183} \\
        & 192 & 0.179 & 0.165 & \textcolor{red}{0.162} & 0.223 & 0.205 & \textcolor{red}{0.201} \\
        & 336 & 0.195 & 0.179 & \textcolor{red}{0.175} & 0.243 & 0.225 & \textcolor{red}{0.221} \\
        & 720 & 0.197 & 0.178 & \textcolor{red}{0.174} & 0.248 & 0.227 & \textcolor{red}{0.223} \\
        \bottomrule
    \end{tabular}
\end{table}

To directly evaluate uncertainty dynamics rather than only point and distributional accuracy, we further report four diagnostic metrics in Table~\ref{tab:uncertainty_diagnostics}: empirical 95\% coverage, interval sharpness (average 95\% interval width normalized by the target standard deviation; lower is sharper under comparable coverage), QICE, and scale smoothness $\rho_\sigma$ (lag-1 autocorrelation of the predicted scale; higher indicates more temporally coherent volatility estimates). Metric definitions are provided in \suppSectionRef{app:uncertainty_diagnostics}.

\begin{table}[t]
    \centering
    \caption{Uncertainty-dynamics diagnostics for scale head architectures. \textcolor{red}{RED}: THE BEST OR CLOSEST TO 95.}
    \label{tab:uncertainty_diagnostics}
    \scriptsize
    \setlength{\tabcolsep}{2.5pt}
    \resizebox{\columnwidth}{!}{
    \begin{tabular}{c|c|c|cccc}
        \toprule
        Dataset & $H$ & Head & Cov.95 $\uparrow$ & Sharp. $\downarrow$ & QICE $\downarrow$ & $\rho_\sigma \uparrow$ \\
        \midrule
        \multirow{6}{*}{ETTh1}
        & \multirow{3}{*}{96}  & MLP  & 91.2$\pm$0.8 & 1.84$\pm$0.05 & 2.91$\pm$0.18 & 0.61$\pm$0.03 \\
        &                      & LSTM & 93.6$\pm$0.6 & 1.62$\pm$0.04 & 1.94$\pm$0.14 & 0.78$\pm$0.02 \\
        &                      & GRU  & \textcolor{red}{94.4$\pm$0.5} & \textcolor{red}{1.53$\pm$0.03} & \textcolor{red}{1.58$\pm$0.11} & \textcolor{red}{0.84$\pm$0.02} \\
        & \multirow{3}{*}{192} & MLP  & 90.5$\pm$0.9 & 1.91$\pm$0.06 & 3.24$\pm$0.22 & 0.58$\pm$0.04 \\
        &                      & LSTM & 93.1$\pm$0.7 & 1.70$\pm$0.05 & 2.18$\pm$0.16 & 0.76$\pm$0.03 \\
        &                      & GRU  & \textcolor{red}{94.1$\pm$0.6} & \textcolor{red}{1.60$\pm$0.04} & \textcolor{red}{1.71$\pm$0.12} & \textcolor{red}{0.83$\pm$0.02} \\
        \midrule
        \multirow{6}{*}{ETTh2}
        & \multirow{3}{*}{96}  & MLP  & 90.8$\pm$0.7 & 1.76$\pm$0.04 & 2.84$\pm$0.17 & 0.63$\pm$0.03 \\
        &                      & LSTM & 93.9$\pm$0.5 & 1.55$\pm$0.03 & 1.83$\pm$0.13 & 0.80$\pm$0.02 \\
        &                      & GRU  & \textcolor{red}{94.7$\pm$0.4} & \textcolor{red}{1.46$\pm$0.03} & \textcolor{red}{1.42$\pm$0.10} & \textcolor{red}{0.86$\pm$0.01} \\
        & \multirow{3}{*}{192} & MLP  & 89.9$\pm$1.0 & 1.83$\pm$0.05 & 3.09$\pm$0.21 & 0.59$\pm$0.04 \\
        &                      & LSTM & 93.4$\pm$0.6 & 1.61$\pm$0.04 & 2.01$\pm$0.15 & 0.78$\pm$0.03 \\
        &                      & GRU  & \textcolor{red}{94.5$\pm$0.5} & \textcolor{red}{1.50$\pm$0.03} & \textcolor{red}{1.55$\pm$0.11} & \textcolor{red}{0.85$\pm$0.02} \\
        \bottomrule
    \end{tabular}
    }
\end{table}

As shown in Table~\ref{tab:ablation_arch}, the results reveal a clear hierarchy: GRU $>$ LSTM $>$ MLP across all settings.
The performance gap between MLP and GRU widens with longer horizons (e.g., on \textit{ETTh2} at $H=720$, CRPS improves from 0.197 to 0.174, a 11.7\% reduction), supporting the importance of temporal volatility memory for long-horizon uncertainty estimation.
The MLP scale head, despite using the same NLL objective, produces independent per-step variance estimates that lack temporal coherence, leading to noisy and poorly calibrated confidence intervals. The diagnostic results in Table~\ref{tab:uncertainty_diagnostics} make this failure mode explicit: MLP has lower 95\% coverage while producing wider intervals and lower $\rho_\sigma$, whereas GRU maintains near-nominal coverage with sharper intervals and smoother predicted scale dynamics.

The comparison between GRU and LSTM is also informative.
While both recurrent architectures outperform MLP substantially, GRU achieves slightly better performance with fewer parameters (no separate cell state).
One possible explanation is twofold: (i)~the GRU's simpler gating structure may act as an implicit regularizer for the relatively low-dimensional scale estimation task; and (ii)~the GRU's direct hidden state update (without the LSTM's cell state bottleneck) may allow more responsive adaptation to abrupt regime transitions, which aligns with the sharp volatility shifts observed in real-world data.
This architectural choice echoes the GARCH connection discussed in Section~\ref{sec:loc_scale_vol}: just as GARCH's parsimonious two-parameter recursion ($\alpha, \beta$) suffices to capture volatility persistence, the GRU's minimal gating structure provides an efficient inductive bias for temporal volatility modeling.

\textbf{Likelihood-Family Ablation.}
The main VolDy-VAE instantiation adopts a heteroscedastic Gaussian likelihood.
To examine whether this choice is overly restrictive, we also consider three direct decoder extensions under the same encoder, latent dynamics, RevIN module, training split, and hyperparameter budget: (i) a Student-$t$ likelihood with an additional degrees-of-freedom parameter, (ii) a Gaussian-mixture decoder with multiple location-scale components, and (iii) a lightweight conditional normalizing-flow decoder stacked on top of the predicted Gaussian base distribution.
These variants are designed to increase marginal distributional flexibility while keeping the temporal volatility dynamics module unchanged.
Table~\ref{tab:likelihood_family} reports the corresponding likelihood-family ablation on representative long-horizon settings.
These more flexible likelihoods do not outperform the Gaussian VolDy-VAE under the same training budget.
This negative result is informative rather than contradictory to our main claim: the empirical gain of VolDy-VAE comes from modeling temporal uncertainty dynamics, while simply increasing the expressiveness of the per-step marginal likelihood can make optimization harder in finite-data, long-horizon forecasting.
Student-$t$ decoders tend to allocate extreme residuals to heavier tails, which can weaken the inverse-variance attenuation learned by the recurrent scale path.
Mixture and flow decoders are more expressive, but they introduce additional latent component or transformation degrees of freedom; under limited observations per local regime, these degrees of freedom may absorb calibration errors locally without improving the temporal coherence of future uncertainty.
Therefore, we keep the Gaussian location-scale decoder as the default because it provides the best trade-off between calibration, point accuracy, optimization stability, and inference efficiency in the current setting, while leaving richer distribution families as a promising direction for future work.

\begin{table}[t]
    \centering
    \caption{Likelihood-family ablation on representative long-horizon forecasting tasks. Lower is better.}
    \label{tab:likelihood_family}
    \scriptsize
    \setlength{\tabcolsep}{3pt}
    \resizebox{\columnwidth}{!}{
    \begin{tabular}{l|c|cc|cc}
        \toprule
        \multirow{2}{*}{{Likelihood}} & \multirow{2}{*}{{Extra parameters}} & \multicolumn{2}{c|}{{ETTh1, $H=192$}} & \multicolumn{2}{c}{{Weather, $H=192$}} \\
        \cmidrule(lr){3-4} \cmidrule(lr){5-6}
        & & {CRPS} $\downarrow$ & {NMAE} $\downarrow$ & {CRPS} $\downarrow$ & {NMAE} $\downarrow$ \\
        \midrule
        Gaussian (default) & $\mu,\sigma$ & \textcolor{red}{0.277} & \textcolor{red}{0.365} & \textcolor{red}{0.067} & \textcolor{red}{0.080} \\
        Student-$t$ & $\mu,\sigma,\nu$ & 0.286 & 0.372 & 0.071 & 0.084 \\
        Gaussian mixture & $\{\pi_k,\mu_k,\sigma_k\}_{k=1}^{K}$ & 0.291 & 0.377 & 0.074 & 0.087 \\
        Gaussian + flow & $\mu,\sigma,\psi_{\mathrm{flow}}$ & 0.284 & 0.370 & 0.070 & 0.083 \\
        \bottomrule
    \end{tabular}
    }
\end{table}

\subsection{Generalizability Analysis}
\label{sec:generalization}

\begin{table}[t]
    \centering
    \caption{Transferability analysis at $L=96,H=96$ across three datasets. Values are mean$\pm$std over six seeds. $\dagger$ denotes paired Wilcoxon signed-rank significance against the corresponding base model ($p<0.05$). The complete results over $H\in\{96,192,336,720\}$ are reported in \suppTableRef{tab:gen_results_full}.}
    \label{tab:gen_results}
    \scriptsize
    \setlength{\tabcolsep}{4pt}
    \renewcommand{\arraystretch}{1.05}
    \resizebox{\columnwidth}{!}{
    \begin{tabular}{l|c|c|cc|cc}
        \toprule
        Framework & Dataset & $H$ & Base CRPS & VolDy CRPS & Base NMAE & VolDy NMAE \\
        \midrule
        \multirow{3}{*}{TimeGAN}
        & ETTh1    & 96  & 0.277$\pm$0.011 & \textcolor{red}{0.262$\pm$0.009}$^\dagger$ & 0.347$\pm$0.012 & \textcolor{red}{0.339$\pm$0.010}$^\dagger$ \\
        & Weather  & 96  & 0.106$\pm$0.006 & \textcolor{red}{0.068$\pm$0.004}$^\dagger$ & 0.117$\pm$0.007 & \textcolor{red}{0.081$\pm$0.005}$^\dagger$ \\
        & Exchange & 96  & 0.056$\pm$0.004 & \textcolor{red}{0.047$\pm$0.003}$^\dagger$ & 0.071$\pm$0.005 & \textcolor{red}{0.062$\pm$0.004}$^\dagger$ \\
        \midrule
        \multirow{3}{*}{$K^2$VAE}
        & ETTh1    & 96  & 0.264$\pm$0.020 & \textcolor{red}{0.251$\pm$0.017}$^\dagger$ & 0.336$\pm$0.041 & \textcolor{red}{0.332$\pm$0.012} \\
        & Weather  & 96  & 0.080$\pm$0.007 & \textcolor{red}{0.065$\pm$0.004}$^\dagger$ & 0.086$\pm$0.011 & \textcolor{red}{0.076$\pm$0.004}$^\dagger$ \\
        & Exchange & 96  & 0.031$\pm$0.002 & \textcolor{red}{0.029$\pm$0.002}$^\dagger$ & 0.032$\pm$0.002 & \textcolor{red}{0.029$\pm$0.002}$^\dagger$ \\
        \midrule
        \multirow{3}{*}{PatchTST}
        & ETTh1    & 96  & 0.312$\pm$0.036 & \textcolor{red}{0.268$\pm$0.020}$^\dagger$ & 0.407$\pm$0.022 & \textcolor{red}{0.348$\pm$0.018}$^\dagger$ \\
        & Weather  & 96  & 0.131$\pm$0.007 & \textcolor{red}{0.074$\pm$0.006}$^\dagger$ & 0.145$\pm$0.016 & \textcolor{red}{0.085$\pm$0.006}$^\dagger$ \\
        & Exchange & 96  & 0.063$\pm$0.006 & \textcolor{red}{0.061$\pm$0.004}$^\dagger$ & 0.079$\pm$0.002 & \textcolor{red}{0.066$\pm$0.004}$^\dagger$ \\
        \bottomrule
    \end{tabular}
    }
    \renewcommand{\arraystretch}{1.0}
\end{table}

\begin{figure*}[!t]
    \centering
    \includegraphics[width=\textwidth]{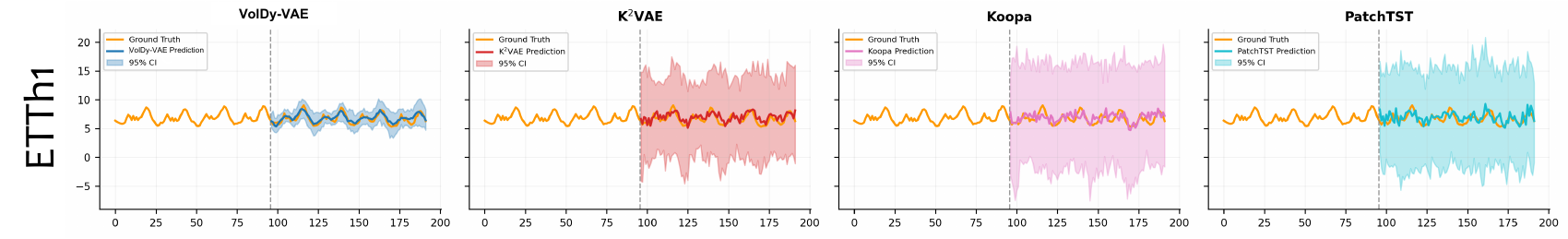}
    \caption{Visual comparison of probabilistic forecasting on \textit{ETTh1}. All subfigures share a unified Y-axis scale. Additional visualization results are provided in \suppShowcaseRefs.
   }
    \label{fig:vis_compare}
\end{figure*}

To examine whether the VolDy principle is tied to the proposed VAE architecture, we apply the same location-scale uncertainty modeling idea to three representative backbones: TimeGAN, $K^2$VAE, and PatchTST.
For TimeGAN, the generator outputs a predictive distribution and is trained with the Gaussian NLL objective; for $K^2$VAE, we replace the original deterministic reconstruction objective with the Location-Scale NLL; for PatchTST, we replace the standard MSE head with a dual-head location-scale decoder.
For each backbone, the base and VolDy-augmented variants use the same data split, horizon, seed set, optimizer, and early-stopping protocol; the only intended change is the location-scale uncertainty component described above. Additional implementation details are summarized in \suppTableRef{tab:plugin_details}.
As summarized in Table~\ref{tab:gen_results}, all three VolDy-augmented variants improve over their corresponding base models at $H=96$; the full horizon-wise results in \suppTableRef{tab:gen_results_full} show the same trend across longer prediction horizons.
These results provide preliminary transferability evidence that temporal uncertainty dynamics can benefit generative, VAE-based, and Transformer-based forecasters, while broader validation across more domains remains future work.

\subsection{Qualitative Analysis}
\label{sec:visualization}

We visualize the forecasting results on \textit{ETTh1} ($L=96, H=96$) in Fig.~\ref{fig:vis_compare}, comparing VolDy-VAE against the $K^2$VAE and advanced point forecasters equipped with probabilistic heads (Koopa, PatchTST).
In this example, VolDy-VAE produces intervals that better follow changes in uncertainty.
The confidence intervals (CIs) generated by VolDy-VAE expand during volatile periods and contract in stable regimes, matching the temporal structure of real-world volatility.
In contrast, $K^2$VAE, Koopa, and PatchTST produce wider and more uniform intervals that do not closely track the temporal rhythm of the data.
This visual evidence suggests that independently estimating per-step variance (as in standard probabilistic heads) or using fixed-variance reconstruction losses can lead to temporally incoherent uncertainty, whereas VolDy-VAE's volatility dynamics module better captures the evolution and persistence of uncertainty regimes.

\subsection{Efficiency Analysis}
\label{sec:efficiency}

\begin{figure}[!t]
    \centering
    \includegraphics[width=0.95\linewidth]{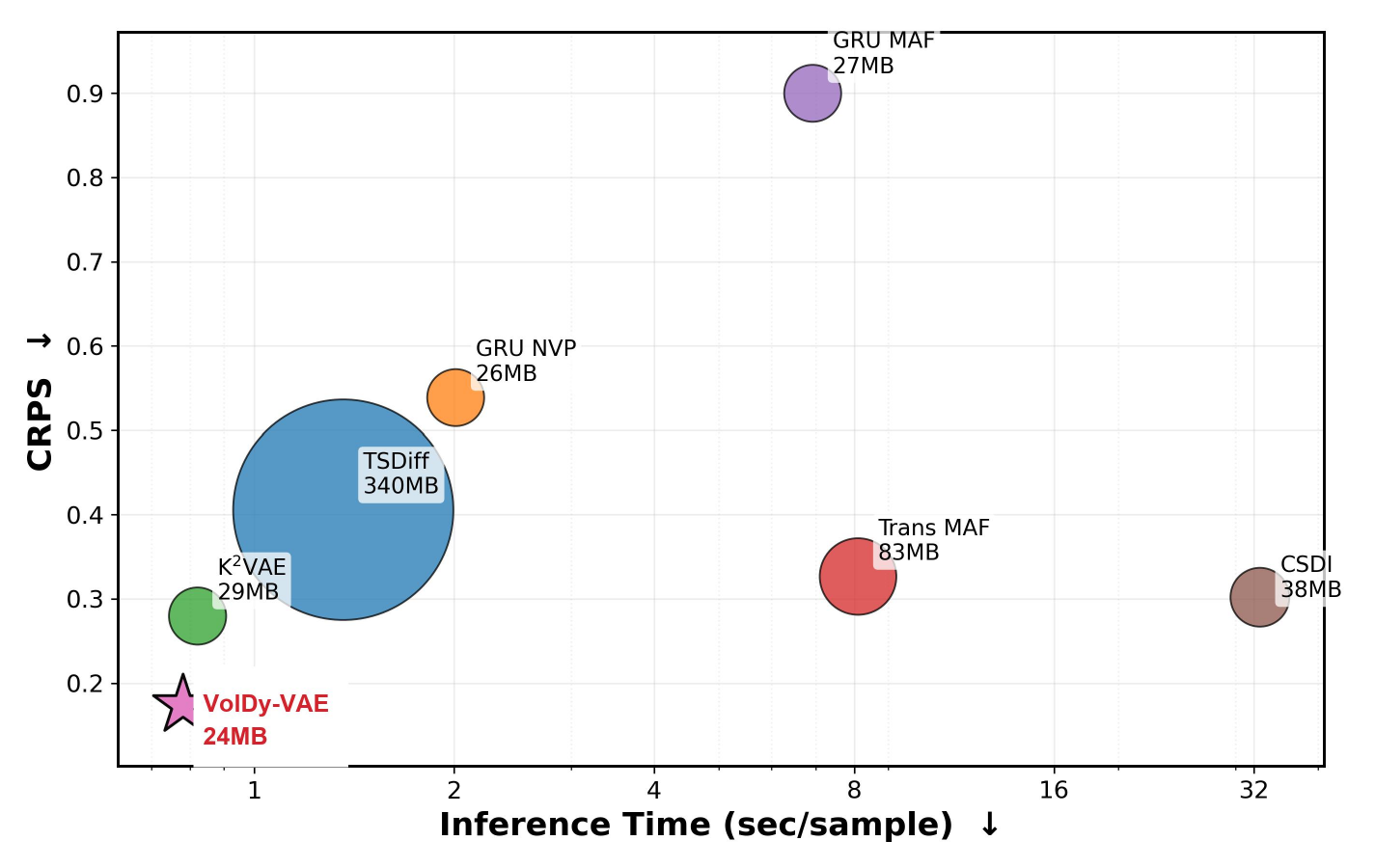}
   \caption{{Efficiency vs. Performance comparison on ETTh2 ($L=96, H=720$).}
The bubble size indicates inference memory usage (smaller is better).}
    \label{fig:efficiency}
\end{figure}

Beyond forecasting performance, computational efficiency matters for real-world deployment.
We evaluate the inference latency (batch size=1) and memory usage on \textit{ETTh2} using a single NVIDIA RTX 4090, with 100 samples generated for probabilistic baselines.
As illustrated in Fig.~\ref{fig:efficiency}, VolDy-VAE achieves a favorable trade-off between accuracy and speed in this setting.
Unlike diffusion models that require hundreds of expensive iterative denoising steps, VolDy-VAE generates the entire future horizon through a direct forecast-generation pass.
This structural advantage results in faster inference and a lower memory footprint in our measurements, which is useful for time-critical streaming applications.
Detailed efficiency measurements and analysis are reported in \suppTableRef{tab:efficiency_details}.

\section{Conclusions}
\label{sec:conclusion}

We identified a gap in existing probabilistic forecasting methods: the failure to model the temporal evolution and persistence of volatility regimes.
To address this, we formalized the principle of temporal uncertainty dynamics and instantiated it in VolDy-VAE, a framework with a recurrent volatility dynamics module that captures how uncertainty evolves across time.
Through a simplified statistical analysis, we showed that the volatility-aware objective reduces to efficient inverse-variance weighting under regime-switching heteroscedasticity when the variance is known or consistently estimated, while MSE-based estimators remain statistically inefficient.
Empirically, VolDy-VAE achieves a strong balance between forecasting accuracy, uncertainty calibration, and computational efficiency across nine benchmarks.
The observed transferability of the VolDy principle across GANs, VAEs, and Transformers suggests that temporal uncertainty modeling can complement different forecasting architectures.

\bibliography{example_paper}
\bibliographystyle{IEEEtran}


\ifdefined\ARXIVVERSION
\else
\begin{IEEEbiography}[{\includegraphics[width=1in,height=1.25in,clip,keepaspectratio]{yijun_wang_id_photo.png}}]{Yijun Wang}
is currently pursuing the Ph.D. degree with the School of Computer Science and Engineering, Southeast University, Nanjing, China. His research interests include machine learning, pattern recognition, and quantum artificial intelligence.
\end{IEEEbiography}

\begin{IEEEbiography}[{\includegraphics[width=1in,height=1.25in,clip,keepaspectratio]{qiyuan_zhuang.jpg}}]{Qiyuan Zhuang}
is currently pursuing the M.Eng. degree with the School of Computer Science and Engineering, Southeast University, Nanjing, China. His research interests include machine learning, pattern recognition, and embodied intelligence.
\end{IEEEbiography}

\begin{IEEEbiography}[{\includegraphics[width=1in,height=1.25in,clip,keepaspectratio]{Larysa_Marchanka.jpg}}]{Larysa Marchanka}
is Dean of the Faculty and Head of the Department of Fundamental and Applied Mathematics at Francisk Skorina Gomel State University, Belarus. She holds the degree of Candidate of Technical Sciences and the academic title of Associate Professor in Computer Science and Computer Engineering. Her research interests include econometrics, data analysis, project management, discrete mathematics, and mathematical logic. She has extensive experience in teaching and research in mathematics, computer science, statistical analysis, and mathematical modeling, and has served as Chair of the Methodological Council of the Faculty of Mathematics and Programming Technologies since 2012.
\end{IEEEbiography}

\begin{IEEEbiography}[{\includegraphics[width=1in,height=1.25in,clip,keepaspectratio]{xiu_shen_wei.jpeg}}]{Xiu-Shen Wei}
(Senior Member, IEEE) received the Ph.D. degree in computer science and technology from Nanjing University. He is currently a Professor with the Key Laboratory of New Generation Artificial Intelligence Technology and Its Interdisciplinary Applications, Southeast University (SEU), Nanjing, China. Before joining SEU, he served as the Founding Director of Megvii Research Nanjing, Megvii Technology, and as a Professor with Nanjing University of Science and Technology.

He has published more than 80 academic papers in top-tier international journals and conferences, such as IEEE TPAMI, IJCV, NeurIPS, CVPR, ICCV, ECCV, and Fundamental Research. He has won more than ten world championships in international authoritative computer vision competitions, including iNaturalist (in association with CVPR 2019), iWildCam (in association with CVPR 2020), SnakeCLEF (in association with CVPR 2022, CVPR 2023, and CVPR 2024), and Apparent Personality Analysis (in association with ECCV 2016).

He was the Program Co-Chair of workshops in association with ICCV, IJCAI, ACM Multimedia, and ACCV, and the primary organizer of fine-grained visual analysis tutorials at CVPR, ICME, and PRICAI. He was selected as one of the World's Top 2\% Scientists (2023 and 2024), and received the CSIG Young Scientist Award (2024), the Wu Wen Jun AI Excellent Young Scientist Award (2022), the Young Elite Scientist Sponsorship Program by CAST (2021), and the Best PC Member Award at CVPR 2017. His research interests include computer vision, machine learning, and embodied intelligence. He has served as an Associate Editor for IEEE TPAMI, IEEE TIP, and IEEE MultiMedia; a Guest Editor for Pattern Recognition; a Tutorial Co-Chair for ACCV 2022; and a Senior PC Member/Area Chair for CVPR, AAAI, IJCAI, ICME, and BMVC.
\end{IEEEbiography}
\fi

\ifdefined\COMBINEDSUPPLEMENT
\clearpage
\setcounter{page}{1}
\twocolumn[{
\begin{center}
{\Large\bfseries Supplementary Material for\\[0.35em]
``Beyond Static Uncertainty: Modeling Temporal Uncertainty Dynamics for Probabilistic Time Series Forecasting''\par}
\vspace{1em}
{\normalsize Yijun Wang, Qiyuan Zhuang, Larysa Marchanka, and Xiu-Shen Wei,~Senior Member,~IEEE\par}
\end{center}
\vspace{1.5em}
}]
\appendices

\section{Theoretical Analysis}
\label{appendix:theory}

In this section, we provide the theoretical derivations supporting the VolDy-VAE framework. We derive the heteroscedastic objective from the variational lower bound, analyze the gradient dynamics behind the adaptive attenuation mechanism, discuss robustness to outliers, and formalize the efficiency advantage under regime-switching heteroscedasticity when the variance is known or consistently estimated.

\subsection{Derivation of the Heteroscedastic ELBO}
\label{app:elbo_derivation}

The standard Variational Autoencoder (VAE) maximizes the Evidence Lower Bound (ELBO) of the marginal likelihood $p_\theta(\boldsymbol{\mathcal{X}})$. For a time series $\boldsymbol{\mathcal{X}}$ and latent variable $\mathbf{Z}$, the ELBO is defined as:
\begin{equation}
    \mathcal{L}_{\text{ELBO}} = \mathbb{E}_{q_\phi(\mathbf{Z}|\boldsymbol{\mathcal{X}})}[\log p_\theta(\boldsymbol{\mathcal{X}}|\mathbf{Z})] - D_{KL}(q_\phi(\mathbf{Z}|\boldsymbol{\mathcal{X}}) \parallel p(\mathbf{Z})).
\end{equation}
We focus on the reconstruction term $\mathcal{L}_{rec} = \mathbb{E}_{q_\phi}[\log p_\theta(\boldsymbol{\mathcal{X}}|\mathbf{Z})]$.

In vanilla VAEs, the decoder assumes a fixed variance $\sigma^2 = c$, implying $p_\theta(\mathbf{x}_t|\mathbf{Z}) = \mathcal{N}(\boldsymbol{\mu}_t, c\mathbf{I})$. This simplifies to the standard MSE loss:
\begin{equation}
    \log p_\theta(\boldsymbol{\mathcal{X}}|\mathbf{Z}) \propto -\sum_{t} \|\mathbf{x}_t - \boldsymbol{\mu}_t\|_2^2 + \text{const}.
\end{equation}

In VolDy-VAE, we relax this assumption to model a {heteroscedastic} Gaussian Likelihood, where both mean $\boldsymbol{\mu}_t$ and variance $\boldsymbol{\sigma}_t^2$ are functions of $\mathbf{Z}$:
\begin{equation}
    p_\theta(\mathbf{x}_t|\mathbf{Z}) = \prod_{c=1}^C \frac{1}{\sqrt{2\pi}\boldsymbol{\sigma}_{t,c}} \exp\left( -\frac{(\mathbf{x}_{t,c} - \boldsymbol{\mu}_{t,c})^2}{2\boldsymbol{\sigma}_{t,c}^2} \right).
\end{equation}
Taking the log-likelihood and omitting constants, with
$\Delta_{t,c}=\mathbf{x}_{t,c}-\boldsymbol{\mu}_{t,c}$:
\begin{equation}
    \log p_\theta(\mathbf{x}_t|\mathbf{Z})
    \propto -\sum_{c=1}^C
    \left(\log \boldsymbol{\sigma}_{t,c}
    + \frac{\Delta_{t,c}^2}{2\boldsymbol{\sigma}_{t,c}^2}\right).
\end{equation}
Maximizing this term is equivalent to minimizing the Negative Log-Likelihood (NLL) defined in Eq.~(10) in the main text. This confirms that our Location-Scale loss is strictly derived from the variational inference framework under a heteroscedastic assumption.

\subsection{Proof of Adaptive Attenuation Mechanism}
\label{app:gradient_analysis}

We formally prove how the learned scale parameter $\boldsymbol{\sigma}$ acts as an adaptive weight to modulate the learning of the location parameter $\boldsymbol{\mu}$.

Consider the loss function for a single data point $x$ (omitting subscripts for brevity):
\begin{equation}
    J(\mu, \sigma) = \frac{1}{2}\log \sigma^2 + \frac{(x - \mu)^2}{2\sigma^2}.
\end{equation}
We analyze the gradient descent dynamics with respect to the network parameters. Let $\theta$ be the parameters of the decoder such that $\mu = f_\theta(z)$ and $\sigma = g_\theta(z)$. By the chain rule, the update to $\theta$ via the location head is proportional to $\frac{\partial J}{\partial \mu}$:
\begin{equation}
    \frac{\partial J}{\partial \mu} = \frac{\partial}{\partial \mu} \left( \frac{(x - \mu)^2}{2\sigma^2} \right) = - \frac{1}{\sigma^2}(x - \mu).
\end{equation}

\textbf{Proposition 1 (Gradient Scaling).} \textit{The gradient magnitude for the location parameter is inversely proportional to the predicted variance $\sigma^2$.}

\textit{Proof.} From the equation above, the effective learning rate for the reconstruction error $(x-\mu)$ is scaled by the factor $\lambda = \frac{1}{\sigma^2}$.
This results in a dynamic adjustment mechanism: in the {Low Uncertainty Regime} ($\sigma \to 0$), where the model is confident, $\lambda \to \infty$, causing the model to aggressively minimize the residual $(x-\mu)$ akin to strict $L_2$ regression.
Conversely, in the {High Uncertainty Regime} ($\sigma \to \infty$), which typically corresponds to noisy data or outliers, $\lambda \to 0$. Consequently, the gradient vanishes, effectively ``ignoring'' such data points during the update of $\boldsymbol{\mu}$.
This mechanism, which we term adaptive attenuation, prevents {transient stochastic fluctuations} or outliers from corrupting the trend estimation (Location Head).

\subsection{Robustness Analysis: Comparison with MSE}
\label{app:robustness}

Here we demonstrate that VolDy-VAE is theoretically more robust to outliers than standard VAEs (MSE).

\textbf{Scenario.} Consider a scenario where the ground truth is subject to a large volatility spike or outlier, such that the residual $r = |x - \mu|$ is very large ($r \to \infty$).

\textbf{Case 1: Fixed Variance (MSE).}
For a standard VAE with fixed $\sigma=1$:
\begin{equation}
    \mathcal{L}_{MSE} \propto (x - \mu)^2.
\end{equation}
The loss grows quadratically. The influence of the outlier on the gradient is linear with respect to the error magnitude, causing the mean $\mu$ to shift significantly towards the outlier.

\textbf{Case 2: Learned Variance (VolDy-VAE).}
The model simultaneously optimizes $\sigma$. Taking the partial derivative of $J(\mu, \sigma)$ w.r.t $\sigma$ and setting to 0 to find the optimal $\sigma^*$ for a given fixed residual $r$:
\begin{equation}
    \frac{\partial J}{\partial \sigma} = \frac{1}{\sigma} - \frac{r^2}{\sigma^3} = 0 \implies \sigma^* = |x - \mu|.
\end{equation}
This implies that for an outlier, the optimal strategy for the model is to increase uncertainty $\sigma$ to match the residual.
Substituting $\sigma^* = |x - \mu|$ back into the loss function $J$:
\begin{align}
    J(\mu, \sigma^*) &= \log |x - \mu| + \frac{(x - \mu)^2}{2(x - \mu)^2} \nonumber \\
    &= \log |x - \mu| + \frac{1}{2}.
\end{align}

In conclusion, under this local optimal-scale analysis, the effective loss transitions from quadratic ($L_2$) to logarithmic as residuals grow.
Since $\lim_{r \to \infty} \log(r) \ll r^2$, the penalty assigned to large residuals is much smaller than that of MSE. This explains why the learned scale can provide robust-regression behavior similar to the Lorentzian loss, while still requiring regularization and empirical calibration to avoid pathological variance inflation.

\subsection{Efficiency under Regime-Switching Heteroscedasticity}
\label{app:consistency_proof}

We now provide a simplified statistical rationale for inverse-variance weighting under temporally structured heteroscedasticity. The statement is intentionally limited to a constant-mean regime-switching model and should be read with the standard condition that the variance is either known or consistently estimated; in the neural model, the recurrent scale head is the mechanism used to approximate this condition rather than a guarantee that the condition always holds.

\begin{theorem}[Efficiency under Regime-Switching Heteroscedasticity]
\label{thm:consistency}
Consider a simplified data generating process with regime-switching volatility $\sigma_t \in \{\sigma_L, \sigma_H\}$ and a constant mean. The MSE-based estimator $\hat{\mu}_{\mathrm{MSE}}$, while unbiased in expectation, is statistically inefficient under heteroscedasticity: its variance grows with the high-volatility regime, and in finite-sample gradient-based training, its updates are strongly influenced by high-volatility observations. In contrast, if $\sigma_t$ is known or consistently estimated, the NLL-based estimator $\hat{\mu}_{\mathrm{NLL}}$ implements inverse-variance weighting and recovers the corresponding weighted least-squares estimator, with variance reduction increasing as the heteroscedasticity ratio grows.
\end{theorem}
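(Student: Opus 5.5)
I will first fix a concrete model in which the statement can be checked by direct computation. Take observations $y_t = \mu + \epsilon_t$, $t=1,\dots,T$, with independent $\epsilon_t \sim \mathcal{N}(0,\sigma_t^2)$ and $\sigma_t \in \{\sigma_L,\sigma_H\}$, $\sigma_L<\sigma_H$. Let $T_L$ and $T_H$ be the numbers of low- and high-volatility steps, and write $\pi = T_H/T$ and $\kappa = \sigma_H^2/\sigma_L^2$ for the heteroscedasticity ratio. The regime path is treated as fixed, or we condition on it; this is the only place the temporal structure enters. Under this model the MSE objective $\sum_t (y_t-\mu)^2$ is minimized by the sample mean $\hat{\mu}_{\mathrm{MSE}} = \bar y$. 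The NLL objective of Eq.~(10), with $\sigma_t$ held at its true value, is minimized in $\mu$ by setting the gradient from Proposition~1 to zero, namely $\sum_t (y_t-\mu)/\sigma_t^2 = 0$. This gives $\hat{\mu}_{\mathrm{NLL}} = \sum_t w_t y_t / \sum_t w_t$ with $w_t = \sigma_t^{-2}$, which is exactly the weighted least-squares (inverse-variance) estimator.

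The key steps, in order, are as follows.
\begin{enumerate}
\item \textbf{Unbiasedness.} Both estimators are linear with weights summing to one, so both are unbiased.
\item \textbf{Variances.} For the MSE estimator, $\mathrm{Var}(\hat{\mu}_{\mathrm{MSE}}) = T^{-2}(T_L\sigma_L^2 + T_H\sigma_H^2) = \sigma_L^2\,(1-\pi+\pi\kappa)/T$. This grows linearly in $\sigma_H^2$, which is the claim that its variance grows with the high-volatility regime. For the weighted estimator, $\mathrm{Var}(\hat{\mu}_{\mathrm{NLL}}) = (\sum_t \sigma_t^{-2})^{-1} = \sigma_L^2/\bigl(T(1-\pi+\pi/\kappa)\bigr)$.
\item \textbf{Efficiency.} The weighted estimator is BLUE by the Gauss--Markov theorem for a known diagonal covariance; equivalently, $(\sum_t \sigma_t^{-2})^{-1}$ is the Cramér--Rao bound. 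Hence $\hat{\mu}_{\mathrm{MSE}}$ is inefficient whenever $\kappa\neq1$ and $0<\pi<1$.
\item \textbf{Variance reduction.} The relative efficiency is $\mathrm{Var}(\hat{\mu}_{\mathrm{MSE}})/\mathrm{Var}(\hat{\mu}_{\mathrm{NLL}}) = (1-\pi+\pi\kappa)(1-\pi+\pi/\kappa)$. Differentiating in $\kappa$ gives $\pi(1-\pi)(1-\kappa^{-2}) > 0$ for $\kappa>1$. So the ratio is increasing, equals $1$ at $\kappa=1$, and diverges as $\kappa\to\infty$; this is the claimed variance reduction increasing with the heteroscedasticity ratio.
\item \textbf{Gradient influence.} Write the stochastic gradient of the MSE loss at a point as $-(y_t-\mu)$. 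Its conditional variance is $\sigma_t^2$, so each high-regime sample contributes $\kappa$ times the gradient noise of a low-regime sample. The high-regime share of the total gradient variance is therefore $\pi\kappa/(1-\pi+\pi\kappa)$, which tends to $1$. For the NLL loss the per-sample gradient is $-(y_t-\mu)/\sigma_t^2$; its variance $\sigma_t^{-2}$ is smaller in the high regime, so the updates are dominated by informative low-volatility observations. This step formalizes the ``strongly influenced'' clause.
\end{enumerate}

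The main obstacle is the clause \emph{known or consistently estimated}. Plugging in estimated weights $\hat w_t = \hat\sigma_t^{-2}$ makes the estimator nonlinear, so exact Gauss--Markov no longer applies. I would handle this with a standard feasible-GLS argument. Suppose $\sup_t|\hat\sigma_t^2-\sigma_t^2| \to 0$ in probability, and suppose the $\hat\sigma_t$ are independent of the $\epsilon_t$ (for instance, estimated from the look-back window, or by sample splitting). Then write $\hat{\mu}_{\mathrm{FNLL}}-\mu = \sum_t \hat w_t\epsilon_t/\sum_t \hat w_t$. Comparing this with the oracle gives $\sqrt{T}(\hat{\mu}_{\mathrm{FNLL}}-\hat{\mu}_{\mathrm{NLL}}) \to 0$ in probability, because the weight perturbation multiplies an $O_p(T^{-1/2})$ noise sum. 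The feasible estimator is thus asymptotically equivalent to the oracle WLS estimator and inherits its asymptotic variance and the efficiency ratio above. I would state the result for the feasible estimator only asymptotically, consistent with the paper's framing that the recurrent scale head approximates this condition rather than guaranteeing it.
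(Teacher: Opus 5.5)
Your proposal is correct and takes essentially the same route as the paper's proof: the sample mean against the inverse-variance-weighted mean, their variances, and the efficiency ratio $(1-\pi+\pi\kappa)(1-\pi+\pi/\kappa)\ge 1$, which is the paper's Part~3 expression with empirical rather than stationary regime frequencies. Where you go further, the additions are valid refinements rather than a different argument: using empirical regime fractions so both variance formulas hold exactly, proving monotonicity in $\kappa$ by differentiation where the paper only asserts $\mathcal{O}(\sigma_H^2/\sigma_L^2)$ growth after Cauchy--Schwarz, quantifying the gradient-influence clause through variance shares, and giving a feasible-GLS equivalence argument for the ``consistently estimated'' case, which the paper merely asserts.
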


\textit{Proof.}

\textbf{Setup.}
Consider a data generating process with regime-switching volatility. Let the observations follow:
\begin{equation}
    x_t = \mu^* + \epsilon_t, \quad \epsilon_t \sim \mathcal{N}(0, \sigma_{s_t}^2),
\end{equation}
where $\mu^*$ is the true underlying trend, $s_t \in \{L, H\}$ indicates the volatility regime at time $t$, with $\sigma_L < \sigma_H$ representing low and high volatility states respectively. Let $\pi_L$ and $\pi_H = 1 - \pi_L$ denote the stationary probabilities of each regime.

\textbf{Part 1: MSE Estimator is Inefficient under Regime Shifts.}

The MSE objective treats all time steps equally:
\begin{equation}
    \hat{\mu}_{\mathrm{MSE}} = \arg\min_\mu \sum_{t=1}^T (x_t - \mu)^2 = \frac{1}{T}\sum_{t=1}^T x_t.
\end{equation}

While $\hat{\mu}_{\mathrm{MSE}}$ is unbiased in expectation ($\mathbb{E}[\hat{\mu}_{\mathrm{MSE}}] = \mu^*$), it is statistically inefficient under heteroscedasticity. Its variance is:
\begin{equation}
    \mathrm{Var}(\hat{\mu}_{\mathrm{MSE}}) = \frac{1}{T^2} \sum_t \sigma_{s_t}^2 = \frac{\pi_L \sigma_L^2 + \pi_H \sigma_H^2}{T}.
\end{equation}

When the MSE estimator is applied to finite-sample forecasting with structured temporal regimes, the gradient update is dominated by high-volatility observations. For a batch $\mathcal{B}$ containing both regimes, the gradient is:
\begin{equation}
    \nabla_\mu \mathcal{L}_{\mathrm{MSE}} = -\frac{2}{|\mathcal{B}|} \sum_{t \in \mathcal{B}} (x_t - \mu).
\end{equation}

Under regime switching, high-volatility samples contribute disproportionately large gradients (since $|x_t - \mu| \sim \mathcal{O}(\sigma_H)$). In finite-batch optimization, this increases gradient variance and can pull updates toward transient fluctuations, even though the population estimator remains unbiased.

\textbf{Part 2: NLL Estimator Yields Optimal Weighted Estimation.}

When the variance is known or consistently estimated, the NLL objective implements inverse-variance weighting. At convergence, the corresponding estimator solves:
\begin{equation}
    \hat{\mu}_{\mathrm{NLL}} = \arg\min_\mu \sum_{t=1}^T \left( \log \hat{\sigma}_t + \frac{(x_t - \mu)^2}{2\hat{\sigma}_t^2} \right).
\end{equation}

Taking the derivative w.r.t.\ $\mu$ and setting to zero:
\begin{equation}
    \sum_{t=1}^T \frac{1}{\hat{\sigma}_t^2}(x_t - \hat{\mu}_{\mathrm{NLL}}) = 0 \implies \hat{\mu}_{\mathrm{NLL}} = \frac{\sum_t \hat{\sigma}_t^{-2} x_t}{\sum_t \hat{\sigma}_t^{-2}}.
\end{equation}

This is the standard weighted least-squares estimator under heteroscedasticity, assigning weights $w_t \propto 1/\sigma_t^2$. If $\hat{\sigma}_t$ converges to the true $\sigma_{s_t}$, the estimator approaches the BLUE and achieves the corresponding heteroscedastic efficiency bound:
\begin{equation}
    \mathrm{Var}(\hat{\mu}_{\mathrm{NLL}}) = \frac{1}{\sum_t \sigma_{s_t}^{-2}} = \frac{1}{T} \cdot \frac{1}{\pi_L \sigma_L^{-2} + \pi_H \sigma_H^{-2}}.
\end{equation}

\textbf{Part 3: Efficiency Gain.}

The relative efficiency of the NLL estimator over MSE is:
\begin{equation}
    \frac{\mathrm{Var}(\hat{\mu}_{\mathrm{MSE}})}{\mathrm{Var}(\hat{\mu}_{\mathrm{NLL}})} = (\pi_L \sigma_L^2 + \pi_H \sigma_H^2)(\pi_L \sigma_L^{-2} + \pi_H \sigma_H^{-2}) \geq 1,
\end{equation}
with equality if and only if $\sigma_L = \sigma_H$ (i.e., the homoscedastic case). By the Cauchy--Schwarz inequality, this ratio grows as $\mathcal{O}(\sigma_H^2 / \sigma_L^2)$, showing that the advantage of the NLL estimator increases with the severity of heteroscedasticity.

This completes the analysis. The theorem does not prove global optimality of the full VolDy-VAE model. Instead, it shows that, in a simplified heteroscedastic setting, accurate scale estimates turn the NLL objective into inverse-variance weighting, and the potential statistical gain grows with the temporal heterogeneity of the data. \hfill $\square$

\section{Experimental Details}

\subsection{Baselines}
\label{appendix:baselines}
We compare VolDy-VAE against {12 competitive baselines} under the ProbTS/$K^2$VAE protocol. This unified setting follows the long-horizon benchmark used by ProbTS and $K^2$VAE, while additionally including NsDiff as a diffusion-based baseline for reference.

The first group consists of {Deep Point Forecasters} equipped with probabilistic heads. Since standard point forecasters do not naturally output distributions, we augment them with a Gaussian head to predict mean and variance, trained via NLL loss. This category includes {FITS}~\cite{Xu2023FITS}, a frequency-domain interpolation based model; {PatchTST}~\cite{Nie2023PatchTST}, a state-of-the-art transformer utilizing patching and channel-independence; {iTransformer}~\cite{Liu2024iTransformer}, which embeds the entire time series as a token; and {Koopa}~\cite{Liu2023Koopa}, a dynamics-aware model based on Koopman operator theory.

The second group comprises {Generative and Probabilistic Models}, covering VAE, flow, and diffusion-based architectures. Specifically, we compare against the VAE-based {$K^2$VAE}~\cite{Wu2025K2VAE}, the previous SOTA method that disentangles key and key-response series. For flow-based methods, we include {TimeGrad}~\cite{Rasul2021TimeGrad}, an autoregressive model using normalizing flows; {GRU NVP} and {GRU MAF}, which combine RNNs with flow-based density estimation; and {Trans-MAF}~\cite{Trans-MAF}, a transformer-based flow model. Regarding diffusion-based approaches, we select {CSDI}~\cite{Tashiro2021CSDI}, a conditional score-based diffusion model; {NsDiff}~\cite{Ye2025NsDiff}, a non-stationary diffusion model; and {TSDiff}~\cite{Kollovieh2023TSDiff}, a soft-diffusion approach.
\subsection{Evaluation Metrics}
\label{appendix:metrics}
We adopt three primary metrics to evaluate forecasting performance and several diagnostic metrics to analyze uncertainty dynamics. Unless otherwise stated, distributional metrics are computed from 100 Monte Carlo samples and reported as mean$\pm$std over six random seeds.

\paragraph{Continuous Ranked Probability Score (CRPS).}
CRPS measures the compatibility between the predicted distribution and the observation. It is a strictly proper scoring rule defined as:
\begin{align}
\mathrm{CRPS}(F, x) = \int_{\mathbb{R}} \left( F(z) - \mathds{I}\{x \le z\} \right)^2 \, dz,
\end{align}
where $F$ is the predictive Cumulative Distribution Function (CDF) and $x$ is the ground truth. We approximate CRPS using 100 Monte Carlo samples.

\paragraph{Normalized Mean Absolute Error (NMAE).}
NMAE assesses the accuracy of the point forecast (median or mean of the samples). It is scale-invariant, facilitating comparison across datasets:
\begin{align}
\mathrm{NMAE} = \frac{\sum_{t, k} |x_{t,k} - \hat{x}_{t,k}|}{\sum_{t, k} |x_{t,k}|}.
\end{align}

\paragraph{Quantile Interval Calibration Error (QICE).}
QICE strictly evaluates the reliability of the confidence intervals. It measures the mean absolute deviation between the empirical coverage $\hat{c}(q)$ and the nominal quantile level $q$:
\begin{align}
\mathrm{QICE} = 100 \times \frac{1}{M} \sum_{m=1}^{M} |\hat{c}(q_m) - q_m|,
\end{align}
where we use quantile levels $q \in \{0.1, 0.2, \dots, 0.9\}$. We report QICE in percentage points, and lower QICE indicates better calibration.

\paragraph{Empirical 95\% Coverage.}
For a test set with $B$ forecast windows, horizon length $T$, and $C$ variables, let $\ell^{95}_{i,t,c}$ and $u^{95}_{i,t,c}$ denote the empirical 2.5\% and 97.5\% predictive quantiles for instance $i$, time step $t$, and variable $c$. The empirical 95\% coverage is:
\begin{align}
\mathrm{Cov}_{95}
= 100 \times \frac{1}{BTC}
\sum_{i=1}^{B}\sum_{t=1}^{T}\sum_{c=1}^{C}
\mathds{I}\{\ell^{95}_{i,t,c} \le x_{i,t,c} \le u^{95}_{i,t,c}\}.
\end{align}
Values closer to 95 indicate better interval calibration.

\paragraph{Normalized Sharpness.}
Sharpness measures the average width of the 95\% predictive interval. To make widths comparable across variables and datasets, we normalize by the standard deviation $s_c$ of each target variable on the test split:
\begin{align}
\mathrm{Sharp}_{95}
= \frac{1}{BTC}
\sum_{i=1}^{B}\sum_{t=1}^{T}\sum_{c=1}^{C}
\frac{u^{95}_{i,t,c}-\ell^{95}_{i,t,c}}{s_c+\epsilon},
\end{align}
where $\epsilon$ is a small constant for numerical stability. Lower sharpness indicates narrower intervals, but it should be interpreted together with coverage and QICE; an overconfident model may obtain low sharpness with poor coverage.

\paragraph{Scale Smoothness.}
To measure whether the predicted uncertainty evolves coherently over time, we compute the lag-1 autocorrelation of the predicted scale series. Let $\hat{\sigma}_{i,t,c}$ be the predicted scale for forecast window $i$, time step $t$, and variable $c$. Scale smoothness is:
\begin{align}
\rho_{\sigma}
= \frac{1}{BC}\sum_{i=1}^{B}\sum_{c=1}^{C}
\operatorname{Corr}\left(
\hat{\sigma}_{i,2:T,c}, \hat{\sigma}_{i,1:T-1,c}
\right).
\end{align}
A larger $\rho_{\sigma}$ indicates stronger temporal coherence in the estimated volatility. Extremely large values, however, may also indicate excessive smoothing, so this metric should be interpreted together with calibration and sharpness.

\paragraph{Significance Testing.}
For transferability experiments, we compare each base model with its VolDy-augmented variant using paired seed-level scores. We apply the two-sided Wilcoxon signed-rank test and mark improvements with $\dagger$ when $p<0.05$.

\subsection{Implementation Details}
\label{appendix:implementation}

\paragraph{Model Architecture}
The VolDy-VAE is built upon a standard encoder-decoder architecture.
To ensure a fair comparison with the primary baseline $K^2$VAE~\cite{Wu2025K2VAE}, we adopt the same patching strategy (patch length $P=24$) and a lightweight MLP-based encoder.
The core innovation lies in the {Location-Scale Decoder with Volatility Dynamics}. Unlike standard VAEs that output a single reconstruction vector, our decoder features a dual-head structure with a recurrent volatility module.
Specifically, the decoder consists of: a {Location Head}, which utilizes a linear projection layer to output the location parameter $\boldsymbol{\mu}_t$; and a {Scale Head with Volatility Dynamics}, which employs a single-layer GRU to maintain a volatility hidden state $\mathbf{h}_t^{\mathrm{vol}}$, followed by a linear layer and \texttt{Softplus} activation to output the scale parameter $\boldsymbol{\sigma}_t$.
The GRU hidden dimension is set equal to the latent dimension $D$, adding minimal computational overhead.
This design captures the temporal evolution and persistence of volatility regimes while maximizing the heteroscedastic Gaussian Likelihood $\mathcal{L}_{\text{NLL}}$ as detailed in Eq.~(10) in the main text.

\paragraph{Training Protocol}
We implement VolDy-VAE using PyTorch~\cite{paszke2019pytorch}.
All experiments are conducted on a single {NVIDIA GeForce RTX 4090 GPU} (24GB).
The model is optimized using the {Adam} optimizer with an initial learning rate of $10^{-3}$ (dataset-specific adjustments are made within the range $[10^{-4}, 10^{-3}]$).
We use a batch size of 32 by default.
The training objective is the ELBO, comprising the Location-Scale NLL loss (reconstruction \& prediction) and the KL divergence term.
We train the model for a maximum of 50 epochs with an early stopping strategy (patience=10).
To facilitate reproducibility, the source code and configuration files are available at \href{https://github.com/wangyijunlyy/VolDy-VAE}{\texttt{wangyijunlyy/VolDy-VAE}}.

For the plug-in transferability experiments, each base/VolDy pair uses the same data split, forecasting horizon, six-seed evaluation protocol, optimizer family, and early-stopping rule. Table~\ref{tab:plugin_details} summarizes the intended change; it does not claim identical parameter counts across all backbones.

\begin{table}[!t]
    \centering
    \caption{Implementation controls for plug-in transferability experiments.}
    \label{tab:plugin_details}
    \footnotesize
    \setlength{\tabcolsep}{2.5pt}
    \begin{tabular}{p{0.18\columnwidth}|p{0.34\columnwidth}|p{0.38\columnwidth}}
        \toprule
        Backbone & Controlled setting & VolDy modification \\
        \midrule
        TimeGAN & Same split, horizon, seeds, optimizer family, and early stopping as its base run. & Extend generator output to location/scale and add Gaussian NLL supervision. \\
        $K^2$VAE & Same backbone and forecasting protocol as the base $K^2$VAE run. & Replace deterministic objective with Location-Scale NLL. \\
        PatchTST & Same split, horizon, seeds, optimizer family, and early stopping as its base run. & Replace the point head with a dual-head location-scale decoder trained by Gaussian NLL. \\
        \bottomrule
    \end{tabular}
\end{table}

The full horizon-wise transferability results referenced in the main text are reported in Table~\ref{tab:gen_results_full}.

\begin{table}[!t]
    \centering
    \caption{Full transferability analysis over four prediction horizons ($L=96$). Values are mean$\pm$std over six seeds. $\dagger$ denotes paired Wilcoxon signed-rank significance against the corresponding base model ($p<0.05$).}
    \label{tab:gen_results_full}
    \scriptsize
    \setlength{\tabcolsep}{2pt}
    \renewcommand{\arraystretch}{1.0}
    \resizebox{\columnwidth}{!}{
    \begin{tabular}{l|c|c|cc|cc}
        \toprule
        Framework & Dataset & $H$ & Base CRPS & VolDy CRPS & Base NMAE & VolDy NMAE \\
        \midrule
        \multirow{12}{*}{TimeGAN}
        & ETTh1 & 96  & 0.277$\pm$0.011 & \textcolor{red}{0.262$\pm$0.009}$^\dagger$ & 0.347$\pm$0.012 & \textcolor{red}{0.339$\pm$0.010}$^\dagger$ \\
        & ETTh1 & 192 & 0.306$\pm$0.014 & \textcolor{red}{0.286$\pm$0.011}$^\dagger$ & 0.382$\pm$0.015 & \textcolor{red}{0.368$\pm$0.012}$^\dagger$ \\
        & ETTh1 & 336 & 0.334$\pm$0.017 & \textcolor{red}{0.309$\pm$0.014}$^\dagger$ & 0.421$\pm$0.019 & \textcolor{red}{0.397$\pm$0.016}$^\dagger$ \\
        & ETTh1 & 720 & 0.381$\pm$0.021 & \textcolor{red}{0.348$\pm$0.018}$^\dagger$ & 0.469$\pm$0.024 & \textcolor{red}{0.436$\pm$0.021}$^\dagger$ \\
        & Weather & 96  & 0.106$\pm$0.006 & \textcolor{red}{0.068$\pm$0.004}$^\dagger$ & 0.117$\pm$0.007 & \textcolor{red}{0.081$\pm$0.005}$^\dagger$ \\
        & Weather & 192 & 0.118$\pm$0.007 & \textcolor{red}{0.076$\pm$0.005}$^\dagger$ & 0.129$\pm$0.008 & \textcolor{red}{0.089$\pm$0.006}$^\dagger$ \\
        & Weather & 336 & 0.129$\pm$0.008 & \textcolor{red}{0.084$\pm$0.006}$^\dagger$ & 0.141$\pm$0.009 & \textcolor{red}{0.098$\pm$0.007}$^\dagger$ \\
        & Weather & 720 & 0.145$\pm$0.010 & \textcolor{red}{0.097$\pm$0.007}$^\dagger$ & 0.158$\pm$0.011 & \textcolor{red}{0.112$\pm$0.008}$^\dagger$ \\
        & Exchange & 96  & 0.056$\pm$0.004 & \textcolor{red}{0.047$\pm$0.003}$^\dagger$ & 0.071$\pm$0.005 & \textcolor{red}{0.062$\pm$0.004}$^\dagger$ \\
        & Exchange & 192 & 0.069$\pm$0.005 & \textcolor{red}{0.058$\pm$0.004}$^\dagger$ & 0.087$\pm$0.006 & \textcolor{red}{0.075$\pm$0.005}$^\dagger$ \\
        & Exchange & 336 & 0.083$\pm$0.006 & \textcolor{red}{0.071$\pm$0.005}$^\dagger$ & 0.104$\pm$0.007 & \textcolor{red}{0.091$\pm$0.006}$^\dagger$ \\
        & Exchange & 720 & 0.101$\pm$0.008 & \textcolor{red}{0.087$\pm$0.006}$^\dagger$ & 0.126$\pm$0.009 & \textcolor{red}{0.111$\pm$0.007}$^\dagger$ \\
        \midrule
        \multirow{12}{*}{$K^2$VAE}
        & ETTh1 & 96  & 0.264$\pm$0.020 & \textcolor{red}{0.251$\pm$0.017}$^\dagger$ & 0.336$\pm$0.041 & \textcolor{red}{0.332$\pm$0.012} \\
        & ETTh1 & 192 & 0.290$\pm$0.016 & \textcolor{red}{0.274$\pm$0.014}$^\dagger$ & 0.372$\pm$0.023 & \textcolor{red}{0.361$\pm$0.018}$^\dagger$ \\
        & ETTh1 & 336 & 0.308$\pm$0.021 & \textcolor{red}{0.298$\pm$0.016}$^\dagger$ & 0.394$\pm$0.022 & \textcolor{red}{0.388$\pm$0.020}$^\dagger$ \\
        & ETTh1 & 720 & 0.314$\pm$0.011 & \textcolor{red}{0.310$\pm$0.012} & 0.396$\pm$0.012 & \textcolor{red}{0.389$\pm$0.013}$^\dagger$ \\
        & Weather & 96  & 0.080$\pm$0.007 & \textcolor{red}{0.065$\pm$0.004}$^\dagger$ & 0.086$\pm$0.011 & \textcolor{red}{0.076$\pm$0.004}$^\dagger$ \\
        & Weather & 192 & 0.079$\pm$0.009 & \textcolor{red}{0.067$\pm$0.006}$^\dagger$ & 0.083$\pm$0.011 & \textcolor{red}{0.074$\pm$0.007}$^\dagger$ \\
        & Weather & 336 & 0.082$\pm$0.010 & \textcolor{red}{0.073$\pm$0.007}$^\dagger$ & 0.093$\pm$0.010 & \textcolor{red}{0.080$\pm$0.007}$^\dagger$ \\
        & Weather & 720 & 0.084$\pm$0.003 & \textcolor{red}{0.081$\pm$0.008} & 0.099$\pm$0.009 & \textcolor{red}{0.091$\pm$0.008}$^\dagger$ \\
        & Exchange & 96  & 0.031$\pm$0.002 & \textcolor{red}{0.029$\pm$0.002}$^\dagger$ & 0.032$\pm$0.002 & \textcolor{red}{0.029$\pm$0.002}$^\dagger$ \\
        & Exchange & 192 & 0.032$\pm$0.010 & \textcolor{red}{0.030$\pm$0.006} & 0.040$\pm$0.005 & \textcolor{red}{0.037$\pm$0.004} \\
        & Exchange & 336 & 0.048$\pm$0.004 & \textcolor{red}{0.045$\pm$0.005} & 0.054$\pm$0.001 & \textcolor{red}{0.050$\pm$0.004}$^\dagger$ \\
        & Exchange & 720 & 0.069$\pm$0.005 & \textcolor{red}{0.062$\pm$0.006}$^\dagger$ & 0.084$\pm$0.017 & \textcolor{red}{0.077$\pm$0.012} \\
        \midrule
        \multirow{12}{*}{PatchTST}
        & ETTh1 & 96  & 0.312$\pm$0.036 & \textcolor{red}{0.268$\pm$0.020}$^\dagger$ & 0.407$\pm$0.022 & \textcolor{red}{0.348$\pm$0.018}$^\dagger$ \\
        & ETTh1 & 192 & 0.313$\pm$0.034 & \textcolor{red}{0.282$\pm$0.021}$^\dagger$ & 0.405$\pm$0.088 & \textcolor{red}{0.372$\pm$0.026}$^\dagger$ \\
        & ETTh1 & 336 & 0.319$\pm$0.035 & \textcolor{red}{0.306$\pm$0.024}$^\dagger$ & 0.412$\pm$0.024 & \textcolor{red}{0.397$\pm$0.029}$^\dagger$ \\
        & ETTh1 & 720 & 0.323$\pm$0.020 & \textcolor{red}{0.315$\pm$0.024} & 0.428$\pm$0.024 & \textcolor{red}{0.407$\pm$0.030}$^\dagger$ \\
        & Weather & 96  & 0.131$\pm$0.007 & \textcolor{red}{0.074$\pm$0.006}$^\dagger$ & 0.145$\pm$0.016 & \textcolor{red}{0.085$\pm$0.006}$^\dagger$ \\
        & Weather & 192 & 0.131$\pm$0.014 & \textcolor{red}{0.081$\pm$0.007}$^\dagger$ & 0.144$\pm$0.012 & \textcolor{red}{0.091$\pm$0.007}$^\dagger$ \\
        & Weather & 336 & 0.137$\pm$0.008 & \textcolor{red}{0.091$\pm$0.008}$^\dagger$ & 0.149$\pm$0.023 & \textcolor{red}{0.102$\pm$0.008}$^\dagger$ \\
        & Weather & 720 & 0.142$\pm$0.005 & \textcolor{red}{0.105$\pm$0.010}$^\dagger$ & 0.152$\pm$0.029 & \textcolor{red}{0.118$\pm$0.010}$^\dagger$ \\
        & Exchange & 96  & 0.063$\pm$0.006 & \textcolor{red}{0.061$\pm$0.004}$^\dagger$ & 0.079$\pm$0.002 & \textcolor{red}{0.066$\pm$0.004}$^\dagger$ \\
        & Exchange & 192 & 0.067$\pm$0.008 & \textcolor{red}{0.066$\pm$0.005} & 0.081$\pm$0.002 & \textcolor{red}{0.071$\pm$0.005}$^\dagger$ \\
        & Exchange & 336 & 0.071$\pm$0.017 & \textcolor{red}{0.068$\pm$0.006} & 0.085$\pm$0.010 & \textcolor{red}{0.079$\pm$0.006} \\
        & Exchange & 720 & 0.097$\pm$0.007 & \textcolor{red}{0.088$\pm$0.007}$^\dagger$ & 0.126$\pm$0.001 & \textcolor{red}{0.094$\pm$0.007}$^\dagger$ \\
        \bottomrule
    \end{tabular}}
    \renewcommand{\arraystretch}{1.0}
\end{table}

\subsection{Detailed Efficiency Analysis}
\label{app:efficiency_details}

Following the reporting style of the efficiency analysis in $K^2$VAE~\cite{Wu2025K2VAE}, Table~\ref{tab:efficiency_details} expands the main-text efficiency plot into a model-by-metric table.
The measurements are conducted on \textit{ETTh2} with $L=96$ and $H=720$ using batch size 1 on a single NVIDIA RTX 4090 GPU.
For sample-based probabilistic baselines, we generate 100 samples during inference.

\begin{table}[!t]
    \centering
    \caption{Comparison on model efficiency and forecasting quality on \textit{ETTh2} ($L=96,H=720$). Lower values of CRPS, NMAE, inference speed (sec/sample), and memory (MB) indicate better performance or efficiency. The efficiency measurements are obtained with batch size 1. \textcolor{red}{RED}: THE BEST, \textcolor{blue}{\underline{BLUE}}: THE 2ND BEST.}
    \label{tab:efficiency_details}
    \scriptsize
    \setlength{\tabcolsep}{2.5pt}
    \renewcommand{\arraystretch}{1.05}
    \resizebox{\columnwidth}{!}{
    \begin{tabular}{c|cccc}
        \toprule
        Model & CRPS $\downarrow$ & NMAE $\downarrow$ & Speed $\downarrow$ & Memory $\downarrow$ \\
        \midrule
        TSDiff & $0.406_{\pm.056}$ & $0.482_{\pm.022}$ & $1.35$ & $340$ \\
        GRU NVP & $0.539_{\pm.090}$ & $0.688_{\pm.161}$ & $2.00$ & $\textcolor{blue}{\underline{26}}$ \\
        GRU MAF & $0.990_{\pm.023}$ & $1.092_{\pm.019}$ & $7.10$ & $27$ \\
        Trans MAF & $0.327_{\pm.033}$ & $0.412_{\pm.020}$ & $8.05$ & $83$ \\
        CSDI & $0.302_{\pm.040}$ & $0.382_{\pm.030}$ & $32.50$ & $38$ \\
        $K^2$VAE & $\textcolor{blue}{\underline{0.280}_{\pm.014}}$ & $\textcolor{blue}{\underline{0.278}_{\pm.020}}$ & $\textcolor{blue}{\underline{0.72}}$ & $29$ \\
        VolDy-VAE & $\textcolor{red}{0.174_{\pm.010}}$ & $\textcolor{red}{0.223_{\pm.015}}$ & $\textcolor{red}{0.62}$ & $\textcolor{red}{24}$ \\
        \bottomrule
    \end{tabular}}
    \renewcommand{\arraystretch}{1.0}
\end{table}

Table~\ref{tab:efficiency_details} shows that VolDy-VAE retains the one-pass inference advantage of VAE-style forecasters while improving both probabilistic and point forecasting accuracy.
Compared with $K^2$VAE, VolDy-VAE reduces CRPS from $0.280$ to $0.174$ and NMAE from $0.278$ to $0.223$, while also using slightly lower latency and memory.
The gain is more pronounced against iterative or autoregressive generative baselines: CSDI requires substantially longer inference time because it samples through repeated denoising steps, while flow-based baselines either incur large sequential likelihood costs or show weaker accuracy on this long-horizon setting.
These results support the main efficiency claim: the recurrent scale head adds only a small computational overhead, but the direct latent forecast path avoids the dominant iterative sampling cost of diffusion models and the recursive generation cost of many autoregressive probabilistic forecasters.

\subsection{Full Results}
\label{sec:full_results}

We present the complete benchmarking results for Long-term Probabilistic Time Series Forecasting (LPTSF) in Table~\ref{tab:long_term_fore_CRPS} and Table~\ref{tab:long_term_fore_NMAE}, covering all four forecasting horizons ($H \in \{96, 192, 336, 720\}$) across nine real-world datasets.
The evaluation shows that {VolDy-VAE} achieves strong performance across the evaluated datasets.
It surpasses specialized generative models (e.g., Diffusion and Flow-based methods) in distributional fidelity (CRPS) and consistently outperforms state-of-the-art point forecasters on deterministic metrics (NMAE).
This result indicates that the proposed Location-Scale framework captures long-term dependencies and complex volatility dynamics.

\label{app: full results}

\clearpage
\begin{table*}[p]
  \centering
  \setlength\tabcolsep{2pt}
  \scriptsize
  \caption{Results of CRPS ($\textrm{mean}_{\textrm{std}}$) on long-term forecasting scenarios, each containing six independent runs with different seeds. The context length is set to 36 for the ILI  dataset and 96 for the others. Lower CRPS values indicate better predictions. The reported values are means with standard deviations over 6 independent retraining-and-evaluation runs. \textcolor{red}{RED}: THE BEST, \textcolor{blue}{\underline{BLUE}}: THE 2ND BEST.}
  \resizebox{0.97\textwidth}{!}{
  \begin{tabular}{c|c|c|c|c|c|c|c|c|c|c|c|c|c|c}
    \toprule
    Dataset & Horizon & Koopa & iTransformer & FITS & PatchTST & GRU MAF & Trans MAF & TSDiff & CSDI & NsDiff & TimeGrad & GRU NVP & $K^2$VAE & VolDy-VAE \\
    \midrule
    \multirow{4}{*}{ETTm1 } & 96 & $0.285\scriptstyle\pm0.018$ & $0.301\scriptstyle\pm0.033$ & $0.267\scriptstyle\pm0.023$ & $0.261\scriptstyle\pm0.051$ & $0.295\scriptstyle\pm0.055$ & $0.313\scriptstyle\pm0.045$ & $0.344\scriptstyle\pm0.050$ & ${0.236}\scriptstyle\pm0.006$ & $0.255\scriptstyle\pm0.014$ & $0.522\scriptstyle\pm0.105$ & $0.383\scriptstyle\pm0.053$ & $\textcolor{blue}{\underline{0.232\scriptstyle\pm0.010}}$ & $\textcolor{red}{0.208\scriptstyle\pm0.009}$ \\
    ~ & 192 & $0.289\scriptstyle\pm0.024$ & $0.314\scriptstyle\pm0.023$ & ${0.261}\scriptstyle\pm0.022$ & $0.275\scriptstyle\pm0.030$ & $0.389\scriptstyle\pm0.033$ & $0.424\scriptstyle\pm0.029$ & $0.345\scriptstyle\pm0.035$ & $0.291\scriptstyle\pm0.025$ & $0.281\scriptstyle\pm0.018$ & $0.603\scriptstyle\pm0.092$ & $0.396\scriptstyle\pm0.030$ & $\textcolor{blue}{\underline{0.259\scriptstyle\pm0.013}}$ & $\textcolor{red}{0.240\scriptstyle\pm0.010}$ \\
    ~ & 336 & $0.286\scriptstyle\pm0.035$ & $0.311\scriptstyle\pm0.029$ & ${0.275}\scriptstyle\pm0.030$ & $0.285\scriptstyle\pm0.028$ & $0.429\scriptstyle\pm0.021$ & $0.481\scriptstyle\pm0.019$ & $0.462\scriptstyle\pm0.043$ & $0.322\scriptstyle\pm0.033$ & $0.292\scriptstyle\pm0.024$ & $0.601\scriptstyle\pm0.028$ & $0.486\scriptstyle\pm0.032$ & $\textcolor{blue}{\underline{0.262\scriptstyle\pm0.030}}$ & $\textcolor{red}{0.251\scriptstyle\pm0.022}$ \\
    ~ & 720 & ${0.295}\scriptstyle\pm0.027$ & $0.455\scriptstyle\pm0.021$ & $0.305\scriptstyle\pm0.024$ & $0.304\scriptstyle\pm0.029$ & $0.536\scriptstyle\pm0.033$ & $0.688\scriptstyle\pm0.043$ & $0.478\scriptstyle\pm0.027$ & $0.448\scriptstyle\pm0.038$ & $0.323\scriptstyle\pm0.030$ & $0.621\scriptstyle\pm0.037$ & $0.546\scriptstyle\pm0.036$ & $\textcolor{blue}{\underline{0.294\scriptstyle\pm0.026}}$ & $\textcolor{red}{0.279\scriptstyle\pm0.020}$ \\
    \midrule
    \multirow{4}{*}{ETTm2 } & 96 & $0.178\scriptstyle\pm0.023$ & $0.181\scriptstyle\pm0.031$ & $0.162\scriptstyle\pm0.053$ & $0.142\scriptstyle\pm0.034$ & $0.177\scriptstyle\pm0.024$ & $0.227\scriptstyle\pm0.013$ & $0.175\scriptstyle\pm0.019$ & $\textcolor{blue}{\underline{0.115\scriptstyle\pm0.009}}$ & $0.138\scriptstyle\pm0.010$ & $0.427\scriptstyle\pm0.042$ & $0.319\scriptstyle\pm0.044$ & ${{0.126\scriptstyle\pm0.007}}$ & $\textcolor{red}{0.107\scriptstyle\pm0.005}$ \\
    ~ & 192 & $0.185\scriptstyle\pm0.014$ & $0.190\scriptstyle\pm0.010$ & $0.185\scriptstyle\pm0.053$ & $0.172\scriptstyle\pm0.023$ & $0.411\scriptstyle\pm0.026$ & $0.253\scriptstyle\pm0.037$ & $0.255\scriptstyle\pm0.029$ & $\textcolor{blue}{\underline{0.147}\scriptstyle\pm0.008}$ & $0.162\scriptstyle\pm0.011$ & $0.424\scriptstyle\pm0.061$ & $0.326\scriptstyle\pm0.025$ & $0.148\scriptstyle\pm0.009$ & $\textcolor{red}{0.129\scriptstyle\pm0.007}$ \\
    ~ & 336 & $0.198\scriptstyle\pm0.015$ & $0.206\scriptstyle\pm0.055$ & $0.218\scriptstyle\pm0.053$ & $0.195\scriptstyle\pm0.042$ & $0.377\scriptstyle\pm0.023$ & $0.253\scriptstyle\pm0.013$ & $0.328\scriptstyle\pm0.047$ & ${0.190}\scriptstyle\pm0.018$ & $0.180\scriptstyle\pm0.014$ & $0.469\scriptstyle\pm0.049$ & $0.449\scriptstyle\pm0.145$ & $\textcolor{blue}{\underline{0.164\scriptstyle\pm0.010}}$ & $\textcolor{red}{0.147\scriptstyle\pm0.010}$ \\
    ~ & 720 & $0.233\scriptstyle\pm0.025$ & $0.311\scriptstyle\pm0.024$ & $0.449\scriptstyle\pm0.034$ & ${0.229}\scriptstyle\pm0.036$ & $0.272\scriptstyle\pm0.029$ & $0.355\scriptstyle\pm0.043$ & $0.344\scriptstyle\pm0.046$ & $0.239\scriptstyle\pm0.035$ & $0.238\scriptstyle\pm0.026$ & $0.470\scriptstyle\pm0.054$ & $0.561\scriptstyle\pm0.273$ & $\textcolor{blue}{\underline{0.221\scriptstyle\pm0.023}}$ & $\textcolor{red}{0.166\scriptstyle\pm0.018}$ \\
    \midrule
    \multirow{4}{*}{ETTh1 } & 96 & $0.307\scriptstyle\pm0.033$ & ${0.292}\scriptstyle\pm0.032$ & $0.294\scriptstyle\pm0.023$ & $0.312\scriptstyle\pm0.036$ & $0.293\scriptstyle\pm0.037$ & $0.333\scriptstyle\pm0.045$ & $0.395\scriptstyle\pm0.052$ & $0.437\scriptstyle\pm0.018$ & $0.302\scriptstyle\pm0.018$ & $0.455\scriptstyle\pm0.046$ & $0.379\scriptstyle\pm0.030$ & $\textcolor{blue}{\underline{0.264\scriptstyle\pm0.020}}$ & $\textcolor{red}{0.252\scriptstyle\pm0.010}$ \\
    ~ & 192 & $0.301\scriptstyle\pm0.014$ & ${0.298}\scriptstyle\pm0.020$ & $0.304\scriptstyle\pm0.028$ & $0.313\scriptstyle\pm0.034$ & $0.348\scriptstyle\pm0.075$ & $0.351\scriptstyle\pm0.063$ & $0.467\scriptstyle\pm0.044$ & $0.496\scriptstyle\pm0.051$ & $0.330\scriptstyle\pm0.019$ & $0.516\scriptstyle\pm0.038$ & $0.425\scriptstyle\pm0.019$ & $\textcolor{blue}{\underline{0.290\scriptstyle\pm0.016}}$ & $\textcolor{red}{0.277\scriptstyle\pm0.012}$ \\
    ~ & 336 & ${0.312}\scriptstyle\pm0.019$ & $0.327\scriptstyle\pm0.043$ & $0.318\scriptstyle\pm0.023$ & $0.319\scriptstyle\pm0.035$ & $0.377\scriptstyle\pm0.026$ & $0.371\scriptstyle\pm0.031$ & $0.450\scriptstyle\pm0.027$ & $0.454\scriptstyle\pm0.025$ & $0.342\scriptstyle\pm0.020$ & $0.512\scriptstyle\pm0.026$ & $0.458\scriptstyle\pm0.054$ & $\textcolor{blue}{\underline{0.308\scriptstyle\pm0.021}}$ & $\textcolor{red}{0.300\scriptstyle\pm0.015}$ \\
    ~ & 720 & ${0.318}\scriptstyle\pm0.009$ & $0.350\scriptstyle\pm0.019$ & $0.348\scriptstyle\pm0.025$ & $0.323\scriptstyle\pm0.020$ & $0.393\scriptstyle\pm0.043$ & $0.363\scriptstyle\pm0.053$ & $0.516\scriptstyle\pm0.027$ & $0.528\scriptstyle\pm0.012$ & $0.360\scriptstyle\pm0.018$ & $0.523\scriptstyle\pm0.027$ & $0.502\scriptstyle\pm0.039$ & $\textcolor{blue}{\underline{0.314\scriptstyle\pm0.011}}$ & $\textcolor{red}{0.310\scriptstyle\pm0.012}$ \\
    \midrule
    \multirow{4}{*}{ETTh2 } & 96 & $0.199\scriptstyle\pm0.012$ & $0.185\scriptstyle\pm0.013$ & $0.187\scriptstyle\pm0.011$ & $0.197\scriptstyle\pm0.021$ & $0.239\scriptstyle\pm0.019$ & $0.263\scriptstyle\pm0.020$ & $0.336\scriptstyle\pm0.021$ & ${0.164}\scriptstyle\pm0.013$ & $0.178\scriptstyle\pm0.012$ & $0.358\scriptstyle\pm0.026$ & $0.432\scriptstyle\pm0.141$ & $\textcolor{blue}{\underline{0.162\scriptstyle\pm0.009}}$ & $\textcolor{red}{0.148\scriptstyle\pm0.005}$ \\
    ~ & 192 & $0.198\scriptstyle\pm0.022$ & $0.199\scriptstyle\pm0.019$ & ${0.195}\scriptstyle\pm0.022$ & $0.204\scriptstyle\pm0.055$ & $0.313\scriptstyle\pm0.034$ & $0.273\scriptstyle\pm0.024$ & $0.265\scriptstyle\pm0.043$ & $0.226\scriptstyle\pm0.018$ & $0.205\scriptstyle\pm0.017$ & $0.457\scriptstyle\pm0.081$ & $0.625\scriptstyle\pm0.170$ & $\textcolor{blue}{\underline{0.186\scriptstyle\pm0.018}}$ & $\textcolor{red}{0.162\scriptstyle\pm0.010}$ \\
    ~ & 336 & $0.262\scriptstyle\pm0.019$ & $0.271\scriptstyle\pm0.033$ & $0.246\scriptstyle\pm0.044$ & $0.277\scriptstyle\pm0.054$ & $0.376\scriptstyle\pm0.034$ & $0.265\scriptstyle\pm0.042$ & $0.350\scriptstyle\pm0.031$ & $0.274\scriptstyle\pm0.022$ & $0.266\scriptstyle\pm0.023$ & $0.481\scriptstyle\pm0.078$ & $0.793\scriptstyle\pm0.319$ & $\textcolor{blue}{\underline{0.257\scriptstyle\pm0.023}}$ & $\textcolor{red}{0.175\scriptstyle\pm0.016}$ \\
    ~ & 720 & ${0.293}\scriptstyle\pm0.026$ & $0.542\scriptstyle\pm0.015$ & $0.314\scriptstyle\pm0.022$ & $0.304\scriptstyle\pm0.018$ & $0.990\scriptstyle\pm0.023$ & $0.327\scriptstyle\pm0.033$ & $0.406\scriptstyle\pm0.056$ & $0.302\scriptstyle\pm0.040$ & $0.289\scriptstyle\pm0.020$ & $0.445\scriptstyle\pm0.016$ & $0.539\scriptstyle\pm0.090$ & $\textcolor{blue}{\underline{0.280\scriptstyle\pm0.014}}$ & $\textcolor{red}{0.174\scriptstyle\pm0.010}$ \\
    \midrule
    \multirow{4}{*}{Electricity } & 96 & $0.110\scriptstyle\pm0.004$ & $0.102\scriptstyle\pm0.004$ & $0.105\scriptstyle\pm0.006$ & $0.126\scriptstyle\pm0.005$ & ${0.083}\scriptstyle\pm0.009$ & $0.088\scriptstyle\pm0.014$ & $0.344\scriptstyle\pm0.006$ & $0.153\scriptstyle\pm0.137$ & $0.081\scriptstyle\pm0.004$ & $0.096\scriptstyle\pm0.002$ & $0.094\scriptstyle\pm0.003$ & $\textcolor{blue}{\underline{0.073\scriptstyle\pm0.002}}$ & $\textcolor{red}{0.069\scriptstyle\pm0.002}$ \\
    ~ & 192 & $0.109\scriptstyle\pm0.011$ & $0.104\scriptstyle\pm0.014$ & $0.112\scriptstyle\pm0.104$ & $0.123\scriptstyle\pm0.032$ & ${0.093}\scriptstyle\pm0.024$ & $0.097\scriptstyle\pm0.009$ & $0.345\scriptstyle\pm0.006$ & $0.200\scriptstyle\pm0.094$ & $0.088\scriptstyle\pm0.006$ & $0.100\scriptstyle\pm0.004$ & $0.097\scriptstyle\pm0.002$ & $\textcolor{blue}{\underline{0.080\scriptstyle\pm0.004}}$ & $\textcolor{red}{0.075\scriptstyle\pm0.005}$ \\
    ~ & 336 & $0.121\scriptstyle\pm0.011$ & $0.104\scriptstyle\pm0.010$ & $0.111\scriptstyle\pm0.014$ & $0.131\scriptstyle\pm0.024$ & ${0.095}\scriptstyle\pm0.001$ & - & $0.462\scriptstyle\pm0.054$ & - & $0.092\scriptstyle\pm0.005$ & $0.102\scriptstyle\pm0.007$ & $0.099\scriptstyle\pm0.001$ & $\textcolor{blue}{\underline{0.084\scriptstyle\pm0.001}}^*$ & $\textcolor{red}{0.078\scriptstyle\pm0.001}$ \\
    ~ & 720 & $0.113\scriptstyle\pm0.018$ & $0.109\scriptstyle\pm0.044$ & $0.115\scriptstyle\pm0.024$ & $0.127\scriptstyle\pm0.015$ & ${0.106}\scriptstyle\pm0.007$ & - & $0.478\scriptstyle\pm0.005$ & - & $0.096\scriptstyle\pm0.008$ & $0.108\scriptstyle\pm0.003$ & $0.114\scriptstyle\pm0.013$ & $\textcolor{blue}{\underline{0.087\scriptstyle\pm0.005}}^*$ & $\textcolor{red}{0.080\scriptstyle\pm0.006}$ \\
    \midrule
    \multirow{4}{*}{Traffic } & 96 & $0.297\scriptstyle\pm0.019$ & $0.256\scriptstyle\pm0.004$ & $0.258\scriptstyle\pm0.004$ & $0.194\scriptstyle\pm0.002$ & $0.215\scriptstyle\pm0.003$ & $0.208\scriptstyle\pm0.004$ & $0.294\scriptstyle\pm0.003$ & - & $0.193\scriptstyle\pm0.002$ & $0.202\scriptstyle\pm0.004$ & ${0.187}\scriptstyle\pm0.002$ & $\textcolor{blue}{\underline{0.186\scriptstyle\pm0.001}}^*$ & $\textcolor{red}{0.180\scriptstyle\pm0.005}$ \\
    ~ & 192 & $0.308\scriptstyle\pm0.009$ & $0.250\scriptstyle\pm0.002$ & $0.275\scriptstyle\pm0.003$ & $0.198\scriptstyle\pm0.004$ & - & - & $0.306\scriptstyle\pm0.004$ & - & $0.196\scriptstyle\pm0.003$ & $0.208\scriptstyle\pm0.003$ & ${0.192}\scriptstyle\pm0.001$ & $\textcolor{blue}{\underline{0.188\scriptstyle\pm0.002}}^*$ & $\textcolor{red}{0.187\scriptstyle\pm0.004}$ \\
    ~ & 336 & $0.334\scriptstyle\pm0.017$ & $0.261\scriptstyle\pm0.001$ & $0.327\scriptstyle\pm0.001$ & $0.204\scriptstyle\pm0.002$ & - & - & $0.317\scriptstyle\pm0.006$ & - & $0.202\scriptstyle\pm0.003$ & $0.213\scriptstyle\pm0.003$ & ${0.201}\scriptstyle\pm0.004$ & $\textcolor{blue}{\underline{0.195\scriptstyle\pm0.003}}$ & $\textcolor{red}{0.194\scriptstyle\pm0.002}$ \\
    ~ & 720 & $0.358\scriptstyle\pm0.022$ & $0.284\scriptstyle\pm0.004$ & $0.374\scriptstyle\pm0.004$ & $0.214\scriptstyle\pm0.001$ & - & - & $0.391\scriptstyle\pm0.002$ & - & $0.207\scriptstyle\pm0.003$ & $0.220\scriptstyle\pm0.002$ & ${0.211}\scriptstyle\pm0.004$ & $\textcolor{blue}{\underline{0.200\scriptstyle\pm0.001}}$ & $\textcolor{red}{0.192\scriptstyle\pm0.003}$ \\
    \midrule
    \multirow{4}{*}{Weather } & 96 & $0.132\scriptstyle\pm0.008$ & $0.131\scriptstyle\pm0.011$ & $0.210\scriptstyle\pm0.013$ & $0.131\scriptstyle\pm0.007$ & $0.139\scriptstyle\pm0.008$ & $0.105\scriptstyle\pm0.011$ & $0.104\scriptstyle\pm0.020$ & $0.068\scriptstyle\pm0.008$ & $0.086\scriptstyle\pm0.006$ & $0.130\scriptstyle\pm0.017$ & $0.116\scriptstyle\pm0.013$ & $\textcolor{blue}{\underline{0.080\scriptstyle\pm0.007}}$ & $\textcolor{red}{0.070\scriptstyle\pm0.005}$ \\
    ~ & 192 & $0.133\scriptstyle\pm0.017$ & $0.132\scriptstyle\pm0.018$ & $0.205\scriptstyle\pm0.019$ & $0.131\scriptstyle\pm0.014$ & $0.143\scriptstyle\pm0.020$ & $0.142\scriptstyle\pm0.022$ & $0.134\scriptstyle\pm0.012$ & $0.068\scriptstyle\pm0.006$ & $0.085\scriptstyle\pm0.006$ & $0.127\scriptstyle\pm0.019$ & $0.122\scriptstyle\pm0.021$ & $\textcolor{blue}{\underline{0.079\scriptstyle\pm0.009}}$ & $\textcolor{red}{0.067\scriptstyle\pm0.006}$ \\
    ~ & 336 & $0.136\scriptstyle\pm0.021$ & $0.132\scriptstyle\pm0.010$ & $0.221\scriptstyle\pm0.005$ & $0.137\scriptstyle\pm0.008$ & $0.129\scriptstyle\pm0.012$ & $0.133\scriptstyle\pm0.014$ & $0.137\scriptstyle\pm0.010$ & ${0.083}\scriptstyle\pm0.002$ & $0.088\scriptstyle\pm0.005$ & $0.130\scriptstyle\pm0.006$ & $0.128\scriptstyle\pm0.011$ & $\textcolor{blue}{\underline{0.082\scriptstyle\pm0.010}}$ & $\textcolor{red}{0.071\scriptstyle\pm0.008}$ \\
    ~ & 720 & $0.140\scriptstyle\pm0.007$ & $0.133\scriptstyle\pm0.004$ & $0.267\scriptstyle\pm0.003$ & $0.142\scriptstyle\pm0.005$ & $0.122\scriptstyle\pm0.006$ & $0.113\scriptstyle\pm0.004$ & $0.152\scriptstyle\pm0.003$ & ${0.087}\scriptstyle\pm0.003$ & $0.090\scriptstyle\pm0.004$ & $0.113\scriptstyle\pm0.011$ & $0.110\scriptstyle\pm0.004$ & $\textcolor{blue}{\underline{0.084\scriptstyle\pm0.003}}$ & $\textcolor{red}{0.073\scriptstyle\pm0.003}$ \\
    \midrule
    \multirow{4}{*}{Exchange } & 96 & $0.063\scriptstyle\pm0.006$ & $0.061\scriptstyle\pm0.003$ & $0.048\scriptstyle\pm0.004$ & $0.063\scriptstyle\pm0.006$ & $0.026\scriptstyle\pm0.010$ & $0.028\scriptstyle\pm0.002$ & $0.079\scriptstyle\pm0.007$ & $0.028\scriptstyle\pm0.003$ & $0.038\scriptstyle\pm0.004$ & $0.068\scriptstyle\pm0.003$ & $0.071\scriptstyle\pm0.006$ & $\textcolor{blue}{\underline{0.031\scriptstyle\pm0.002}}$ & $\textcolor{red}{0.022\scriptstyle\pm0.003}$ \\
    ~ & 192 & $0.065\scriptstyle\pm0.020$ & $0.062\scriptstyle\pm0.010$ & $0.049\scriptstyle\pm0.011$ & $0.067\scriptstyle\pm0.008$ & ${0.034}\scriptstyle\pm0.009$ & $0.046\scriptstyle\pm0.017$ & $0.093\scriptstyle\pm0.011$ & $0.045\scriptstyle\pm0.003$ & $0.046\scriptstyle\pm0.006$ & $0.087\scriptstyle\pm0.013$ & $0.068\scriptstyle\pm0.004$ & $\textcolor{blue}{\underline{0.032\scriptstyle\pm0.010}}$ & $\textcolor{red}{0.030\scriptstyle\pm0.008}$ \\
    ~ & 336 & $0.072\scriptstyle\pm0.008$ & $0.067\scriptstyle\pm0.008$ & $0.052\scriptstyle\pm0.013$ & $0.071\scriptstyle\pm0.017$ & $0.058\scriptstyle\pm0.023$ & $0.045\scriptstyle\pm0.010$ & $0.081\scriptstyle\pm0.007$ & $0.060\scriptstyle\pm0.004$ & $0.061\scriptstyle\pm0.005$ & $0.074\scriptstyle\pm0.009$ & $0.072\scriptstyle\pm0.002$ & $\textcolor{blue}{\underline{0.048\scriptstyle\pm0.004}}$ & $\textcolor{red}{0.041\scriptstyle\pm0.005}$ \\
    ~ & 720 & $0.091\scriptstyle\pm0.012$ & $0.087\scriptstyle\pm0.023$ & ${0.074}\scriptstyle\pm0.011$ & $0.097\scriptstyle\pm0.007$ & $0.160\scriptstyle\pm0.019$ & $0.148\scriptstyle\pm0.017$ & $0.082\scriptstyle\pm0.010$ & $0.143\scriptstyle\pm0.020$ & $0.076\scriptstyle\pm0.008$ & $0.099\scriptstyle\pm0.015$ & $0.079\scriptstyle\pm0.009$ & $\textcolor{blue}{\underline{0.069\scriptstyle\pm0.005}}$ & $\textcolor{red}{0.065\scriptstyle\pm0.004}$ \\
    \midrule
    \multirow{4}{*}{ILI } & 24 & $0.245\scriptstyle\pm0.018$ & $0.212\scriptstyle\pm0.013$ & $0.233\scriptstyle\pm0.015$ & $0.312\scriptstyle\pm0.014$ & $0.097\scriptstyle\pm0.010$ & ${0.092}\scriptstyle\pm0.019$ & $0.228\scriptstyle\pm0.024$ & $0.250\scriptstyle\pm0.013$ & $0.102\scriptstyle\pm0.007$ & $0.275\scriptstyle\pm0.047$ & $0.257\scriptstyle\pm0.003$ & $\textcolor{blue}{\underline{0.087\scriptstyle\pm0.003}}$ & $\textcolor{red}{0.082\scriptstyle\pm0.003}$ \\
    ~ & 36 & $0.214\scriptstyle\pm0.008$ & $0.182\scriptstyle\pm0.016$ & $0.217\scriptstyle\pm0.023$ & $0.241\scriptstyle\pm0.021$ & $0.117\scriptstyle\pm0.017$ & ${0.115}\scriptstyle\pm0.011$ & $0.235\scriptstyle\pm0.010$ & $0.285\scriptstyle\pm0.010$ & $0.130\scriptstyle\pm0.007$ & $0.272\scriptstyle\pm0.057$ & $0.281\scriptstyle\pm0.004$ & $\textcolor{blue}{\underline{0.113\scriptstyle\pm0.005}}$ & $\textcolor{red}{0.110\scriptstyle\pm0.003}$ \\
    ~ & 48 & $0.271\scriptstyle\pm0.021$ & $0.213\scriptstyle\pm0.012$ & $0.185\scriptstyle\pm0.026$ & $0.242\scriptstyle\pm0.018$ & $0.128\scriptstyle\pm0.019$ & ${0.133}\scriptstyle\pm0.022$ & $0.265\scriptstyle\pm0.039$ & $0.285\scriptstyle\pm0.036$ & $0.142\scriptstyle\pm0.010$ & $0.295\scriptstyle\pm0.033$ & $0.288\scriptstyle\pm0.008$ & $\textcolor{blue}{\underline{0.124\scriptstyle\pm0.010}}$ & $\textcolor{red}{0.114\scriptstyle\pm0.008}$ \\
    ~ & 60 & $0.228\scriptstyle\pm0.022$ & $0.222\scriptstyle\pm0.020$ & $0.211\scriptstyle\pm0.011$ & $0.233\scriptstyle\pm0.019$ & $0.172\scriptstyle\pm0.034$ & ${0.155}\scriptstyle\pm0.018$ & $0.263\scriptstyle\pm0.022$ & $0.283\scriptstyle\pm0.012$ & $0.161\scriptstyle\pm0.010$ & $0.295\scriptstyle\pm0.083$ & $0.307\scriptstyle\pm0.005$ & $\textcolor{blue}{\underline{0.142\scriptstyle\pm0.008}}$ & $\textcolor{red}{0.130\scriptstyle\pm0.005}$ \\
    \bottomrule
    \multicolumn{15}{p{0.96\textwidth}}{\footnotesize ${*}$ denotes entries unavailable or inconsistent in the original report that were reproduced under the official setting. Specifically, the starred $K^2$VAE CRPS values for \textit{Electricity} ($H=336,720$) and \textit{Traffic} ($H=96,192$) are reproduced by us.}
  \end{tabular}}
  \label{tab:long_term_fore_CRPS}
\end{table*}

\clearpage
\begin{table*}[p]
  \centering
  \setlength\tabcolsep{2pt}
  \scriptsize
  \caption{Results of NMAE ($\textrm{mean}_{\textrm{std}}$) on long-term forecasting scenarios, each containing six independent runs with different seeds. The context length is set to 36 for the ILI  dataset and 96 for the others. Lower NMAE values indicate better predictions. The reported values are means with standard deviations over 6 independent retraining-and-evaluation runs. \textcolor{red}{RED}: THE BEST, \textcolor{blue}{\underline{BLUE}}: THE 2ND BEST.}
  \resizebox{0.97\textwidth}{!}{
  \begin{tabular}{c|c|c|c|c|c|c|c|c|c|c|c|c|c|c}
    \toprule
    Dataset & Horizon & Koopa & iTransformer & FITS & PatchTST & GRU MAF & Trans MAF & TSDiff & CSDI & NsDiff & TimeGrad & GRU NVP & $K^2$VAE & VolDy-VAE \\
    \midrule
    \multirow{4}{*}{ETTm1 } & 96 & $0.362\scriptstyle\pm0.022$ & $0.369\scriptstyle\pm0.029$ & $0.349\scriptstyle\pm0.032$ & $0.329\scriptstyle\pm0.100$ & $0.402\scriptstyle\pm0.087$ & $0.456\scriptstyle\pm0.042$ & $0.441\scriptstyle\pm0.021$ & $0.308\scriptstyle\pm0.005$ & $0.315\scriptstyle\pm0.016$ & $0.645\scriptstyle\pm0.129$ & $0.488\scriptstyle\pm0.058$ & $\textcolor{blue}{\underline{0.284\scriptstyle\pm0.011}}$ & $\textcolor{red}{0.268\scriptstyle\pm0.009}$ \\
    ~ & 192 & $0.365\scriptstyle\pm0.032$ & $0.384\scriptstyle\pm0.041$ & $0.341\scriptstyle\pm0.032$ & $0.338\scriptstyle\pm0.022$ & $0.476\scriptstyle\pm0.046$ & $0.553\scriptstyle\pm0.012$ & $0.441\scriptstyle\pm0.019$ & $0.377\scriptstyle\pm0.026$ & $0.354\scriptstyle\pm0.020$ & $0.748\scriptstyle\pm0.084$ & $0.514\scriptstyle\pm0.042$ & $\textcolor{blue}{\underline{0.323\scriptstyle\pm0.020}}$ & $\textcolor{red}{0.309\scriptstyle\pm0.012}$ \\
    ~ & 336 & $0.364\scriptstyle\pm0.026$ & $0.380\scriptstyle\pm0.020$ & $0.356\scriptstyle\pm0.022$ & $0.344\scriptstyle\pm0.013$ & $0.522\scriptstyle\pm0.019$ & $0.590\scriptstyle\pm0.047$ & $0.571\scriptstyle\pm0.033$ & $0.419\scriptstyle\pm0.042$ & $0.365\scriptstyle\pm0.022$ & $0.759\scriptstyle\pm0.015$ & $0.630\scriptstyle\pm0.029$ & $\textcolor{blue}{\underline{0.330\scriptstyle\pm0.014}}$ & $\textcolor{red}{0.327\scriptstyle\pm0.010}$ \\
    ~ & 720 & $0.377\scriptstyle\pm0.037$ & $0.490\scriptstyle\pm0.038$ & $0.406\scriptstyle\pm0.072$ & $0.382\scriptstyle\pm0.066$ & $0.711\scriptstyle\pm0.081$ & $0.822\scriptstyle\pm0.034$ & $0.622\scriptstyle\pm0.045$ & $0.578\scriptstyle\pm0.051$ & $0.410\scriptstyle\pm0.036$ & $0.793\scriptstyle\pm0.034$ & $0.707\scriptstyle\pm0.050$ & $\textcolor{blue}{\underline{0.373\scriptstyle\pm0.032}}$ & $\textcolor{red}{0.367\scriptstyle\pm0.023}$ \\
    \midrule

    \multirow{4}{*}{ETTm2 } & 96 & $0.225\scriptstyle\pm0.039$ & $0.221\scriptstyle\pm0.039$ & $0.210\scriptstyle\pm0.040$ & $0.216\scriptstyle\pm0.035$ & $0.212\scriptstyle\pm0.082$ & $0.279\scriptstyle\pm0.031$ & $0.224\scriptstyle\pm0.033$ & $0.146\scriptstyle\pm0.012$ & $0.158\scriptstyle\pm0.012$ & $0.525\scriptstyle\pm0.047$ & $0.413\scriptstyle\pm0.059$ & $\textcolor{blue}{\underline{0.144\scriptstyle\pm0.011}}$ & $\textcolor{red}{0.134\scriptstyle\pm0.010}$ \\
    ~ & 192 & $0.233\scriptstyle\pm0.026$ & $0.229\scriptstyle\pm0.031$ & $0.234\scriptstyle\pm0.038$ & $0.215\scriptstyle\pm0.022$ & $0.535\scriptstyle\pm0.029$ & $0.292\scriptstyle\pm0.041$ & $0.316\scriptstyle\pm0.040$ & $0.189\scriptstyle\pm0.012$ & $0.184\scriptstyle\pm0.012$ & $0.530\scriptstyle\pm0.060$ & $0.427\scriptstyle\pm0.033$ & $\textcolor{blue}{\underline{0.170\scriptstyle\pm0.009}}$ & $\textcolor{red}{0.160\scriptstyle\pm0.010}$ \\
    ~ & 336 & $0.267\scriptstyle\pm0.023$ & $0.245\scriptstyle\pm0.049$ & $0.276\scriptstyle\pm0.019$ & $0.234\scriptstyle\pm0.024$ & $0.407\scriptstyle\pm0.043$ & $0.309\scriptstyle\pm0.032$ & $0.397\scriptstyle\pm0.051$ & $0.248\scriptstyle\pm0.024$ & $0.205\scriptstyle\pm0.020$ & $0.566\scriptstyle\pm0.047$ & $0.580\scriptstyle\pm0.169$ & $\textcolor{blue}{\underline{0.187\scriptstyle\pm0.021}}$ & $\textcolor{red}{0.158\scriptstyle\pm0.015}$ \\
    ~ & 720 & $0.290\scriptstyle\pm0.033$ & $0.385\scriptstyle\pm0.042$ & $0.540\scriptstyle\pm0.052$ & $0.288\scriptstyle\pm0.034$ & $0.355\scriptstyle\pm0.048$ & $0.475\scriptstyle\pm0.029$ & $0.416\scriptstyle\pm0.065$ & $0.306\scriptstyle\pm0.040$ & $0.295\scriptstyle\pm0.034$ & $0.561\scriptstyle\pm0.044$ & $0.749\scriptstyle\pm0.385$ & $\textcolor{blue}{\underline{0.275\scriptstyle\pm0.035}}$ & $\textcolor{red}{0.207\scriptstyle\pm0.026}$ \\
    \midrule

    \multirow{4}{*}{ETTh1 } & 96 & $0.407\scriptstyle\pm0.052$ & $0.386\scriptstyle\pm0.092$ & $0.393\scriptstyle\pm0.142$ & $0.407\scriptstyle\pm0.022$ & $0.371\scriptstyle\pm0.034$ & $0.423\scriptstyle\pm0.047$ & $0.510\scriptstyle\pm0.029$ & $0.557\scriptstyle\pm0.022$ & $0.380\scriptstyle\pm0.024$ & $0.585\scriptstyle\pm0.058$ & $0.481\scriptstyle\pm0.037$ & $\textcolor{blue}{\underline{0.336\scriptstyle\pm0.041}}$ & $\textcolor{red}{0.328\scriptstyle\pm0.019}$ \\
    ~ & 192 & $0.396\scriptstyle\pm0.022$ & $0.388\scriptstyle\pm0.041$ & $0.406\scriptstyle\pm0.079$ & $0.405\scriptstyle\pm0.088$ & $0.430\scriptstyle\pm0.022$ & $0.451\scriptstyle\pm0.012$ & $0.596\scriptstyle\pm0.056$ & $0.625\scriptstyle\pm0.065$ & $0.410\scriptstyle\pm0.026$ & $0.680\scriptstyle\pm0.058$ & $0.531\scriptstyle\pm0.018$ & $\textcolor{blue}{\underline{0.372\scriptstyle\pm0.023}}$ & $\textcolor{red}{0.365\scriptstyle\pm0.015}$ \\
    ~ & 336 & $0.406\scriptstyle\pm0.028$ & $0.415\scriptstyle\pm0.022$ & $0.410\scriptstyle\pm0.063$ & $0.412\scriptstyle\pm0.024$ & $0.462\scriptstyle\pm0.049$ & $0.481\scriptstyle\pm0.041$ & $0.581\scriptstyle\pm0.035$ & $0.574\scriptstyle\pm0.026$ & $0.430\scriptstyle\pm0.022$ & $0.666\scriptstyle\pm0.047$ & $0.580\scriptstyle\pm0.064$ & $\textcolor{blue}{\underline{0.394\scriptstyle\pm0.022}}$ & $\textcolor{red}{0.384\scriptstyle\pm0.018}$ \\
    ~ & 720 & $0.412\scriptstyle\pm0.008$ & $0.449\scriptstyle\pm0.022$ & $0.468\scriptstyle\pm0.012$ & $0.428\scriptstyle\pm0.024$ & $0.496\scriptstyle\pm0.019$ & $0.455\scriptstyle\pm0.025$ & $0.657\scriptstyle\pm0.017$ & $0.657\scriptstyle\pm0.014$ & $0.455\scriptstyle\pm0.020$ & $0.672\scriptstyle\pm0.015$ & $0.643\scriptstyle\pm0.046$ & $\textcolor{blue}{\underline{0.396\scriptstyle\pm0.012}}$ & $\textcolor{red}{0.389\scriptstyle\pm0.013}$ \\
    \midrule

    \multirow{4}{*}{ETTh2 } & 96 & $0.249\scriptstyle\pm0.015$ & $0.234\scriptstyle\pm0.011$ & $0.243\scriptstyle\pm0.009$ & $0.247\scriptstyle\pm0.028$ & $0.292\scriptstyle\pm0.012$ & $0.345\scriptstyle\pm0.042$ & $0.421\scriptstyle\pm0.033$ & $0.214\scriptstyle\pm0.018$ & $0.210\scriptstyle\pm0.014$ & $0.448\scriptstyle\pm0.031$ & $0.548\scriptstyle\pm0.158$ & $\textcolor{blue}{\underline{0.189\scriptstyle\pm0.010}}$ & $\textcolor{red}{0.183\scriptstyle\pm0.009}$ \\
    ~ & 192 & $0.249\scriptstyle\pm0.032$ & $0.247\scriptstyle\pm0.040$ & $0.252\scriptstyle\pm0.022$ & $0.265\scriptstyle\pm0.091$ & $0.376\scriptstyle\pm0.112$ & $0.343\scriptstyle\pm0.044$ & $0.339\scriptstyle\pm0.033$ & $0.294\scriptstyle\pm0.027$ & $0.235\scriptstyle\pm0.020$ & $0.575\scriptstyle\pm0.089$ & $0.766\scriptstyle\pm0.223$ & $\textcolor{blue}{\underline{0.213\scriptstyle\pm0.021}}$ & $\textcolor{red}{0.201\scriptstyle\pm0.015}$ \\
    ~ & 336 & $0.274\scriptstyle\pm0.027$ & $0.297\scriptstyle\pm0.029$ & $0.291\scriptstyle\pm0.032$ & $0.314\scriptstyle\pm0.045$ & $0.454\scriptstyle\pm0.057$ & $0.333\scriptstyle\pm0.078$ & $0.427\scriptstyle\pm0.041$ & $0.353\scriptstyle\pm0.028$ & $0.278\scriptstyle\pm0.028$ & $0.606\scriptstyle\pm0.095$ & $0.942\scriptstyle\pm0.408$ & $\textcolor{blue}{\underline{0.263\scriptstyle\pm0.039}}$ & $\textcolor{red}{0.221\scriptstyle\pm0.020}$ \\
    ~ & 720 & $0.286\scriptstyle\pm0.042$ & $0.667\scriptstyle\pm0.012$ & $0.401\scriptstyle\pm0.022$ & $0.371\scriptstyle\pm0.021$ & $1.092\scriptstyle\pm0.019$ & $0.412\scriptstyle\pm0.020$ & $0.482\scriptstyle\pm0.022$ & $0.382\scriptstyle\pm0.030$ & $0.322\scriptstyle\pm0.025$ & $0.550\scriptstyle\pm0.018$ & $0.688\scriptstyle\pm0.161$ & $\textcolor{blue}{\underline{0.278\scriptstyle\pm0.020}}$ & $\textcolor{red}{0.223\scriptstyle\pm0.015}$ \\
    \midrule

    \multirow{4}{*}{Electricity } & 96 & $0.146\scriptstyle\pm0.015$ & $0.134\scriptstyle\pm0.002$ & $0.137\scriptstyle\pm0.002$ & $0.168\scriptstyle\pm0.012$ & $0.108\scriptstyle\pm0.009$ & $0.114\scriptstyle\pm0.010$ & $0.441\scriptstyle\pm0.013$ & $0.203\scriptstyle\pm0.189$ & $0.104\scriptstyle\pm0.006$ & $0.119\scriptstyle\pm0.003$ & $0.118\scriptstyle\pm0.003$ & $\textcolor{blue}{\underline{0.093\scriptstyle\pm0.002}}$ & $\textcolor{red}{0.089\scriptstyle\pm0.003}$ \\
    ~ & 192 & $0.143\scriptstyle\pm0.023$ & $0.137\scriptstyle\pm0.022$ & $0.143\scriptstyle\pm0.112$ & $0.163\scriptstyle\pm0.032$ & $0.120\scriptstyle\pm0.033$ & $0.131\scriptstyle\pm0.008$ & $0.441\scriptstyle\pm0.005$ & $0.264\scriptstyle\pm0.129$ & $0.112\scriptstyle\pm0.010$ & $0.124\scriptstyle\pm0.005$ & $0.121\scriptstyle\pm0.003$ & $\textcolor{blue}{\underline{0.102\scriptstyle\pm0.010}}$ & $\textcolor{red}{0.096\scriptstyle\pm0.008}$ \\
    ~ & 336 & $0.151\scriptstyle\pm0.017$ & $0.136\scriptstyle\pm0.002$ & $0.139\scriptstyle\pm0.002$ & $0.168\scriptstyle\pm0.010$ & $0.122\scriptstyle\pm0.018$ & - & $0.571\scriptstyle\pm0.022$ & - & $0.118\scriptstyle\pm0.006$ & $0.126\scriptstyle\pm0.008$ & $0.123\scriptstyle\pm0.001$ & $\textcolor{blue}{\underline{0.107\scriptstyle\pm0.002}}$ & $\textcolor{red}{0.102\scriptstyle\pm0.003}$ \\
    ~ & 720 & $0.149\scriptstyle\pm0.025$ & $0.140\scriptstyle\pm0.009$ & $0.149\scriptstyle\pm0.012$ & $0.164\scriptstyle\pm0.024$ & $0.136\scriptstyle\pm0.098$ & - & $0.622\scriptstyle\pm0.142$ & - & $0.126\scriptstyle\pm0.014$ & $0.134\scriptstyle\pm0.004$ & $0.144\scriptstyle\pm0.017$ & $\textcolor{blue}{\underline{0.117\scriptstyle\pm0.019}}$ & $\textcolor{red}{0.111\scriptstyle\pm0.012}$ \\
    \midrule

    \multirow{4}{*}{Traffic } & 96 & $0.377\scriptstyle\pm0.024$ & $0.332\scriptstyle\pm0.008$ & $0.332\scriptstyle\pm0.007$ & $0.228\scriptstyle\pm0.010$ & $0.274\scriptstyle\pm0.012$ & $0.265\scriptstyle\pm0.007$ & $0.342\scriptstyle\pm0.042$ & - & $0.238\scriptstyle\pm0.008$ & $0.234\scriptstyle\pm0.006$ & $0.231\scriptstyle\pm0.003$ & $\textcolor{blue}{\underline{0.230\scriptstyle\pm0.010}}$ & $\textcolor{red}{0.223\scriptstyle\pm0.009}$ \\
    ~ & 192 & $0.388\scriptstyle\pm0.011$ & $0.326\scriptstyle\pm0.009$ & $0.350\scriptstyle\pm0.010$ & $0.225\scriptstyle\pm0.012$ & - & - & $0.354\scriptstyle\pm0.012$ & - & $0.242\scriptstyle\pm0.006$ & $0.239\scriptstyle\pm0.004$ & $0.236\scriptstyle\pm0.002$ & $\textcolor{blue}{\underline{0.234\scriptstyle\pm0.003}}$ & $\textcolor{red}{0.230\scriptstyle\pm0.006}$ \\
    ~ & 336 & $0.416\scriptstyle\pm0.028$ & $0.335\scriptstyle\pm0.010$ & $0.405\scriptstyle\pm0.011$ & $0.242\scriptstyle\pm0.022$ & - & - & $0.392\scriptstyle\pm0.006$ & - & $0.250\scriptstyle\pm0.007$ & $0.246\scriptstyle\pm0.003$ & $0.248\scriptstyle\pm0.006$ & $\textcolor{blue}{\underline{0.242\scriptstyle\pm0.007}}$ & $\textcolor{red}{0.235\scriptstyle\pm0.008}$ \\
    ~ & 720 & $0.432\scriptstyle\pm0.032$ & $0.361\scriptstyle\pm0.030$ & $0.453\scriptstyle\pm0.022$ & $0.253\scriptstyle\pm0.012$ & - & - & $0.478\scriptstyle\pm0.006$ & - & $0.256\scriptstyle\pm0.010$ & $0.263\scriptstyle\pm0.001$ & $0.264\scriptstyle\pm0.006$ & $\textcolor{blue}{\underline{0.248\scriptstyle\pm0.010}}$ & $\textcolor{red}{0.236\scriptstyle\pm0.009}$ \\
    \midrule

    \multirow{4}{*}{Weather } & 96 & $0.146\scriptstyle\pm0.019$ & $0.144\scriptstyle\pm0.017$ & $0.279\scriptstyle\pm0.027$ & $0.145\scriptstyle\pm0.016$ & $0.176\scriptstyle\pm0.011$ & $0.139\scriptstyle\pm0.010$ & $0.113\scriptstyle\pm0.022$ & $0.087\scriptstyle\pm0.012$ & $0.095\scriptstyle\pm0.010$ & $0.164\scriptstyle\pm0.023$ & $0.145\scriptstyle\pm0.017$ & $\textcolor{blue}{\underline{0.086\scriptstyle\pm0.011}}$ & $\textcolor{red}{0.077\scriptstyle\pm0.009}$ \\
    ~ & 192 & $0.148\scriptstyle\pm0.022$ & $0.145\scriptstyle\pm0.015$ & $0.264\scriptstyle\pm0.013$ & $0.144\scriptstyle\pm0.012$ & $0.166\scriptstyle\pm0.022$ & $0.160\scriptstyle\pm0.037$ & $0.144\scriptstyle\pm0.020$ & $0.086\scriptstyle\pm0.007$ & $0.092\scriptstyle\pm0.010$ & $0.158\scriptstyle\pm0.024$ & $0.147\scriptstyle\pm0.025$ & $\textcolor{blue}{\underline{0.083\scriptstyle\pm0.011}}$ & $\textcolor{red}{0.080\scriptstyle\pm0.010}$ \\
    ~ & 336 & $0.152\scriptstyle\pm0.032$ & $0.146\scriptstyle\pm0.011$ & $0.283\scriptstyle\pm0.021$ & $0.149\scriptstyle\pm0.023$ & $0.168\scriptstyle\pm0.014$ & $0.170\scriptstyle\pm0.027$ & $0.138\scriptstyle\pm0.033$ & $0.098\scriptstyle\pm0.002$ & $0.103\scriptstyle\pm0.008$ & $0.162\scriptstyle\pm0.006$ & $0.160\scriptstyle\pm0.012$ & $\textcolor{blue}{\underline{0.093\scriptstyle\pm0.010}}$ & $\textcolor{red}{0.086\scriptstyle\pm0.005}$ \\
    ~ & 720 & $0.162\scriptstyle\pm0.009$ & $0.147\scriptstyle\pm0.019$ & $0.317\scriptstyle\pm0.021$ & $0.152\scriptstyle\pm0.029$ & $0.149\scriptstyle\pm0.034$ & $0.148\scriptstyle\pm0.040$ & $0.141\scriptstyle\pm0.026$ & $0.102\scriptstyle\pm0.005$ & $0.106\scriptstyle\pm0.008$ & $0.136\scriptstyle\pm0.020$ & $0.135\scriptstyle\pm0.008$ & $\textcolor{blue}{\underline{0.099\scriptstyle\pm0.009}}$ & $\textcolor{red}{0.086\scriptstyle\pm0.006}$ \\
    \midrule

    \multirow{4}{*}{Exchange } & 96 & $0.079\scriptstyle\pm0.005$ & $0.077\scriptstyle\pm0.001$ & $0.069\scriptstyle\pm0.007$ & $0.079\scriptstyle\pm0.002$ & $0.033\scriptstyle\pm0.003$ & $0.036\scriptstyle\pm0.009$ & $0.090\scriptstyle\pm0.010$ & $0.036\scriptstyle\pm0.005$ & $0.040\scriptstyle\pm0.004$ & $0.079\scriptstyle\pm0.002$ & $0.091\scriptstyle\pm0.009$ & $\textcolor{blue}{\underline{0.032\scriptstyle\pm0.002}}$ & $\textcolor{red}{0.029\scriptstyle\pm0.002}$ \\
    ~ & 192 & $0.081\scriptstyle\pm0.015$ & $0.078\scriptstyle\pm0.008$ & $0.069\scriptstyle\pm0.007$ & $0.081\scriptstyle\pm0.002$ & $0.044\scriptstyle\pm0.004$ & $0.058\scriptstyle\pm0.007$ & $0.106\scriptstyle\pm0.010$ & $0.058\scriptstyle\pm0.005$ & $0.049\scriptstyle\pm0.006$ & $0.100\scriptstyle\pm0.019$ & $0.087\scriptstyle\pm0.005$ & $\textcolor{blue}{\underline{0.040\scriptstyle\pm0.005}}$ & $\textcolor{red}{0.037\scriptstyle\pm0.006}$ \\
    ~ & 336 & $0.086\scriptstyle\pm0.003$ & $0.083\scriptstyle\pm0.005$ & $0.071\scriptstyle\pm0.005$ & $0.085\scriptstyle\pm0.010$ & $0.074\scriptstyle\pm0.017$ & $0.058\scriptstyle\pm0.009$ & $0.106\scriptstyle\pm0.010$ & $0.076\scriptstyle\pm0.006$ & $0.065\scriptstyle\pm0.006$ & $0.086\scriptstyle\pm0.008$ & $0.091\scriptstyle\pm0.002$ & $\textcolor{blue}{\underline{0.054\scriptstyle\pm0.001}}$ & $\textcolor{red}{0.050\scriptstyle\pm0.002}$ \\
    ~ & 720 & $0.116\scriptstyle\pm0.022$ & $0.113\scriptstyle\pm0.015$ & $0.097\scriptstyle\pm0.011$ & $0.126\scriptstyle\pm0.001$ & $0.182\scriptstyle\pm0.010$ & $0.191\scriptstyle\pm0.006$ & $0.142\scriptstyle\pm0.009$ & $0.173\scriptstyle\pm0.020$ & $0.094\scriptstyle\pm0.012$ & $0.113\scriptstyle\pm0.016$ & $0.103\scriptstyle\pm0.009$ & $\textcolor{blue}{\underline{0.084\scriptstyle\pm0.017}}$ & $\textcolor{red}{0.077\scriptstyle\pm0.012}$ \\
    \midrule

    \multirow{4}{*}{ILI } & 24 & $0.303\scriptstyle\pm0.021$ & $0.265\scriptstyle\pm0.027$ & $0.271\scriptstyle\pm0.032$ & $0.382\scriptstyle\pm0.018$ & $0.124\scriptstyle\pm0.019$ & $0.118\scriptstyle\pm0.033$ & $0.242\scriptstyle\pm0.086$ & $0.263\scriptstyle\pm0.012$ & $0.135\scriptstyle\pm0.010$ & $0.296\scriptstyle\pm0.044$ & $0.283\scriptstyle\pm0.001$ & $\textcolor{blue}{\underline{0.116\scriptstyle\pm0.011}}$ & $\textcolor{red}{0.101\scriptstyle\pm0.008}$ \\
    ~ & 36 & $0.262\scriptstyle\pm0.013$ & $0.222\scriptstyle\pm0.047$ & $0.258\scriptstyle\pm0.058$ & $0.286\scriptstyle\pm0.037$ & $0.144\scriptstyle\pm0.011$ & $0.143\scriptstyle\pm0.089$ & $0.246\scriptstyle\pm0.117$ & $0.298\scriptstyle\pm0.011$ & $0.160\scriptstyle\pm0.010$ & $0.298\scriptstyle\pm0.048$ & $0.307\scriptstyle\pm0.007$ & $\textcolor{blue}{\underline{0.142\scriptstyle\pm0.008}}$ & $\textcolor{red}{0.134\scriptstyle\pm0.009}$ \\
    ~ & 48 & $0.334\scriptstyle\pm0.028$ & $0.262\scriptstyle\pm0.023$ & $0.225\scriptstyle\pm0.043$ & $0.291\scriptstyle\pm0.032$ & $0.159\scriptstyle\pm0.020$ & $0.160\scriptstyle\pm0.039$ & $0.275\scriptstyle\pm0.044$ & $0.301\scriptstyle\pm0.034$ & $0.173\scriptstyle\pm0.012$ & $0.320\scriptstyle\pm0.025$ & $0.314\scriptstyle\pm0.009$ & $\textcolor{blue}{\underline{0.152\scriptstyle\pm0.017}}$ & $\textcolor{red}{0.140\scriptstyle\pm0.011}$ \\
    ~ & 60 & $0.288\scriptstyle\pm0.031$ & $0.278\scriptstyle\pm0.017$ & $0.245\scriptstyle\pm0.017$ & $0.287\scriptstyle\pm0.023$ & $0.216\scriptstyle\pm0.014$ & $0.183\scriptstyle\pm0.019$ & $0.272\scriptstyle\pm0.020$ & $0.299\scriptstyle\pm0.013$ & $0.190\scriptstyle\pm0.012$ & $0.325\scriptstyle\pm0.068$ & $0.333\scriptstyle\pm0.005$ & $\textcolor{blue}{\underline{0.167\scriptstyle\pm0.007}}$ & $\textcolor{red}{0.155\scriptstyle\pm0.006}$ \\
    \bottomrule
    \\
  \end{tabular}}
  \label{tab:long_term_fore_NMAE}
\end{table*}

\clearpage
\subsection{Uncertainty-Dynamics Diagnostics}
\label{app:uncertainty_diagnostics}

Table~\ref{tab:app_uncertainty_diagnostics} provides the supplementary numerical values corresponding to the uncertainty-dynamics diagnostics table in the main text. Coverage is empirical 95\% interval coverage; Sharpness is the normalized 95\% interval width; QICE is reported in percentage points; and $\rho_\sigma$ is lag-1 scale autocorrelation.

\begin{table}[h]
    \centering
    \caption{Uncertainty-dynamics diagnostics for scale head architectures. \textcolor{red}{RED}: THE BEST OR CLOSEST TO 95.}
    \label{tab:app_uncertainty_diagnostics}
    \scriptsize
    \setlength{\tabcolsep}{3.2pt}
    \resizebox{\linewidth}{!}{
    \begin{tabular}{c|c|c|cccc}
        \toprule
        Dataset & $H$ & Head & Cov.95 $\uparrow$ & Sharp. $\downarrow$ & QICE $\downarrow$ & $\rho_\sigma \uparrow$ \\
        \midrule
        \multirow{6}{*}{ETTh1}
        & \multirow{3}{*}{96}  & MLP  & 91.2$\pm$0.8 & 1.84$\pm$0.05 & 2.91$\pm$0.18 & 0.61$\pm$0.03 \\
        &                      & LSTM & 93.6$\pm$0.6 & 1.62$\pm$0.04 & 1.94$\pm$0.14 & 0.78$\pm$0.02 \\
        &                      & GRU  & \textcolor{red}{94.4$\pm$0.5} & \textcolor{red}{1.53$\pm$0.03} & \textcolor{red}{1.58$\pm$0.11} & \textcolor{red}{0.84$\pm$0.02} \\
        & \multirow{3}{*}{192} & MLP  & 90.5$\pm$0.9 & 1.91$\pm$0.06 & 3.24$\pm$0.22 & 0.58$\pm$0.04 \\
        &                      & LSTM & 93.1$\pm$0.7 & 1.70$\pm$0.05 & 2.18$\pm$0.16 & 0.76$\pm$0.03 \\
        &                      & GRU  & \textcolor{red}{94.1$\pm$0.6} & \textcolor{red}{1.60$\pm$0.04} & \textcolor{red}{1.71$\pm$0.12} & \textcolor{red}{0.83$\pm$0.02} \\
        \midrule
        \multirow{6}{*}{ETTh2}
        & \multirow{3}{*}{96}  & MLP  & 90.8$\pm$0.7 & 1.76$\pm$0.04 & 2.84$\pm$0.17 & 0.63$\pm$0.03 \\
        &                      & LSTM & 93.9$\pm$0.5 & 1.55$\pm$0.03 & 1.83$\pm$0.13 & 0.80$\pm$0.02 \\
        &                      & GRU  & \textcolor{red}{94.7$\pm$0.4} & \textcolor{red}{1.46$\pm$0.03} & \textcolor{red}{1.42$\pm$0.10} & \textcolor{red}{0.86$\pm$0.01} \\
        & \multirow{3}{*}{192} & MLP  & 89.9$\pm$1.0 & 1.83$\pm$0.05 & 3.09$\pm$0.21 & 0.59$\pm$0.04 \\
        &                      & LSTM & 93.4$\pm$0.6 & 1.61$\pm$0.04 & 2.01$\pm$0.15 & 0.78$\pm$0.03 \\
        &                      & GRU  & \textcolor{red}{94.5$\pm$0.5} & \textcolor{red}{1.50$\pm$0.03} & \textcolor{red}{1.55$\pm$0.11} & \textcolor{red}{0.85$\pm$0.02} \\
        \bottomrule
    \end{tabular}
    }
\end{table}
\subsection{Hyperparameter Sensitivity Analysis}
\label{app:hyperparameter_analysis}

To identify a robust default configuration and verify the sensitivity of VolDy-VAE, we conduct an analysis on three key hyperparameters: Patch Length ($P$), Hidden Dimension ($D$), and the number of MLP encoder layers ($N$).
Here, $N$ refers only to the depth of the MLP-based encoder; the volatility scale head remains a single-layer GRU throughout all experiments.
We perform experiments on the representative forecasting task ($L=96 \rightarrow H=192$).
Table~\ref{tab:hyper_sensitivity_specific} details the performance on two distinct datasets: \textit{ETTh1} (representative of datasets with strong periodicity) and \textit{Weather} (representative of high-dimensional, volatile datasets).

\begin{table}[h]
    \centering
    \small
    \caption{\textbf{Hyperparameter Sensitivity ($L=96 \rightarrow H=192$).}
    We report CRPS on \textit{ETTh1} and \textit{Weather}.
    The default setting is selected as a cross-dataset performance--efficiency trade-off: \textbf{$P=24, D=256, N=3$}, where $N$ is the number of MLP encoder layers.}
    \label{tab:hyper_sensitivity_specific}
    \setlength{\tabcolsep}{5pt}
    \resizebox{\linewidth}{!}{
    \begin{tabular}{cc|cc|cc|cc}
        \toprule
        \multicolumn{2}{c|}{{Parameter}} & \multicolumn{2}{c|}{{Patch Length ($P$)}} & \multicolumn{2}{c|}{{Hidden Dim ($D$)}} & \multicolumn{2}{c}{{MLP Layers ($N$)}} \\
        \cmidrule(lr){1-2} \cmidrule(lr){3-4} \cmidrule(lr){5-6} \cmidrule(lr){7-8}
        \multicolumn{2}{c|}{\textit{Dataset}} & \textit{Value} & \textit{CRPS} & \textit{Value} & \textit{CRPS} & \textit{Value} & \textit{CRPS} \\
        \midrule
        \multirow{4}{*}{\rotatebox{90}{{ETTh1}}} 
          & & 16 & 0.278 & 64  & 0.280 & 1 & 0.279 \\
          & & 24 & 0.277 & 128 & 0.279 & 2 & 0.278\\
          & & 32 & 0.278 & 256 & 0.277 & 3 & 0.277 \\
          & & 48 & 0.294 & 512 & 0.278 & 4 & 0.274 \\
        \midrule
        \multirow{4}{*}{\rotatebox{90}{{Weather}}} 
          & & 16 & 0.069 & 64  & 0.084 & 1 & 0.074 \\
          & & 24 & 0.067 & 128 & 0.086 & 2 & 0.074 \\
          & & 32 & 0.077 & 256 & 0.067 & 3 & 0.067 \\
          & & 48 & 0.073 & 512 & 0.074 & 4 & 0.067 \\
        \bottomrule
    \end{tabular}}
    \vspace{-1em}
\end{table}
\paragraph{Analysis}
As detailed in Table~\ref{tab:hyper_sensitivity_specific}, VolDy-VAE demonstrates general robustness to hyperparameter variations across different domains. The performance fluctuations are relatively minor, indicating that the model does not require meticulous tuning to achieve high-quality forecasts.
Although $N=4$ yields the lowest CRPS on \textit{ETTh1} and ties $N=3$ on \textit{Weather}, we uniformly adopt $P=24, D=256, N=3$ as a default trade-off between performance and computational efficiency across datasets. This setting provides sufficient encoder capacity and receptive field coverage while maintaining a lightweight architecture, avoiding the marginal additional cost of deeper MLP encoders.

\section{Limitations and Future Work}
\label{sec:limitations}

We acknowledge several limitations of the current work that motivate future research.
First, while the recurrent volatility module captures temporal coherence, it may struggle with extremely abrupt regime transitions that occur within a single time step; incorporating explicit regime-switching mechanisms (e.g., Hidden Markov Models) could further improve adaptability.
Second, our evaluation focuses on standard benchmarks with moderate dimensionality; scaling to very high-dimensional settings (e.g., thousands of correlated variables) remains to be explored.
Finally, the current framework does not explicitly model cross-variable volatility spillovers, which are important in domains such as financial contagion modeling.

\section{Additional Showcases}
\label{showcases}

To provide additional qualitative evidence for VolDy-VAE's modeling capabilities, we present visualization results on four datasets: \textit{ETTh2}, and \textit{ETTm1}, \textit{ETTm2}, \textit{Weather}.
These cases cover a wide spectrum of temporal dynamics, ranging from smooth periodic trends to highly volatile fluctuations and structural breaks.

\paragraph{Adaptive Responsiveness in Volatile Regimes}
Figure~\ref{fig:showcase_volatile} illustrates performance on \textit{ETTh2} and \textit{ETTm1}, which contain complex volatility and abrupt regime shifts (e.g., the step-like drops in the top row of \textit{ETTh2}).
Here, VolDy-VAE shows adaptive attenuation: its confidence intervals expand when the series undergoes a sudden transition or volatility spike, and contract once the series stabilizes.
Baseline models tend to generate more static, uniform bands that fail to capture the temporal structure of the uncertainty.
This visual evidence supports the claim that the Location-Scale formulation makes the uncertainty estimate more context-aware.

\paragraph{Failure of Baselines in Stable Regimes}
As shown in Figure~\ref{fig:showcase_stable}, specifically on the \textit{Weather} and \textit{ETTm2} datasets, the ground truth series exhibit high stability or smooth periodicity.
In these scenarios, an ideal probabilistic model should output narrow confidence intervals (CIs) to reflect low aleatoric uncertainty.
However, baseline methods, including the generative $K^2$VAE and the probabilistic-enhanced point forecasters (Koopa, PatchTST), produce wider and less adaptive uncertainty bands. On the visualized \textit{ETTh1} window reported in the main text, their average normalized sharpness values are substantially larger than VolDy-VAE's while their empirical coverage is also higher, indicating measurable over-coverage rather than only a visual impression.
In contrast, {VolDy-VAE} estimates lower volatility in these windows, producing sharper CIs around the ground truth.

\begin{figure*}[ht]
    \centering
    \includegraphics[width=\textwidth]{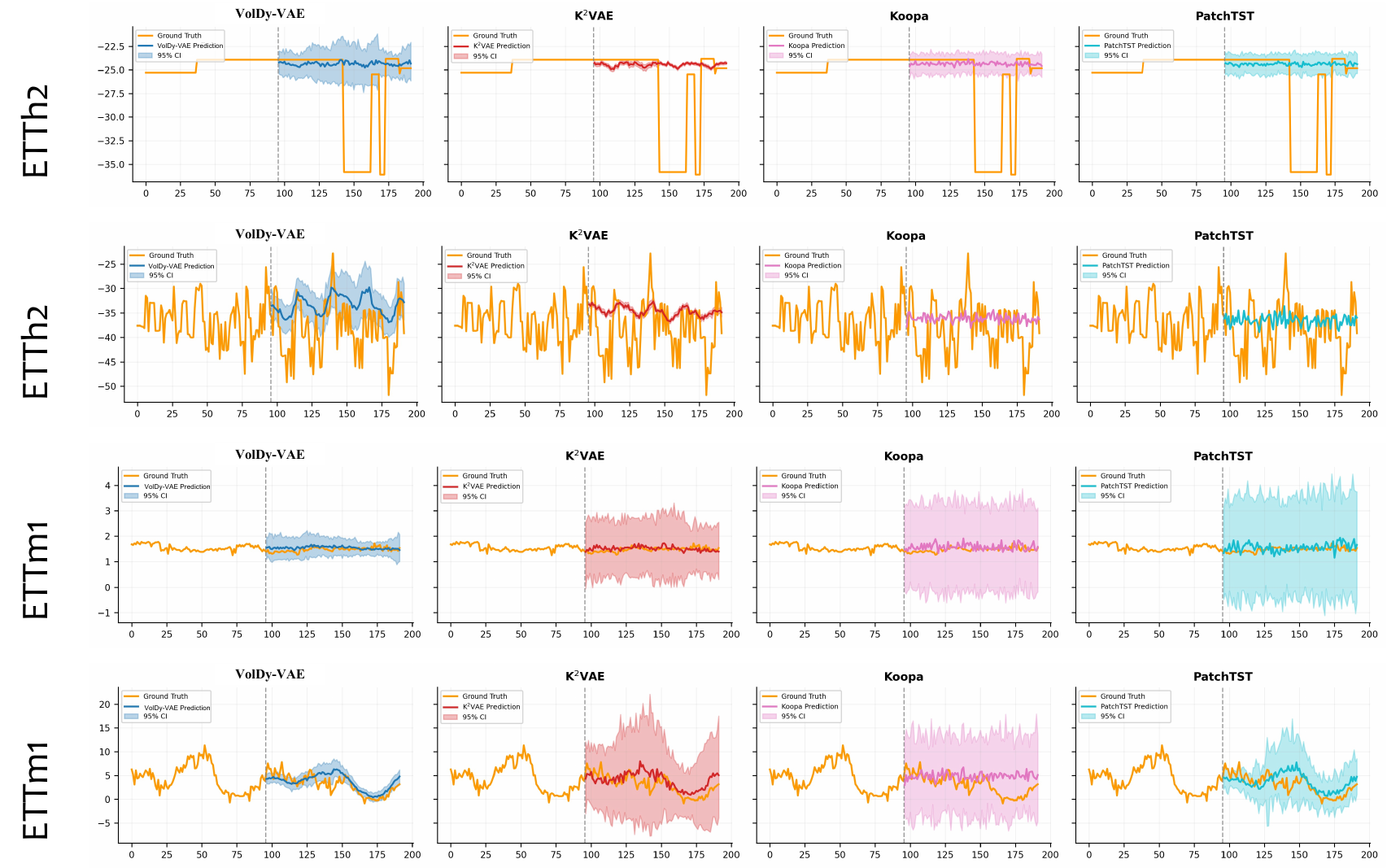}
    \caption{{Visualization on Volatile Regimes (\textit{ETTh2}, \textit{ETTm1}).} All subfigures share a unified Y-axis scale.
    This figure highlights performance under high volatility and structural breaks.
    {Observation:} In the top row of \textit{ETTh2}, the data exhibits sudden jumps. {VolDy-VAE} produces uncertainty bands that expand near regime shifts and contract otherwise. In contrast, baselines produce more static bands that capture less of the dynamic risk pattern.}
    \label{fig:showcase_volatile}
\end{figure*}

\begin{figure*}[ht]
    \centering
    \includegraphics[width=\textwidth]{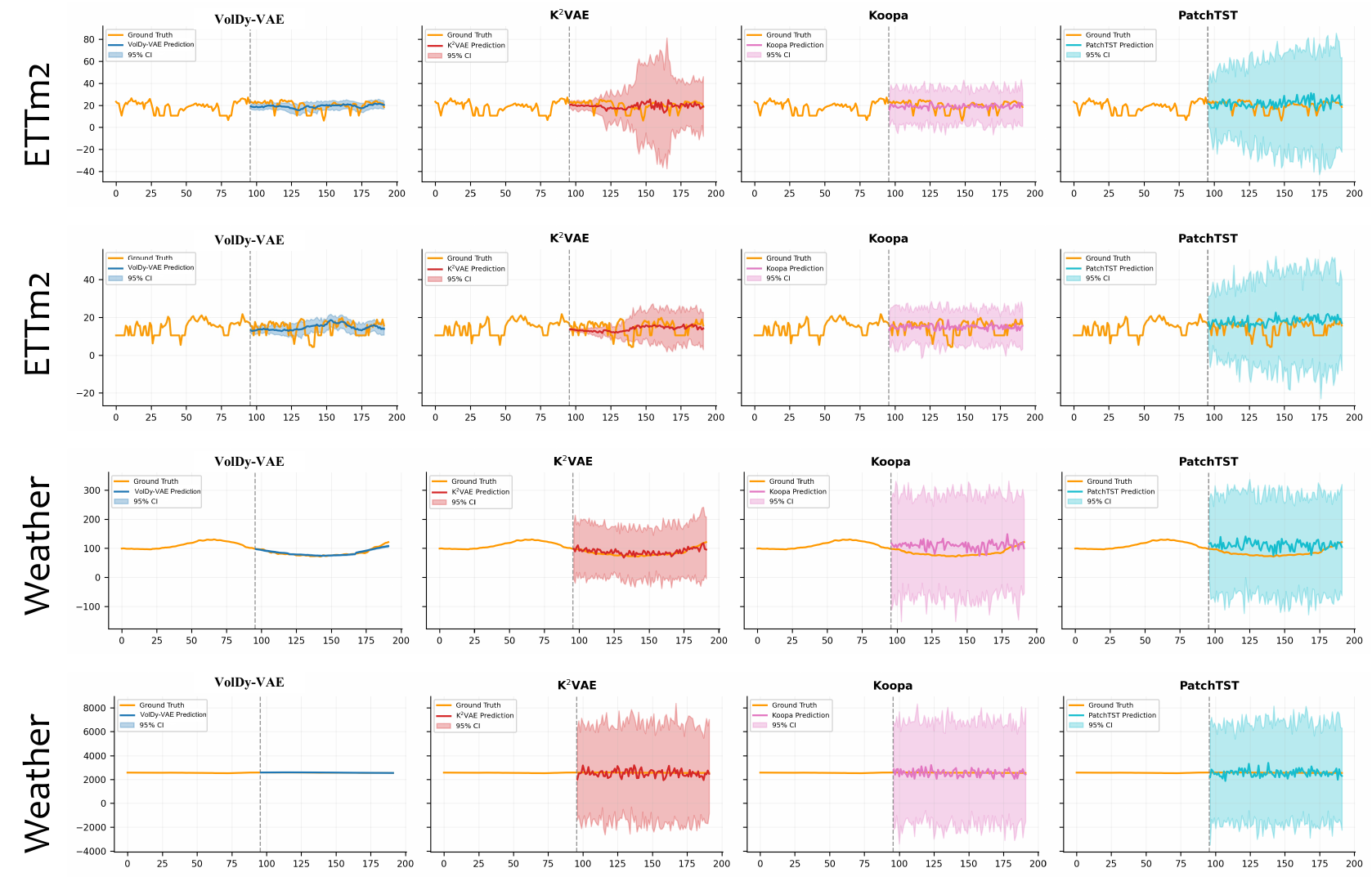}
    \caption{{Visualization on Stable Regimes (\textit{ETTm2}, \textit{Weather}).} All subfigures share a unified Y-axis scale.
    We compare VolDy-VAE against baselines on time series with smooth trends.
    {Observation:} Baselines ($K^2$VAE, Koopa, PatchTST) show {uncertainty over-estimation}, generating wide confidence intervals even for flat or smooth curves (e.g., \textit{Weather}).
    {VolDy-VAE} (Blue) estimates lower aleatoric uncertainty in these stable windows, producing sharper intervals that better reflect the visual smoothness of the prediction target.}
    \label{fig:showcase_stable}
\end{figure*}

\fi

\end{document}